\newcommand{\ba}        {\mathbf{a}}
\newcommand{\bA}        {\mathbf{A}}
\newcommand{\bC}        {\mathbf{C}}
\newcommand{\E}         {\mathbb{E}}
\newcommand{\cE}        {\mathcal{E}}
\newcommand{\cG}        {\mathcal{G}}
\newcommand{\bh}        {\mathbf{h}}
\newcommand{\cH}        {\mathcal{H}}
\newcommand{\bcH}       {\boldsymbol{\cH}}
\newcommand{\bI}        {\mathbf{I}}
\newcommand{\cN}        {\mathcal{N}}
\newcommand{\cO}        {\mathcal{O}}
\newcommand{\bq}        {\mathbf{q}}
\newcommand{\bQ}        {\mathbf{Q}}
\newcommand{\R}         {\mathbb{R}}
\newcommand{\cU}        {\mathcal{U}}
\newcommand{\bcU}       {\boldsymbol{\cU}}
\newcommand{\cV}        {\mathcal{V}}
\newcommand{\bW}        {\mathbf{W}}
\newcommand{\bx}        {\mathbf{x}}
\newcommand{\bX}        {\mathbf{X}}
\newcommand{\by}        {\mathbf{y}}
\newcommand{\bY}        {\mathbf{Y}}
\newcommand{\bphi}      {\boldsymbol{\phi}}
\newcommand{\bpsi}      {\boldsymbol{\psi}}
\newcommand{\bSigma}    {\boldsymbol{\Sigma}}
\newcommand{\bzero}     {\mathbf{0}}
\newcommand{\bone}      {\mathbf{1}}
\DeclareMathOperator*{\argmin}  {arg\,min}
\newcommand{\tT}        {\mathrm{T}}
\theoremstyle{plain}
\newtheorem{lemma}{Lemma}
\newtheorem{theorem}{Theorem}
\theoremstyle{definition}
\theoremstyle{remark}
\newcommand*{\rom}[1]{\expandafter\@slowromancap\romannumeral #1@}
\begin{document}

\title{A Linearly Convergent Algorithm for Distributed Principal Component Analysis}

\author{Arpita Gang and Waheed U. Bajwa
%
\thanks{Preliminary versions of some of the results reported in this paper were presented at the 2019 IEEE International Conference on Acoustics, Speech and Signal Processing (ICASSP), Brighton, United Kingdom, May 12-17 2019,~\cite{gang.raja.bajwa.2019}. Arpita Gang and WUB are with the Department of Electrical and Computer Engineering, Rutgers University--New Brunswick, NJ 08854 (Emails: {\tt \{arpita.gang,~waheed.bajwa\}@rutgers.edu}).}%
%
\thanks{This work was supported in part by the National Science Foundation under Awards CCF-1453073, CCF-1907658, and OAC-1940074, by the Army Research Office under Awards W911NF-17-1-0546 and W911NF-21-1-0301, and by the DARPA Lagrange Program under ONR/NIWC Contract N660011824020.}}

\maketitle
\begin{abstract}
Principal Component Analysis (PCA) is the workhorse tool for dimensionality reduction in this era of big data. While often overlooked, the purpose of PCA is not only to reduce data dimensionality, but also to yield features that are uncorrelated. Furthermore, the ever-increasing volume of data in the modern world often requires storage of data samples across multiple machines, which precludes the use of centralized PCA algorithms. This paper focuses on the dual objective of PCA, namely, dimensionality reduction and decorrelation of features, but in a distributed setting. This requires estimating the eigenvectors of the data covariance matrix, as opposed to only estimating the subspace spanned by the eigenvectors, when data is distributed across a network of machines. Although a few distributed solutions to the PCA problem have been proposed recently, convergence guarantees and/or communications overhead of these solutions remain a concern. With an eye towards communications efficiency, this paper introduces a feedforward neural network-based one time-scale distributed PCA algorithm termed Distributed Sanger's Algorithm (DSA) that estimates the eigenvectors of the data covariance matrix when data is distributed across an undirected and arbitrarily connected network of machines. Furthermore, the proposed algorithm is shown to converge linearly to a neighborhood of the true solution. Numerical results are also provided to demonstrate the efficacy of the proposed solution.
\end{abstract}

\begin{IEEEkeywords}
Dimensionality reduction, distributed feature learning, generalized Hebbian learning, principal component analysis
\end{IEEEkeywords}

\section{Introduction}\label{sec:intro}
The modern era of machine learning involves leveraging \textit{massive} amounts of \textit{high-dimensional} data, which can have large computational and storage costs. To combat the complexities arising because of the high dimensions of data, dimensionality reduction and feature learning techniques play a pivotal and necessary role in information processing. The most common and widely used technique for this task is Principal Component Analysis (PCA)~\cite{Hotelling.1933} which, in the simplest of terms, transforms data into uncorrelated features that aid conversion of data from a high-dimensional space to a low-dimensional space while retaining maximum information. Simultaneously, the enormity of the amount of available data makes it difficult to manage it at a single location. There are multiple and an increasing number of scenarios where data is distributed across different locations, either due to storage constraints or by its inherent nature like in the Internet-of-Things~\cite{NoklebyRajaEtAl.PI20}. This aspect of the modern-world data have led researchers to explore distributed algorithms, which can process information across different locations/machines~\cite{BajwaCevherEtAl.ISPM20}. These aforementioned issues have motivated us to study and develop algorithms for distributed PCA that are efficient in terms of computations and communications among multiple machines, and that can also be proven to converge at a fast rate.

When the data is available at a single location, one of the goals of PCA is to find a $K$-dimensional subspace, given by the column space of a matrix $\bX \in \R^{d\times K}$, such that the zero-mean data samples $\by \in \R^{d}~(d \gg K)$ retain maximum information when projected onto $\bX$. In other words, when reconstructed as $\bX\bX^{\tT}\by$ (subject to $\bX^{\tT}\bX = \bI$), the data samples should have minimum reconstruction error. It can be shown that this minimal error solution is given by the projection of data onto the subspace spanned by the eigenvectors of data covariance matrix. This implies that for dimensionality reduction, learning any basis of that subspace is sufficient. This is referred to as the \emph{subspace learning} problem. But while simple dimension reduction does not necessarily need uncorrelated features, most downstream machine learning tasks like classification, pattern matching, regression, etc., work more efficiently when the data features are uncorrelated. In the case of image coding, e.g., PCA is known as the Karhunen--Loeve transform~\cite{Loeve.1963}, wherein images are compressed by decorrelating neighbouring pixels. With this goal in mind, one needs to aim to find the specific directions that not only have maximum variance, but that also lead to uncorrelated features when data is projected onto those directions. These specific directions are given by the eigenvectors (also called the principal components) of the data covariance matrix, and not just any set of orthogonal basis vectors spanning the same space. Mathematically, along with minimum reconstruction error, the other goal of PCA is to ensure the condition that the off-diagonal entries of $\E[\bX^{\tT}\by\by^{\tT}\bX]$ are zero (i.e., the data gets decorrelated), while finding the eigenspace of the covariance matrix $\E[\by\by^{\tT}]$.

As explained above, the true and complete purpose of PCA is served when the search for the optimal solution ends with the specific set of eigenvectors of the data covariance matrix, and not just with the subspace it spans. It is known that getting the principal components from any other basis of the subspace would only require performing singular value decomposition (SVD) of the obtained subspace. Although true, the SVD operation has a high computational complexity, which makes it an expensive step for big data. The traditional solutions for PCA were developed to overcome the cost of SVD and hence reverting back to it defeats the whole purpose.

Thus, even though the problem of dimensionality reduction of data has many optimal solutions (corresponding to all the sets of basis vectors spanning the $K$-dimensional space), our goal is to find only the ones that give the eigenvectors as the basis. In terms of optimization geometry of the PCA problem in which one tries to minimize the mean-squared reconstruction error under an orthogonality constraint, it is a non-convex strict-saddle function. In a strict-saddle function, all the stationary points except the local minima are strict saddles wherein the Hessians have at least one negative eigenvalue that helps in escaping these saddle points~\cite{DixitBajwa.arxiv20,DixitBajwa.arxiv21}. Also, in the case of PCA the local minima are the same as the global minima. These geometric aspects make PCA, despite being non-convex, a ``nice and solvable" problem whose optimal solution can be reached efficiently. However, note that the set of global minima contains, along with the set of eigenvectors as basis, all other possible bases that are rotated with respect to the eigenvectors. And our goal is not to find just any of the global minima but to look into a very particular subset of it, where the basis is not rotated. 

A very popular tool that has been used to learn features of data, and hence compress it, is autoencoders. It was shown in~\cite{baldi.hornik.1989} that the globally optimum weights of an autoencoder for minimum reconstruction error are the principal components of the covariance matrix of the input data. In~\cite{oja}, Oja described how using the Hebbian rule for updating the weights of a linear neural network would extract the first principal component from the input data. Several other Hebbian-based rules like Rubner's model, APEX model~\cite{PCNN.1996}, Generalized Hebbian Algorithm (GHA)~\cite{sanger}, etc., were proposed to extend this idea of training a neural network for finding the first eigenvector to extract the first $K$ principal components (eigenvectors) of the input covariance matrix. Given the parallelization potential, a feedforward linear neural network-based solution for PCA seems to be very attractive.

The other aspect of modern day data is, as mentioned earlier, its massive size. Collating the huge amount of raw data is usually prohibitive due to communications overhead and/or privacy concerns. These reasons have encouraged researchers over the last couple of decades to develop algorithms that can solve various problems for non-collocated data. The algorithms developed to deal with such scenarios can be broadly classified into two categories: (1) the setups where a central entity/server is required to co-ordinate among the various data centers to yield the final result, and (2) the setup where the data is scattered over an arbitrary network of interconnected data centers with no availability of a central co-ordinating node. The authors in \cite{Yang.Gang.Bajwa.2020,NoklebyRajaEtAl.PI20} talk about these different setups and the algorithms developed for each of them in more detail. The second scenario is more generic and usually algorithms developed for such setups can be easily modified to be applied to the first scenario. The terms \textit{distributed} and \textit{decentralized} are used interchangeably for both the setups in the literature. In this paper, we focus on the latter scenario of arbitrarily connected networks and henceforth call it \textit{distributed} setup. Hence, here our goal is to solve the PCA problem in the distributed manner when data is scattered over a network of interconnected nodes such that all the nodes in the network eventually agree with each other and converge to the true principal components of the distributed data.

\subsection{Relation to Prior Work}\label{subsec:ExistingWork}
PCA was developed to find simpler models of smaller dimensions that can approximately fit some data. Some seminal work was done by Pearson~\cite{Pearson.1901}, who aimed at fitting a line to a set of points, and by Hotelling~\cite{Hotelling.1933}, where a method for the classical PCA problem of decorrelating the features of a given set of data points (observations) by finding the principal components was proposed. Later, some fast iterative methods like the power method, Lanczos algorithm, and orthogonal iterations~\cite{Golub} were proposed, which were proved to converge to the eigenvectors at a linear rate in the case of symmetric matrices. Many other iterative methods have been proposed over the last few decades that are based on the well-known Hebbian learning rule~\cite{Hebb.1949} like Oja's method~\cite{oja}, generalized Hebbian algorithm~\cite{sanger}, APEX~\cite{PCNN.1996}, etc. The analysis for Oja's algorithm has been provided in~\cite{yi.tan2005}, which shows that in the deterministic setting the convergence to the first eigenvector is guaranteed at a linear rate for some conditions on the step size and initial estimates. The work in~\cite{lv.yi.tan.2007} extended the analysis to the generalized Hebbian case for convergence to the first $K$ principal eigenvectors for a specific choice of step size. 

While ways to solve PCA in the centralized case when data is available at a single location have been around for nearly a century, distributed solutions are very recent. Within our distributed setup where we assume a network of arbitrarily connected nodes with no central server, the data distribution can be broadly classified into two types: (1) distribution by features, and (2) distribution by samples. The PCA algorithms for these two different kinds of distribution are significantly different. While both are completely distributed, the first kind~\cite{mcsherry,scaglione.krim.2008,d-oja,NoklebyBajwa.ConfGlobalSIP13} involves estimating only one (or a subset) of feature(s) at each node. In this paper, we focus on finding the eigenvectors when the distribution is by samples, which requires estimation of the whole set of eigenvectors at each node of the network. For this type of distribution, power method was adapted for the distributed setup as a subroutine in~\cite{cksvd.allerton.2013,depm,cksvd} to extract the first principal component of the global covariance matrix. Such methods make use of an explicit consensus loop~\cite{consensus} after each power iteration to ensure that the nodes (approximately) agree with each other. While a novel approach that reaches the required solution at the nodes accurately (albeit with a small error due to the consensus iterations), the two time-scale aspect makes it a relatively slow algorithm in terms of communications efficiency. Furthermore, finding multiple principal components with these approaches would require a sequential approach where subsequent components are determined by using a covariance matrix residue that is left after projection on estimates of the higher-order components. In contrast, the work in~\cite{raja.bajwa.2020} focuses on finding the top eigenvector in the distributed setup for the streaming data case. A detailed review of various distributed PCA algorithms that exist for different setups is provided in~\cite{wu2018review}. 

Next, note that some distributed optimization-based algorithms for non-convex problems are being studied only since recently and those dealing with constrained problems are even fewer. In~\cite{proxpda}, it is shown that an \textit{unconstrained} non-convex problem converges to a stationary solution at a sublinear rate. The methods proposed in~\cite{bianchi.jakubowicz.2013, wai.scaglione.lafond.2016} deal with non-convex objective functions in a distributed setup when the constraint set is convex and~\cite{next} works with convex approximations of non-convex problems. Thus, none of these methods are directly applicable to the distributed PCA problem in our setup. 

Finally, we proposed an efficient distributed PCA solution in~\cite{gang.raja.bajwa.2019} for a distributed network when the data is split sample-wise among the interconnected nodes. In this paper, we extend the preliminary work in~\cite{gang.raja.bajwa.2019} and provide a detailed mathematical analysis of the proposed algorithm along with exact convergence rates and extensive numerical experiments.

\subsection{Our Contributions}\label{subsec:contribution}
The main contributions of this paper are (1) a novel algorithm for distributed PCA, (2) theoretical guarantees for the proposed distributed algorithm with a linear convergence rate to a small neighborhood of the true PCA solution, and (3) experimental results to further demonstrate the efficacy of the proposed algorithm.

Our focus in this paper is to solve the distributed PCA problem so as to find a solution that not only enables dimensionality reduction, but that also provides uncorrelated features of data distributed over a network. That is, our goal is to estimate the true eigenvectors, not just any subspace spanned by them, of the covariance matrix of the data that is distributed across an arbitrarily connected network. Also, we focus on providing a solution that is efficient in terms of communications between the interconnected nodes of an arbitary network. To that end, we propose a distributed algorithm that is based on the generalized Hebbian algorithm (GHA) proposed by Sanger~\cite{sanger}, wherein the nodes perform local computations along with information exchange with their directly connected neighbors, similar to the idea followed in the distributed gradient descent (DGD) approach in~\cite{dgd}. The local computations do not involve the calculation of any gradient, but we instead use a ``psuedo gradient," which we henceforth call \textit{Sanger's direction}. In our proposed solution, termed the \textit{Distributed Sanger's Algorithm (DSA)}, we have also done away with the need of explicit consensus iterations for making the nodes agree with each other, thereby making it a one time-scale solution that is more communications efficient. Theoretical guarantees are also provided for our proposed distributed PCA algorithm when using a constant step size. The analysis shows that, when using a constant step size $\alpha$, the DSA solution reaches within a $\cO(\alpha)$-neighborhood of the optimal solution at a linear rate when the error metric is the angles between the estimated vectors and the true eigenvectors\footnote{Our results can also be extrapolated to guarantee exact convergence with decaying step size, albeit at a slower than linear rate.}. We also provide experimental results and comparisons with centralized orthogonal iteration~\cite{Golub}, centralized GHA~\cite{sanger}, a sequential distributed power method-based approach and distributed projected gradient descent. The results support our claims and analysis.

To the best of our knowledge, this is the first solution for distributed PCA that uses a Hebbian update, achieves network agreement without the use of explicit consensus iterations, and still provably reaches the globally optimum solution (within an error margin) at all nodes at a linear rate.

\subsection{Notation and Organization}
The following notation is used in this paper. Scalars and vectors are denoted by lower-case and lower-case bold letters, respectively, while matrices are denoted by upper-case bold letters. The operator $|\cdot|$ denotes the absolute value of a scalar quantity. The superscript in $\ba^{(t)}$ denotes time (or iteration) index, while $a^t$ denotes the exponentiation operation. The superscript $(\cdot)^\tT$ denotes the transpose operation, $\|\cdot\|_F$ denotes the Frobenius norm of matrices, while both $\|\cdot\|$ and $\|\cdot\|_2$ denote the $\ell_2$-norm of vectors. Given a matrix $\bA$, both $a_{ij}$ and $(\bA)_{ij}$ denote its entry at the $i^{th}$ row and $j^{th}$ column, while $\ba_j$ denotes its $j^{th}$ column. 

The rest of the paper is organized as follows. In Section~\ref{sec:problem}, we describe and mathematically formulate the distributed PCA problem, while Section~\ref{sec:algo} describes the proposed distributed algorithm, which is based on the generalized Hebbian algorithm. In Section~\ref{sec:centralized_sanger}, we derive a general result for a modified generalized Hebbian algorithm that aids in the convergence analysis of the proposed distributed algorithm, while convergence guarantees for the proposed algorithm are provided in Section~\ref{sec:analysis_dsa_general}. We provide numerical results in Section~\ref{sec:experiments} to show efficacy of the proposed method and provide concluding remarks in Section~\ref{sec:conc}. The statements and proofs of some auxiliary lemmas, which are needed for the proofs of the main lemmas that are used within the convergence analysis in Section~\ref{sec:centralized_sanger} and Section~\ref{sec:analysis_dsa_general}, are given in Appendix~\ref{app:aux_lemma}, while Appendices~\ref{app:lemma3}--\ref{app:lemma10} contain the formal statements and proofs of the main lemmas.

\section{Problem Formulation}\label{sec:problem}
Principal Component Analysis (PCA) aims at finding the basis of a low-dimensional space that can decorrelate the features of data points and also retain maximum information. More formally, for a random vector $\by \in \R^d$ with $\E \begin{bmatrix}\by \end{bmatrix}= \bzero$, PCA involves finding the top-$K$ eigenvectors of the covariance matrix $\bSigma:=\E \begin{bmatrix}\by \by^{\tT}\end{bmatrix}$. The zero mean assumption is taken here without loss of generality as the mean can be subtracted in case data is not centered. Mathematically, PCA can be formulated as
\begin{align}\label{eq:pca}
    \bX &= \underset{\bX \in \R^{d\times K}} \argmin \quad \E\begin{bmatrix}\|\by-\bX\bX^{\tT}\by\|_2^2\end{bmatrix} \qquad \text{such that}\qquad \forall l\neq q, \ \Big(\E \begin{bmatrix} \bX^{\tT}\by\by^{\tT}\bX \end{bmatrix}\Big)_{lq} = 0.
\end{align}
The constraint $\Big(\E \begin{bmatrix} \bX^{\tT}\by\by^{\tT}\bX \end{bmatrix}\Big)_{lq} = 0, \forall l\neq q$, ensures that $\bX$ decorrelates the features of $\by$. Now, $\E \begin{bmatrix} \bX^{\tT}\by\by^{\tT}\bX \end{bmatrix} = \bX^{\tT}\E \begin{bmatrix} \by\by^{\tT} \end{bmatrix}\bX = \bX^{\tT}\bSigma\bX$ and it is straightforward to see that this quantity is diagonal only if $\bX$ contains the eigenvectors of $\bSigma$. This explains why the search for a solution of PCA ends with the eigenvectors and not the subspace spanned by them. In practice, we do not have access to $\bSigma$ and so a covariance matrix estimated from the samples of $\by$ is used instead. Specifically, for a dataset with $N$ samples $\{\by_l\}_{l=1}^N$, or equivalently, for a data matrix $\bY:=\begin{bmatrix} \by_1, \by_2,\dots, \by_N \end{bmatrix}$, the sample covariance matrix can be written as $\bC = \frac{1}{N}\bY \bY^{\tT}$ such that $\bSigma:=\E \begin{bmatrix}\bC\end{bmatrix}$. The true solution for PCA is then obtained by finding the eigenvectors of the covariance matrix $\bC$, which are also the left singular vectors of the data matrix $\bY$. The empirical form of \eqref{eq:pca} is thus
\begin{align}\label{eq:pca1}
    \bX &= \underset{\bX \in \R^{d\times K}} \argmin \quad f(\bX) = \underset{\bX \in \R^{d\times K}} \argmin \quad \|\bY-\bX\bX^{\tT}\bY\|_F^2 \qquad \text{such that} \qquad  \forall l\neq q, \ \Big(\bX^{\tT}\bY\bY^{\tT}\bX\Big)_{lq} = 0.
\end{align}
In the literature, however, PCA is usually posed with a `relaxed' orthogonality constraint of $\bX^T\bX = \bI$ instead of $\Big(\bX^{\tT}\bY\bY^{\tT}\bX\Big)_{lq} = 0, \forall l\neq q,$ as follows:
\begin{equation}\label{eq:pca2}
\bX = \underset{\bX \in \R^{d\times K}, \bX^{\tT}\bX = \bI} \argmin f(\bX) = \underset{\bX \in \R^{d\times K}, \bX^{\tT}\bX = \bI} \argmin \|\bY-\bX\bX^{\tT}\bY\|_F^2.
\end{equation}
The optimization formulation in \eqref{eq:pca2} with this constraint will only lead to a subspace spanned by the eigenvectors of $\bC$ as the solution, thus actually making it a Principal Subspace Analysis (PSA) formulation. In other words, although the formulation \eqref{eq:pca2} gives a solution on the Stiefel manifold, the actual PCA formulation \eqref{eq:pca1} requires the solution to be within a very specific subset of that manifold that corresponds to the eigenvectors of $\bC$. The accuracy of the solutions given by the PCA and PSA formulations will be the same when measured in terms of the principal angles between the subspace estimates and the true subspace spanned by the eigenvectors of the covariance matrix. Specifically, if $\bX = \begin{bmatrix} \bx_1,\cdots, \bx_K \end{bmatrix}$ is an estimate of the basis of the space spanned by the eigenvectors $\bQ = \begin{bmatrix} \bq_1,\cdots,\bq_K\end{bmatrix}$, then the principal angles between $\bQ$ and $\bX$ given by either \eqref{eq:pca1} or \eqref{eq:pca2} will be the same. But a more suitable measure of accuracy for any PCA solution should be the angles between $\bx_i$ and $\bq_i$ for all $i=1, \cdots K$, which motivates us to judge the efficacy of any solution with respect to this metric instead of the principal subspace angles.

In the distributed setup considered in this paper, we consider a network of $M$ nodes such that the undirected graph, $\cG:=(\cV, \cE)$, describing the network is connected. Here $\cV=\{1, 2, \dots, M\}$ is the set of nodes and $\cE$ is the set of edges, i.e., $(i,j) \in \cE$ if there is a direct path between $i$ and $j$. The set of neighbors for any node $i$ is denoted by $\cN_i$. Under the setup of samples being distributed over the $M$ nodes, let us assume that the $i^{th}$ node has a data matrix $\bY_i$ containing $N_i$ samples such that $N = \sum_{i=1}^{M}N_i$. Thus each node has access to only a local covariance matrix $\bC_i = \frac{1}{N_i}\bY_i\bY_i^{\tT}$ instead of the global covariance matrix but one can see that $N\bC=\sum_{i=1}^{M}N_i\bC_i$. In this setting, a straightforward approach might be that each node finds its own solution independent of the data at all the other nodes. While this might seem viable, this approach will have major drawbacks. Recall that the sample covariance $\bC$ approximates the population covariance $\bSigma$ at a rate of $\cO(f(N^{-1}))$, where $f$ is some function (depending on the distribution) of the number of samples $N$~\cite{samplecovariance}. 
Since the local data has smaller number of samples than the global data, working with the local covariance matrix $\bC_i$ alone instead of somehow using the whole data will lead to a larger error in estimation of the eigenvectors. Also, since uniform sampling from the underlying data distribution is not guaranteed in distributed setups, the samples at a node may end up being from a narrow part of the entire distribution, thus being more biased away from the true distribution. This invites the need for the nodes to collaborate amongst themselves in a way that all the data is utilized to find estimates of the eigenvectors at each node while ensuring that all the nodes agree with each other. Thus, for a distributed setting, the PCA problem in \eqref{eq:pca} can be rewritten here as
\begin{align}\label{eq:dpca}
\bX &= \underset{\bX \in \R^{d\times K}} \argmin \quad \sum_{i=1}^{M} f_i(\bX) = \underset{\bX \in \R^{d\times K}} \argmin \quad \sum_{i=1}^{M} \|\bY_i-\bX\bX^{\tT}\bY_i\|_F^2 \quad \text{such that} \quad \forall l\neq q, \ \Big(\bX^{\tT}\big(\sum_{i=1}^{M}\bY_i\bY_i^{\tT}\big)\bX\Big)_{lq} = 0.
\end{align}

It is easy to see that $\sum_{i=1}^{M}f_i(\bX) = f(\bX)$. Also, in a distributed setup, each node $i$ maintains its own copy $\bX_i$ of the variable $\bX$ due to the difference in local information (local data) they carry. Thus, all nodes need to agree with each other to ensure the entire network reaches the same true solution. Hence, the true distributed PCA objective is written as
\begin{align}\label{eq:dpca1}
 &\underset{\bX_i \in \R^{d\times K}} \argmin \sum_{i= 1}^{M}\|\bY_i-\bX_i\bX_i^{\tT}\bY_i\|_F^2 \quad \text{such that}\quad \forall j \in \cN_i, \ \bX_i = \bX_j \quad \text{and} \quad \forall l\neq q, \ \Big(\bX_i^{\tT}\big(\sum_{i=1}^{M}\bY_i\bY_i^{\tT}\big)\bX_i\Big)_{lq}= 0.
\end{align}
Note that \eqref{eq:pca}--\eqref{eq:dpca1} are non-convex optimization problems due to the non-convexity of the constraint set. One possible solution to the PCA problem is to instead solve a convex relaxation of the original non-convex function \cite{arora2013stochastic,warmuth2007randomized}. The issue with these solutions is that they require $O(d^2)$ memory and computation, which can be prohibitive in high-dimensional settings. In addition, due to $O(d^2)$ iterate size these solutions are not ideal for distributed settings. Also, these formulations, without any further constraints, will not necessarily give a basis that is the set of dominant eigenvectors. Instead, they might end up giving a rotated basis as explained earlier, thereby not completing the task of decorrelating features. Hence, in this paper we use an algebraic method based on GHA for neural network training, which has $O(dK)$ memory and computation requirements, to solve the distributed PCA problem. Our goal is to converge to the true eigenvectors of the global covariance matrix $\bC$ at every node of the network. As noted earlier in Section~\ref{subsec:ExistingWork}, distributed variants of the power method exist in the literature~\cite{cksvd.allerton.2013,cksvd, depm} that can find the dominant eigenvector but these methods employ two time-scale approaches that involve several consensus averaging rounds for each iteration of the power method. Such two time-scale approaches can be expensive in terms of communications cost. In this paper, we propose a one time-scale method that finds the top $K$ eigenvectors of the global sample covariance matrix $\bC$ at each node through local computations and information exchange with neighbors. The proposed method also converges linearly up to a neighborhood of the true solution when the error metric considered is the angle between the estimates and the true eigenvectors.


\section{The Proposed Algorithm}\label{sec:algo}
In~\cite{sanger}, Sanger proposed a generalized Hebbian algorithm (GHA) to train a neural network and find the eigenvectors of the input autocorrelation matrix (same as the covariance matrix for zero-mean input). The outputs of such a network, when the weights are given by the eigenvectors, are the uncorrelated features of the input data that allow data reconstruction with minimal error, hence serving the true purpose of PCA. The algorithm was originally developed to tackle the centralized PCA problem in the case of streaming data, where a new data sample $\by_t, t = 1,2, \ldots$, arrives at each time instance. 

In this paper we consider a batch setting, but the alignment of GHA with our basic goal of finding the eigenvectors motivates us to leverage it for our distributed setup. The rationale behind the idea of extrapolating the streaming case to a distributed batch setting is simple: since $\E[\by_t \by_t^{\tT}] =\E[\bY_i \bY_i^{\tT}]=\bSigma$, the sample-wise distributed data setting can be seen as a \emph{mini-batch} variant of the streaming data setting. In the context of neural network training, our approach can be viewed as training a network at each node with a mini-batch of samples in a way that all nodes end up with the same trained network whose weights are given by the eigenvectors of the autocorrelation matrix of the entire batch of samples.

The iterate for the GHA as given in~\cite{sanger} has the following update for the matrix of eigenvectors (i.e., the neural network weight matrix) $\bX$ when the $t^{th}$ sample $\by_t$ arrives at the input of the neural network:
\begin{equation} \label{eq:centralized_gha}
\bX^{(t+1)} = \bX^{(t)} + \alpha_t \Big[\by_t\by_t^{\tT}\bX^{(t)} - \bX^{(t)}\boldsymbol{\mathcal{U}}\Big((\bX^{(t)})^{\tT}\by_t\by_t^{\tT}\bX^{(t)}\Big)\Big],
\end{equation}
where $\boldsymbol{\mathcal{U}} :\R^{K\times K} \rightarrow \R^{K\times K}$ is an operator that sets all the elements below the diagonal to zero and $\alpha_t$ is the step size. For $K=1$, and defining $\bSigma_t = \by_t\by_t^{\tT}$, it was shown in~\cite{oja} that the term $(\bX^{(t)})^{\tT}\by_t\by_t^{\tT}\bX^{(t)} = (\bX^{(t)})^{\tT}\bSigma_t\bX^{(t)}$ is the consequence of a power series approximation of Oja's rule in lieu of the explicit normalization used in the case of the power method. 
In the case of $K > 1$, $\boldsymbol{\mathcal{U}}\Big((\bX^{(t)})^{\tT}\by_t\by_t^{\tT}\bX^{(t)}\Big)=\boldsymbol{\mathcal{U}}\Big((\bX^{(t)})^{\tT}\bSigma_t\bX^{(t)}\Big)$ helps combine Oja's algorithm with the Gram--Schmidt orthogonalization step as follows:
\begin{align}\nonumber
    \bX^{(t)}\boldsymbol{\mathcal{U}}\Big((\bX^{(t)})^{\tT}\bSigma_t\bX^{(t)}\Big) &= \bX^{(t)}\boldsymbol{\mathcal{U}}\Bigg(\begin{bmatrix}(\bx_1^{(t)})^{\tT}\\\vdots\\(\bx_K^{(t)})^{\tT}\end{bmatrix}\bSigma_t\begin{bmatrix}\bx_1^{(t)} & \cdots & \bx_K^{(t)}\end{bmatrix}\Bigg)\\\nonumber
    &= \bX^{(t)}\boldsymbol{\mathcal{U}}\Bigg(\begin{bmatrix}(\bx_1^{(t)})^{\tT}\bSigma_t\bx_1^{(t)}&(\bx_1^{(t)})^{\tT}\bSigma_t\bx_2^{(t)}&\ldots&(\bx_1^{(t)})^{\tT}\bSigma_t\bx_K^{(t)}\\
    (\bx_2^{(t)})^{\tT}\bSigma_t\bx_1^{(t)}&(\bx_2^{(t)})^{\tT}\bSigma_t\bx_2^{(t)}&\ldots&(\bx_2^{(t)})^{\tT}\bSigma_t\bx_K^{(t)}\\
    \vdots &\vdots&\ddots&\vdots\\
    (\bx_K^{(t)})^{\tT}\bSigma_t\bx_1^{(t)}&(\bx_K^{(t)})^{\tT}\bSigma_t\bx_2^{(t)}&\ldots&(\bx_K^{(t)})^{\tT}\bSigma_t\bx_K^{(t)}
    \end{bmatrix}\Bigg)\\\nonumber
    &= \bX^{(t)}\Bigg(\begin{bmatrix}(\bx_1^{(t)})^{\tT}\bSigma_t\bx_1^{(t)}&(\bx_1^{(t)})^{\tT}\bSigma_t\bx_2^{(t)}&\ldots&(\bx_1^{(t)})^{\tT}\bSigma_t\bx_K^{(t)}\\
    0&(\bx_2^{(t)})^{\tT}\bSigma_t\bx_2^{(t)}&\ldots&(\bx_2^{(t)})^{\tT}\bSigma_t\bx_K^{(t)}\\
    \vdots &\vdots&\ddots&\vdots\\
    0&0&\ldots&(\bx_K^{(t)})^{\tT}\bSigma_t\bx_K^{(t)}
    \end{bmatrix}\Bigg)\\\label{eq:gs_expansion}
    &=\begin{bmatrix}(\bx_1^{(t)})^{\tT}\bSigma_t\bx_1^{(t)}\bx_1^{(t)}&\sum_{p=1}^{2}(\bx_p^{(t)})^{\tT}\bSigma_t\bx_2^{(t)}\bx_p^{(t)}&\ldots&\sum_{p=1}^{K}(\bx_p^{(t)})^{\tT}\bSigma_t\bx_K^{(t)}\bx_p^{(t)}\end{bmatrix}.
\end{align}
Thus, for any $k=1,\ldots,K$, the term involving $\bcU(\cdot)$ in~\eqref{eq:centralized_gha} includes an implicit normalization term $(\bx_k^{(t)})^{\tT}\bSigma_t\bx_k^{(t)}\bx_k^{(t)}$ as well an orthogonalization term $\sum_{p=1}^{k-1}(\bx_p^{(t)})^{\tT}\bSigma_t\bx_k^{(t)}\bx_p^{(t)}$, which---analogous to the Gram--Schmidt orthogonalization procedure---forces the estimate $\bx_k^{(t)}$ to be orthogonal to all the estimates $\bx_p^{(t)}, p=1,\ldots,k-1$. Another important thing to note about the GHA algorithm is that, in order to estimate the dominant $K$ eigenvectors, it only requires the corresponding top $K$ eigenvalues to be distinct (and nonzero). In other words, it does not require the covariance matrix to be non-singular.

In the deterministic setting, where we have the full-batch instead of new samples every instance, this iterate changes to
\begin{align}\label{eq:centralized_gha1}
   \bX^{(t+1)} &= \bX^{(t)} + \alpha_t \Big[\bC\bX^{(t)} - \bX^{(t)}\boldsymbol{\mathcal{U}}\Big((\bX^{(t)})^{\tT}\bC\bX^{(t)}\Big)\Big] = \bX^{(t)} + \alpha_t\bcH(\bC, \bX^{(t)}).
\end{align} 
Here, we term $\bcH:\R^{d\times d}\times \R^{d\times K}\rightarrow \R^{d\times K}, \bcH(\bC, \bX^t):=\Big(\bC\bX^{t} - \bX^{t}\boldsymbol{\mathcal{U}}\Big((\bX^{t})^{\tT}\bC\bX^{t}\Big)\Big)$ as the Sanger direction. An iterate similar to~\eqref{eq:centralized_gha1} has been proven to have global convergence in~\cite{lv.yi.tan.2007} for some very specific choice of the step sizes that are dependent on the iterate itself. Its straightforward extension to the distributed case is not possible as that would lead to different step sizes at different nodes of the network, making it difficult to talk about its convergence guarantees. Hence, to adapt this iterative method to our distributed setup, we use the typical combine and update strategy used quite richly in the literature for distributed algorithms such as~\cite{dgd,extra,cattivelli2010diffusion,kar2013consensus}. The main contributions of such works lie in showing that the resulting distributed algorithms achieve consensus (i.e., all nodes will have the same iterate values eventually) and, in addition, the consensus value is the same as the centralized solution. The convergence guarantees for these methods are mainly restricted to convex and strongly convex problems though. Our distributed version of \eqref{eq:centralized_gha1} for PCA, which is non-convex, is based on similar principles of combine and update. 

Specifically, the node $i$ at iteration $t$ carries a local copy $\bX_i^{(t)}$ of the estimate of the eigenvectors of the global covariance matrix $\bC$. In the combine step, each node $i$ exchanges the iterate values with its immediate neighbors $j \in \cN_i$, where $\cN_i$ denotes the neighborhood of node $i$, and then takes a weighted sum of the iterates received along with its local iterate. Then this sum is updated independently at all nodes using their respective local information. Since node $i$ in the network only has access to its local sample covariance $\bC_i$, the update is in the form of a local Sanger's direction given as
\begin{equation}
    \bcH_i(\bC_i, \bX_i^{(t)}) = \bC_i\bX_i^{(t)} - \bX_i^{(t)}\boldsymbol{\mathcal{U}}\Big((\bX_i^{(t)})^{\tT}\bC_i\bX_i^{(t)}\Big).
\end{equation}

\begin{algorithm}[ht]
	\textbf{Input:} $\bY_1,\bY_2, \dots \bY_M, [w_{ij}], \alpha, K$\\
	\textbf{Initialize:} $\forall i, \bX_i^{(0)} \gets \bX_{\text{init}}: \bX_{\text{init}} \in \R^{d\times K}, \bX_{\text{init}}^{\tT} \bX_{\text{init}} = \bI$
	\begin{algorithmic}
		\For{$t=1,2,\dots$}
		    \State Communicate $\bX_i^{(t-1)}$ from each node $i$ to its neighbors
		    \State Estimate of eigenvectors at node $i$: $\mbox{\quad} \bX_i^{(t)} \gets \sum_{j\in \cN_i \cup \{i\}}w_{ij}\bX_j^{(t-1)} + \alpha \bcH_i(\bX_i^{(t-1)})$
		\EndFor
	\end{algorithmic}
	{\bf Return:} $\bX_i^{(t)}, i = 1,2, \dots, M$
	\caption{Distributed Sanger's Algorithm (DSA)}
	\label{algo:dsa}
\end{algorithm}

The details of the proposed distributed PCA algorithm, called the Distributed Sanger's Algorithm (DSA), are given in Algorithm~\ref{algo:dsa}. The weight matrix $\bW = [w_{ij}]$ in this algorithm is a doubly stochastic matrix conforming to the network topology~\cite{consensus} in the sense that for $i\neq j$, $w_{ij} \neq 0$ when $(i,j) \in \cE$ and $w_{ij} = 0$ otherwise. Also, $\forall i, w_{ii}\neq 0$, i.e., there is a self loop at each node. Note that connectivity of the network, as discussed in Section~\ref{sec:problem}, is a necessary condition for convergence of DSA. The connectivity assumption, in turn, ensures the Markov chain underlying the graph $\cG$ is aperiodic and irreducible, which implies that the second-largest (in magnitude) eigenvalue of $\bW$, $\beta = \max\{|\lambda_2(\bW)|, |\lambda_M(\bW)|\}$, is strictly less than $1$. While DSA shares algorithmic similarities with first-order distributed optimization methods~\cite{dgd,dgd1} in which the combine-and-update strategy is used, our challenge is characterizing its convergence behavior due to the non-convex and constrained nature of the distributed PCA problem. To this end, we first provide a general result in Section~\ref{sec:centralized_sanger} where we prove the convergence of a modified form of GHA. Then we utilize that result, along with some linear algebraic tools and additional lemmas provided in the appendices, to characterize the dynamics of the distributed setup in Section~\ref{sec:analysis_dsa_general} and prove the convergence of the proposed algorithm.

\section{Convergence Analysis of a Modified GHA}\label{sec:centralized_sanger}
Let $\bX^{(t)} = \begin{bmatrix}\bx_{1}^{(t)} & \bx_2^{(t)} & \cdots & \bx_{K}^{(t)}\end{bmatrix} \in \R^{d\times K}$, $K \leq d$, be an estimate of the $K$-dimensional subspace spanned by the eigenvectors of the covariance matrix $\bC$ after $t$ iterations and $\bq_l, l = 1,\ldots, d$, be the eigenvectors of $\bC$ with corresponding eigenvalues $\lambda_l$. On expanding~\eqref{eq:centralized_gha1} using~\eqref{eq:gs_expansion}, it is clear that the GHA update equation for estimation of the $k^{th}$ eigenvector using a constant step size $\alpha$ is as follows:
\begin{align}\label{eq:eq:gha_centralized_exp}
    	\bx_{k}^{(t+1)} = \bx_k^{(t)}  + \alpha\big(\bC\bx_k^{(t)} - (\bx_k^{(t)})^T\bC\bx_k^{(t)}\bx_k^{(t)} - \sum_{p=1}^{k-1}\bx_p^{(t)}(\bx_p^{(t)})^{\tT}\bC\bx_k^{(t)}\big).
\end{align}
We now slightly modify~\eqref{eq:eq:gha_centralized_exp} by replacing $\bx_p^{(t)}$ for $p<k$ by the true eigenvectors $\bq_p$. We term the resulting update equation \textit{modified GHA} and note that this is not an algorithm in the true sense of the term as it cannot be implemented because of its dependence on the true eigenvectors $\bq_p$. The sole purpose of this modified GHA is to help in our ultimate goal of providing convergence guarantee for the DSA algorithm.
The update equation of the modified GHA for ``estimation'' of the $k^{th}$ eigenvector of $\bC$, $k=1,\ldots,K$, has the form
\begin{equation}\label{eq:centralized_sangerk}
	\bx_{k}^{(t+1)} = \bx_k^{(t)}  + \alpha\big(\bC\bx_k^{(t)} - (\bx_k^{(t)})^T\bC\bx_k^{(t)}\bx_k^{(t)} - \sum_{p=1}^{k-1}\bq_p\bq_p^T\bC\bx_k^{(t)}\big).
\end{equation}
Note that similar to the original GHA, this modified GHA assumes that $\bC$ has $K$ distinct eigenvalues, i.e., $\lambda_1 > \lambda_2 > \ldots > \lambda_K > \lambda_{K+1} \geq \cdots \geq \lambda_d \geq 0$. Now, since $\bq_l, l = 1, \ldots, d$, are the eigenvectors of a real symmetric matrix, they form a basis for $\R^d$ and can be used for expansion of any $\bx_{k}^{(t)}$ as
\begin{equation}\label{eq:expansion_centralized}
	\bx_{k}^{(t)} = \sum_{l=1}^{d}z_{k,l}^{(t)}\bq_l,
\end{equation}
where $z_{k,l}^{(t)}$ is the coefficient corresponding to the eigenvector $\bq_l$ in the expansion of $\bx_k^{(t)}$. Multiplying both sides of \eqref{eq:centralized_sangerk} by $\bq_l^T$ and using the fact that $\bq_l^T\bq_{l'} = 0$ for $l \neq l'$, we get 
\begin{eqnarray*}
	{z}_{k,l}^{(t+1)} &=&{z}_{k,l}^{(t)} + \alpha(\bq_l^T\bC{\bx}_{k}^{(t)} - \bq_l^T(\sum_{p=1}^{k-1}\bq_p\bq_p^T\bC{\bx}_{k}^{(t)})- ({\bx}_{k}^{(t)})^T\bC{\bx}_{k}^{(t)}{z}_{k,l}^{(t)}).
\end{eqnarray*}
This gives
\begin{align} \label{eq:eqn_z_lower}
    {z}_{k,l}^{(t+1)} &= {z}_{k,l}^{(t)} - \alpha({\bx}_{k}^{(t)})^T\bC{\bx}_{k}^{(t)}{z}_{k,l}^{(t)}, \quad \text{for} \quad l= 1, \ldots, k - 1, \\ \label{eq:eqn_z_upper}
    \text{and} \quad {z}_{k,l}^{(t+1)} &= {z}_{k,l}^{(t)} + \alpha(\lambda_l - ({\bx}_{k}^{(t)})^T\bC{\bx}_{k}^{(t)}){z}_{k,l}^{(t)}, \quad \text{for} \quad l= k, \ldots, d.
\end{align}

It has been shown in~\cite{yi.tan2005} that the update equation given by
\begin{align*}
    	\bx_{1}^{(t+1)} = \bx_1^{(t)}  + \alpha\big(\bC\bx_1^{(t)} - (\bx_1^{(t)})^T\bC\bx_1^{(t)}\bx_1^{(t)}\big)
\end{align*}
for $k=1$ converges to $\pm \bq_1$ at a linear rate for a certain condition on the step size $\alpha$. Specifically, it was proven that $(z_{1,1}^{(t)})^2 \rightarrow 1 \quad \text{and} \quad \sum_{l=2}^d(z_{1,l}^{(t)})^2 \leq b_1\rho_1^t$,
where $b_1 > 0$ is some constant and $\rho_1 = \big(\frac{1+\alpha\lambda_2}{1+\alpha\lambda_1}\big)^{2} < 1$. Here, we extend the proof to a general $k$ and show that the update equation given in the form of~\eqref{eq:centralized_sangerk} for any $k=1, \ldots, K, K < d$, converges to the $k^{th}$ dominant eigenvector.
\begin{theorem} \label{theorem:convergence_centralized}
    Suppose $\alpha \leq \frac{1}{3\lambda_1{(2K-1)}}$, where $\lambda_1$ is the largest eigenvalue of $\bC$ and $K$ is the number of eigenvectors to be estimated, $\bq_k^T\bx_k^{(0)} \neq 0$, and $\|\bx_k^{(0)}\| = 1$ for all $k$. Then the modified GHA iterate for $\bx_k^{(t)}$ given by \eqref{eq:centralized_sangerk} converges at a linear rate to the eigenvector $\pm\bq_k$ corresponding to the $k^{th}$ largest eigenvalue $\lambda_k$ of the covariance matrix $\bC$.
\end{theorem}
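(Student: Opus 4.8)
The plan is to analyze the scalar recursions \eqref{eq:eqn_z_lower}--\eqref{eq:eqn_z_upper} for the expansion coefficients $z_{k,l}^{(t)}$ and show, for each fixed $k\in\{1,\dots,K\}$ separately (the modified GHA iterate for $\bx_k^{(t)}$ involves only $\bx_k^{(t)}$ and the true vectors $\bq_1,\dots,\bq_{k-1}$, so no induction over $k$ is needed), that $z_{k,k}^{(t)}\to\pm1$ while $z_{k,l}^{(t)}\to0$ for $l\neq k$, both geometrically. Throughout write $\sigma_k^{(t)}:=(\bx_k^{(t)})^{\tT}\bC\bx_k^{(t)}=\sum_{l=1}^{d}\lambda_l(z_{k,l}^{(t)})^2$, and split $\bx_k^{(t)}=\ba_k^{(t)}+\bb_k^{(t)}$ into its parts in $\tspan\{\bq_1,\dots,\bq_{k-1}\}$ and $\tspan\{\bq_k,\dots,\bq_d\}$. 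By \eqref{eq:eqn_z_lower}, every coefficient with $l<k$ is rescaled by $1-\alpha\sigma_k^{(t)}$ at each step, so once $0\le\alpha\sigma_k^{(t)}\le1$ the norm $\|\ba_k^{(t)}\|$ is non-increasing and hence $\|\ba_k^{(t)}\|\le\|\bx_k^{(0)}\|=1$.

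The first and main step is to establish uniform a priori control of the trajectory: by induction on $t$, $\|\bx_k^{(t)}\|^2\le 2K-1$, which together with $\sigma_k^{(t)}\le\lambda_1\|\bx_k^{(t)}\|^2$ and the hypothesis $\alpha\le\frac{1}{3\lambda_1(2K-1)}$ gives $\alpha\sigma_k^{(t)}\le\tfrac13$, so that $1-\alpha\sigma_k^{(t)}>0$ and $1+\alpha(\lambda_l-\sigma_k^{(t)})>0$ for all $l\ge k$. Positivity of the multiplier of $z_{k,k}^{(t)}$ and $z_{k,k}^{(0)}=\bq_k^{\tT}\bx_k^{(0)}\neq0$ then force $z_{k,k}^{(t)}\neq0$ for all $t$, hence $\sigma_k^{(t)}\ge\lambda_k(z_{k,k}^{(t)})^2>0$ (using $\lambda_k>0$). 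For the inductive step on the norm one uses the split above: $\|\ba_k^{(t)}\|\le1$ is already controlled, while $\bb_k^{(t)}$ obeys an Oja-type recursion in which the extra non-negative term $(\ba_k^{(t)})^{\tT}\bC\ba_k^{(t)}$ inside $\sigma_k^{(t)}$ only damps growth, so $\|\bb_k^{(t)}\|$ stays bounded by the same kind of constant as in the $k=1$ analysis of~\cite{yi.tan2005}, and the $\cO(\alpha^2)$ overshoot is absorbed by the step-size restriction. \emph{This is the technical heart of the proof}: the bound on $\sigma_k^{(t)}$ is needed to control $\bx_k^{(t+1)}$ and a bound on $\bx_k^{(t)}$ is needed to bound $\sigma_k^{(t)}$, so the two estimates must be bootstrapped through the induction simultaneously, which is exactly what forces the small step size (and the factor $2K-1$).

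With these bounds in hand, the second step is the geometric decay of the off-target coefficients. For $l\neq k$ put $y_{k,l}^{(t)}:=z_{k,l}^{(t)}/z_{k,k}^{(t)}$ (well defined by the previous step); dividing the recursions gives $y_{k,l}^{(t+1)}=r_l^{(t)}y_{k,l}^{(t)}$ with
\[
r_l^{(t)}=\frac{1-\alpha\sigma_k^{(t)}}{1+\alpha(\lambda_k-\sigma_k^{(t)})}\ \ (l<k),\qquad
r_l^{(t)}=\frac{1+\alpha(\lambda_l-\sigma_k^{(t)})}{1+\alpha(\lambda_k-\sigma_k^{(t)})}\ \ (l>k).
\]
Writing $r_l^{(t)}=1-\frac{\alpha(\lambda_k-\lambda_l^{\,\prime})}{1+\alpha(\lambda_k-\sigma_k^{(t)})}$ (with $\lambda_l^{\,\prime}=\lambda_l$ for $l>k$ and $\lambda_l^{\,\prime}=0$ for $l<k$), and using $\sigma_k^{(t)}\ge0$ to bound the denominator by $1+\alpha\lambda_k$ together with $\lambda_l<\lambda_k$ — here the distinct-eigenvalue assumption $\lambda_1>\dots>\lambda_K>\lambda_{K+1}$ enters, and positivity of the denominator from the first step rules out blow-up — one obtains $|r_l^{(t)}|\le\frac{1+\alpha\lambda_{k+1}}{1+\alpha\lambda_k}=:\sqrt{\rho_k}<1$ uniformly in $t$ and $l$. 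Hence $\sum_{l\neq k}(y_{k,l}^{(t)})^2\le b_k\rho_k^{\,t}$ for a constant $b_k$, i.e. $\sum_{l\neq k}(z_{k,l}^{(t)})^2\le(2K-1)\,b_k\,\rho_k^{\,t}$; note this already yields $\sin^2\angle(\bx_k^{(t)},\bq_k)=\sum_{l\neq k}(z_{k,l}^{(t)})^2/\|\bx_k^{(t)}\|^2\le b_k\rho_k^{\,t}\to0$ at a linear rate, which is the stated error metric.

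The final step upgrades this to $\bx_k^{(t)}\to\pm\bq_k$. Set $u^{(t)}=(z_{k,k}^{(t)})^2$ and $\veps^{(t)}=\sum_{l\neq k}\lambda_l(z_{k,l}^{(t)})^2$; by step two $\veps^{(t)}\to0$ geometrically, and $\sigma_k^{(t)}=\lambda_k u^{(t)}+\veps^{(t)}$, so
\[
u^{(t+1)}=\bigl(1+\alpha\lambda_k(1-u^{(t)})-\alpha\veps^{(t)}\bigr)^2 u^{(t)}.
\]
This is the scalar map $\phi(u)=(1+\alpha\lambda_k(1-u))^2u$ perturbed by the vanishing sequence $\veps^{(t)}$. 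The bounds of step one keep $u^{(t)}\le 2K-1$, which is strictly below the critical value $\tfrac{1}{3\alpha\lambda_k}+\tfrac13$ where $\phi'$ changes sign and below the unstable fixed points $0$ and $1+\tfrac{2}{\alpha\lambda_k}$, so $\phi$ is increasing on the relevant interval, $\phi(u)>u$ for $u<1$ and $\phi(u)<u$ for $u>1$, and $u=1$ is the only reachable fixed point, with $|\phi'(1)|=|1-2\alpha\lambda_k|<1$ since $\alpha\lambda_k\le\alpha\lambda_1\le\tfrac13$. A standard perturbed-contraction argument then gives $|u^{(t)}-1|\le C\varrho^{\,t}$ for some $\varrho<1$ and constant $C$ (the rate being governed by $\max\{|1-2\alpha\lambda_k|,\sqrt{\rho_k}\}$). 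Combining $u^{(t)}\to1$ with $\sum_{l\neq k}(z_{k,l}^{(t)})^2\to0$ gives $\|\bx_k^{(t)}\|^2\to1$ and, with the sign of $z_{k,k}^{(t)}$ eventually constant, $\bx_k^{(t)}\to\pm\bq_k$ at a linear rate, completing the proof.

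I expect the one genuinely delicate point to be the second paragraph — the simultaneous a priori bounding of $\|\bx_k^{(t)}\|$ and $\sigma_k^{(t)}$ through the induction, since that is where the bootstrap must close and where the exact form of the step-size hypothesis (including the $2K-1$ factor) is forced; everything after it is a clean eigenvalue-gap–driven geometric-decay argument.
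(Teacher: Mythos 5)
Your skeleton is essentially the paper's: expand $\bx_k^{(t)}$ in the eigenbasis, bootstrap an a priori bound on $\|\bx_k^{(t)}\|$ and on the Rayleigh quotient $\sigma_k^{(t)}=(\bx_k^{(t)})^{\tT}\bC\bx_k^{(t)}$ from the step-size condition, get geometric decay of the off-target coefficients at exactly the rate $\rho_k=\bigl(\tfrac{1+\alpha\lambda_{k+1}}{1+\alpha\lambda_k}\bigr)^2$ via the ratio $z_{k,l}^{(t)}/z_{k,k}^{(t)}$, and then conclude $(z_{k,k}^{(t)})^2\to1$. You do deviate in two places, both reasonable: you treat the lower-order coefficients $l<k$ by the same ratio trick (the paper instead proves their direct decay at rate $(1-\alpha\eta)^2$, which requires a strictly positive lower bound $\eta$ on $\sigma_k^{(t)}$, its Lemma~\ref{lemma:lowerbound_rayleigh}), and you replace the paper's Lemma~\ref{lemma:monotonic_rayleigh_quotientk} (tracking $|\lambda_k-\sigma_k^{(t)}|$ through a perturbed linear recursion) by a one-dimensional dynamical-systems analysis of $u^{(t)}=(z_{k,k}^{(t)})^2$ under a geometrically vanishing perturbation. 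The unified ratio argument is a genuine small simplification; the $u^{(t)}$-map analysis is a legitimate alternative to Lemma~\ref{lemma:monotonic_rayleigh_quotientk}.

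Two concrete soft spots remain. First, the load-bearing step you defer—the inductive invariant—is stated with a constant that is not correct: for $K=1$ the claim $\|\bx_1^{(t)}\|^2\le 2K-1=1$ fails, since at $\|\bx\|^2=1$ the Oja-type update overshoots by an $\alpha^2\|\bC\bx-\sigma\bx\|^2$ term whenever $\bx$ is not an eigenvector. The correct uniform bound (and the one the paper proves in Lemma~\ref{lemma:bounded_iterate_rayleigh} by a three-case analysis on $\|\bx_k^{(t)}\|^2$) is $\|\bx_k^{(t)}\|^2<3$ together with $\alpha\sigma_k^{(t)}<1$; your downstream steps only need $\alpha\sigma_k^{(t)}<1$ and $u^{(t)}$ below the critical value $\tfrac{1}{3\alpha\lambda_k}+\tfrac13$, both of which survive with the constant $3$, but you must actually carry out that case analysis—asserting absorption of the $\cO(\alpha^2)$ overshoot is where the whole step-size condition earns its keep. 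Second, your ``standard perturbed-contraction argument'' for $u^{(t)}\to1$ is not uniform: the contraction factor of $\phi$ toward $1$ degenerates to $1$ as $u\to0^{+}$, so a geometric rate requires a quantitative statement that $u^{(t)}$ (equivalently $\sigma_k^{(t)}$) is bounded away from $0$ after finitely many steps, and that the early perturbation $\alpha\veps^{(t)}$ cannot drive it back toward $0$. That is precisely the content of the paper's Lemma~\ref{lemma:lowerbound_rayleigh}, which you discarded in step two but implicitly need back in step three; without it, your constant $C$ and rate $\varrho$ are not justified. Both gaps are fixable within your plan, but as written neither is closed.
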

\begin{proof}
The convergence of $\bx_k^{(t)}$ to $\bq_k$ requires convergence of the lower-order coefficients $z_{k,1}^{(t)}, \ldots, z_{k,k-1}^{(t)}$ and the higher-order coefficients $z_{k,k+1}^{(t)}, \ldots, z_{k,d}^{(t)}$ to 0 and convergence of $z_{k,k}^{(t)}$ to $\pm 1$. 
Now,
\begin{align}\nonumber
    |\lambda_k - ({\bx}_{k}^{(t)})^T\bC{\bx}_{k}^{(t)}| &= |\lambda_k - \sum_{l=1}^{d}\lambda_l({z}_{k,l}^{(t)})^2| = |\lambda_k - \lambda_k({z}_{k,k}^{(t)})^2 - \sum_{l=1}^{k-1}\lambda_l({z}_{k,l}^{(t)})^2 - \sum_{l=k+1}^{d}\lambda_l({z}_{k,l}^{(t)})^2|\\\nonumber
	&\geq |\lambda_k - \lambda_k({z}_{k,k}^{(t)})^2| - |\sum_{l=1}^{k-1}\lambda_l({z}_{k,l}^{(t)})^2| - |\sum_{l=k+1}^{d}\lambda_l({z}_{k,l}^{(t)})^2|\\
	\text{or,}\quad \lambda_k|1 - ({z}_{k,k}^{(t)})^2| &\leq |\sum_{l=1}^{k-1}\lambda_l({z}_{k,l}^{(t)})^2| + |\sum_{l=k+1}^{d}\lambda_l({z}_{k,l}^{(t)})^2| + |\lambda_k - ({\bx}_{k}^{(t)})^T\bC{\bx}_{k}^{(t)}|.
\end{align}
Thus, convergence of the lower-order and the higher-order coefficients to 0 along with convergence of the term $|\lambda_k - ({\bx}_{k}^{(t)})^T\bC{\bx}_{k}^{(t)}|$ will also imply the convergence of $z_{k,k}^{(t)}$ to $\pm 1$. To this end, Lemma~\ref{lemma:coeff_decay_lower} in the appendix proves linear convergence of the lower-order coefficients $z_{k,1}^{(t)}, \ldots, z_{k,k-1}^{(t)}$ to 0 by showing $\sum_{l=1}^{k-1}(z_{k,l}^{(t+1)})^2 < a_1\gamma^{t+1}$ for some constants $a_1 > 0, \gamma < 1$. Furthermore, Lemma~\ref{lemma:coeff_decay_upper} in the appendix shows that $\sum_{l=k+1}^{d}({z}_{k,l}^{(t+1)})^2 \leq a_2\rho_k^{t+1}$, where $a_1, a_2 > 0$ and $\gamma, \rho_k < 1$, thereby proving linear convergence of the higher-order coefficients to 0. Finally, Lemma \ref{lemma:monotonic_rayleigh_quotientk} in the appendix shows that $|\lambda_k - ({\bx}_{k}^{(t)})^T\bC{\bx}_{k}^{(t)}| \leq ta_4(\delta^{t+1} + \max\{\delta^t, \gamma_1^t\})$, where $a_4 > 0$ and $\delta, \gamma_1 < 1$. The formal statements and proofs of Lemma~\ref{lemma:coeff_decay_lower}, Lemma~\ref{lemma:coeff_decay_upper} and Lemma \ref{lemma:monotonic_rayleigh_quotientk} are given in Appendix~\ref{app:lemma3}, Appendix~\ref{app:lemma4} and Appendix~\ref{app:lemma5}, respectively.


Thus,
\begin{eqnarray*}
	\lambda_k|1 - ({z}_{k,k}^{(t)})^2| &\leq&|\sum_{l=1}^{k-1}\lambda_l({z}_{k,l}^{(t)})^2| + |\sum_{l=k+1}^{d}\lambda_l({z}_{k,l}^{(t)})^2| +ta_4(\delta^{t+1} + \max\{\delta^t, \gamma_1^t\}) \\
	&=&\sum_{l=1}^{k-1}\lambda_l({z}_{k,l}^{(t)})^2 + \sum_{l=k+1}^{d}\lambda_l({z}_{k,l}^{(t)})^2 + ta_4(\delta^{t+1} + \max\{\delta^t, \gamma_1^t\})\\
&<& \lambda_1 (\sum_{l=1}^{k-1}({z}_{k,l}^{(t)})^2 + \sum_{l=k+1}^{d}({z}_{k,l}^{(t)})^2) +ta_4(\delta^{t+1} + \max\{\delta^t, \gamma_1^t\})\\
&<& \lambda_1(a_1\gamma^t + a_2\rho_k^{t}) + ta_4(\delta^{t+1} + \max\{\delta^t, \gamma_1^t\}).
\end{eqnarray*}
Clearly, $\lim\limits_{t\rightarrow\infty} |1 - ({z}_{k,k}^{(t)})^2| = 0$. Therefore, Theorem~\ref{theorem:convergence_centralized} shows that with an update equation of the form~\eqref{eq:centralized_sangerk}, the iterates $\bx_{k}^{(t)}$ converge linearly to eigenvectors $\bq_k$ of the covariance matrix $\bC$.
\end{proof}

\section{Convergence Analysis of Distributed Sanger's Algorithm (DSA)}\label{sec:analysis_dsa_general}
With the analysis of the modified GHA in hand, let us proceed to analyze the proposed DSA algorithm. The iterate of DSA at node $i$ for the dominant $K$-dimensional eigenspace estimate ($K\leq d$) is given as
\begin{equation}\label{eq:dsa}
\bX_i^{(t+1)} = \sum_{j\in \cN_i \cup \{i\}}w_{ij}\bX_j^{(t)} + \alpha \bcH_i(\bC_i, \bX_i^{(t)}) = \sum_{j\in \cN_i \cup \{i\}}w_{ij}\bX_j^{(t)} + \alpha\Big(\bC_i\bX_i^{(t)} - \bX_i^{(t)}{\bcU}((\bX_i^{(t)})^{\tT}\bC_i\bX_i^{(t)})\Big),
\end{equation}
where $\bX_i^{(t)} = \begin{bmatrix}\bx_{i,1}^{(t)} & \bx_{i,2}^{(t)} & \cdots & \bx_{i,K}^{(t)}\end{bmatrix} \in \mathbb{R}^{d\times K}$ is an estimate of the $K$-dimensional subspace of the global covariance matrix $\bC$ at the $i^{th}$ node after $t$ iterations, $\bcH_i(\bC_i, \bX_i^{(t)})$ is local Sanger's direction, and $w_{ij} \geq 0$ is a weight that node $i$ assigns to $\bX_j^{(t)}$ based on the connectivity between nodes $i$ and $j$ as mentioned before.
The Sanger's direction and the update equation for an estimate of the $k^{th}$ eigenvector is thus given as
\begin{align}\label{eq:sanger_dir}
	\bcH_i(\bC_i, \bx_{i,k}^{(t)}) &= \bC_i\bx_{i,k}^{(t)} - (\bx_{i,k}^{(t)})^T\bC_i\bx_{i,k}^{(t)}\bx_{i,k}^{(t)} - \sum_{p=1}^{k-1}(\bx_{i,p}^{(t)})^T\bC_i\bx_{i,k}^{(t)}\bx_{i,p}^{(t)}\\ \label{eq:dsak}
	\text{and,} \quad \bx_{i,k}^{(t+1)} &= \sum_{j\in \cN_i \cup \{i\}}w_{ij}\bx_{j,k}^{(t)} + \alpha \big(\bC_i\bx_{i,k}^{(t)} - (\bx_{i,k}^{(t)})^T\bC_i\bx_{i,k}^t\bx_{i,k}^{(t)} - \sum_{p=1}^{k-1}\bx_{i,p}^{(t)}(\bx_{i,p}^{(t)})^T\bC_i\bx_{i,k}^{(t)}\big).
\end{align}
Now, let the average of $\bx_{1,k}^{(t)}, \bx_{2,k}^{(t)},\ldots,\bx_{M,k}^{(t)}$ after $t^{th}$ iteration be denoted as $\bar{\bx}_k^{(t)} = \frac{1}{M}\sum_{i=1}^{M}\bx_{i,k}^{(t)}$ and given by taking average of \eqref{eq:dsak} over all the nodes $i=1,\ldots,M$ as
\begin{eqnarray*}
	\bar{\bx}_k^{(t+1)} &=& \bar{\bx}_k^{(t)} +\frac{\alpha}{M}\sum_{i=1}^{M}\bcH_i(\bx_{i,k}^{(t)})\\
	&=&  \bar{\bx}_k^{(t)} + \frac{\alpha}{M}\sum_{i=1}^{M}\bcH_i(\bar{\bx}_{k}^{(t)}) + \frac{\alpha}{M}\sum_{i=1}^{M}\bcH_i(\bx_{i,k}^{(t)})-\frac{\alpha}{M}\sum_{i=1}^{M}\bcH_i(\bar{\bx}_{k}^{(t)}) = \bar{\bx}_k^{(t)} + \frac{\alpha}{M}\sum_{i=1}^{M}\bcH_i(\bar{\bx}_{k}^{(t)}) + \alpha\bh_k^{(t)}\\
	&=& \bar{\bx}_k^{(t)} + \frac{\alpha}{M}\big(\bC\bar{\bx}_{k}^{(t)} - (\bar{\bx}_{k}^{(t)})^T\bC\bar{\bx}_{k}^{(t)}\bar{\bx}_{k}^{(t)} - \sum_{i=1}^{M}\sum_{p=1}^{k-1}{\bx}_{i,p}^{(t)}({\bx}_{i,p}^{(t)})^T\bC_i\bar{\bx}_{k}^{(t)}\big) + \alpha\bh_k^{(t)},
\end{eqnarray*}
where $\bh_k^{(t)} = \frac{1}{M}\sum_{i=1}^{M}(\bcH_i(\bx_{i,k}^{(t)})-\bcH_i(\bar{\bx}_{k}^{(t)}))$. We present analysis of the DSA algorithm by first proving convergence of the average $\bar{\bx}_k^{(t)}$ to a neighborhood of the eigenvector $\bq_k$ of the global covariance matrix $\bC$ while using a constant step size. Then with the help of Lemma~\ref{lemma:bounded_mean_deviationk} in Appendix~\ref{app:lemma8}, which proves that the deviation of the iterates $\bx_{i,k}^{(t)}$ at each node from the average $\bar{\bx}_k^{(t)}$ is upper bounded, we prove that the iterates at each node also converge to a neighborhood of the true solution. It is noteworthy that the analysis of DSA does not require additional constraints on eigenvalues of $\bC_i$, i.e., similar to GHA, we only require the top $K$ eigenvalues of $\bC$ to be distinct and non-zero.

The complete proof of convergence of DSA is done by induction. First, we show the convergence of $\bx_{i,1}^{(t)}$ to a $\cO(\alpha)$ neighborhood of $\bq_1$ and then analyze the rest of the eigenvector estimates $\bx_{i,k}^{(t)}, k = 2,\ldots, K,$ by assuming that the higher-order estimates have converged.

\textbf{\textit{Case I for Induction -- $k=1$}:}
The iterate for the dominant eigenvector is
\begin{equation}\label{eq:dsa1}
\bx_{i,1}^{(t+1)} = \sum_{j\in \cN_i \cup \{i\}}w_{ij}\bx_{j,1}^{(t)} + \alpha (\bC_i\bx_{i,1}^{(t)} - ((\bx_{i,1}^{(t)})^T\bC_i\bx_{i,1}^{(t)})\bx_{i,1}^{(t)}).
\end{equation}
\begin{theorem}\label{theorem:gha1}
Suppose $\alpha \leq \frac{\min_i w_{ii}}{3\lambda_1{(2K-1)}}$, where $\lambda_1$ is the largest eigenvalue of $\bC$ and $K$ is the number of eigenvectors to  be estimated, $\bq_1^T\bx_{i,1}^{(0)} \neq 0$, and $\|\bx_{i,1}^{(0)}\| = 1$. Then the DSA iterate for $\bx_{i,1}^{(t)}$ given by~\eqref{eq:dsa1} converges at a linear rate to an $\cO(\alpha)$ neighborhood of the eigenvector $\pm\bq_1$ corresponding to the largest eigenvalue $\lambda_1$ of the global covariance matrix $\bC$ at every node of the network.
\end{theorem}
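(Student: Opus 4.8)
The plan is to follow the template established for the modified GHA in Theorem~\ref{theorem:convergence_centralized}, but applied to the network average $\bar\bx_1^{(t)} = \frac1M\sum_{i=1}^M \bx_{i,1}^{(t)}$, treating the node-to-node disagreement as an $\cO(\alpha)$ perturbation. First I would use the double stochasticity of $\bW$ to remove the combine step from the average, obtaining, exactly as in the derivation preceding Case~I (with the $p<k$ terms absent because $k=1$),
\begin{equation*}
\bar\bx_1^{(t+1)} = \bar\bx_1^{(t)} + \tfrac{\alpha}{M}\big(\bC\bar\bx_1^{(t)} - (\bar\bx_1^{(t)})^{\tT}\bC\bar\bx_1^{(t)}\,\bar\bx_1^{(t)}\big) + \alpha\,\bh_1^{(t)},
\end{equation*}
where $\bh_1^{(t)} = \frac1M\sum_i\big(\bcH_i(\bx_{i,1}^{(t)}) - \bcH_i(\bar\bx_1^{(t)})\big)$. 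This is precisely the centralized GHA iterate for $k=1$ driven by the global covariance $\bC$ with step size $\alpha/M$, plus the driving term $\alpha\,\bh_1^{(t)}$. Because each $\bcH_i$ is Lipschitz on bounded sets and, by Lemma~\ref{lemma:bounded_mean_deviationk}, the deviations $\|\bx_{i,1}^{(t)} - \bar\bx_1^{(t)}\|$ are $\cO(\alpha)$, I would conclude $\|\bh_1^{(t)}\| = \cO(\alpha)$, so $\alpha\,\bh_1^{(t)}$ is an $\cO(\alpha^2)$ perturbation of the clean recursion.

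Next I would expand $\bar\bx_1^{(t)} = \sum_{l=1}^d \bar z_{1,l}^{(t)}\bq_l$ in the eigenbasis of $\bC$ and project the averaged update onto each $\bq_l$, getting
\begin{align*}
\bar z_{1,1}^{(t+1)} &= \bar z_{1,1}^{(t)} + \tfrac{\alpha}{M}\big(\lambda_1 - \bar r^{(t)}\big)\bar z_{1,1}^{(t)} + \alpha\,\bq_1^{\tT}\bh_1^{(t)},\\
\bar z_{1,l}^{(t+1)} &= \bar z_{1,l}^{(t)} + \tfrac{\alpha}{M}\big(\lambda_l - \bar r^{(t)}\big)\bar z_{1,l}^{(t)} + \alpha\,\bq_l^{\tT}\bh_1^{(t)}, \quad l \ge 2,
\end{align*}
with $\bar r^{(t)} = (\bar\bx_1^{(t)})^{\tT}\bC\bar\bx_1^{(t)}$. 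Reusing the estimates behind Lemmas~\ref{lemma:coeff_decay_lower}--\ref{lemma:monotonic_rayleigh_quotientk}: the contraction factor $1 - \tfrac{\alpha}{M}(\lambda_1-\lambda_l) < 1$ in the $l\ge 2$ recursion, combined with the $\cO(\alpha^2)$ forcing, yields $\sum_{l\ge 2}(\bar z_{1,l}^{(t)})^2 \le a\,\rho^{\,t} + \cO(\alpha^2)$ for some $\rho < 1$; feeding this into the Rayleigh-quotient bound of Lemma~\ref{lemma:monotonic_rayleigh_quotientk} gives $\bar r^{(t)} \to \lambda_1 + \cO(\alpha)$ and hence $(\bar z_{1,1}^{(t)})^2 \to 1 + \cO(\alpha)$ at a linear rate, with the sign fixed by $\bq_1^{\tT}\bar\bx_1^{(0)} = \bq_1^{\tT}\bx_{i,1}^{(0)} \neq 0$ (all nodes share the initialization). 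Thus $\bar\bx_1^{(t)}$ converges linearly to an $\cO(\alpha)$-neighborhood of $\pm\bq_1$. Finally, the triangle inequality $\|\bx_{i,1}^{(t)} - (\pm\bq_1)\| \le \|\bx_{i,1}^{(t)} - \bar\bx_1^{(t)}\| + \|\bar\bx_1^{(t)} - (\pm\bq_1)\|$ together with the $\cO(\alpha)$ deviation bound of Lemma~\ref{lemma:bounded_mean_deviationk} transfers the conclusion to every node.

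The main obstacle I expect is the circularity among the three ingredients: sharp convergence of $\bar\bx_1^{(t)}$ needs $\|\bh_1^{(t)}\| = \cO(\alpha)$, which needs the disagreement bound of Lemma~\ref{lemma:bounded_mean_deviationk}, which in turn needs an a~priori uniform bound $\|\bx_{i,1}^{(t)}\| \le c$ on the node iterates — and that boundedness is exactly where the hypothesis $\alpha \le \frac{\min_i w_{ii}}{3\lambda_1(2K-1)}$ enters, since the node update $\bx_{i,1}^{(t+1)} = w_{ii}\bx_{i,1}^{(t)} + \sum_{j\neq i}w_{ij}\bx_{j,1}^{(t)} + \alpha\,\bcH_i(\bx_{i,1}^{(t)})$ behaves in its ``self'' part like a GHA step whose leading coefficient is $w_{ii}$ rather than $1$, so the familiar step-size bound is scaled by $\min_i w_{ii}$. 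Breaking this loop will require an induction on $t$ that simultaneously propagates (i)~a uniform norm bound on all node iterates, (ii)~the $\cO(\alpha)$ disagreement bound, and (iii)~the contraction estimate for the averaged coefficients, with mutually consistent constants — this joint bootstrap, rather than any single estimate, is the delicate point. A secondary nuisance is that $\|\bar\bx_1^{(t)}\|$ is no longer exactly $1$ (averaging and the perturbation both break the normalization used in Theorem~\ref{theorem:convergence_centralized}), so the Rayleigh-quotient and sign-convergence steps must be redone allowing $\|\bar\bx_1^{(t)}\| = 1 + \cO(\alpha)$.
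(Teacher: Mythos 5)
Your outline is correct and its skeleton matches the paper's: both decompose the error as $\|\bx_{i,1}^{(t)}-\bx_{1}^{*}\| \le \|\bx_{i,1}^{(t)}-\bar{\bx}_1^{(t)}\| + \|\bar{\bx}_1^{(t)}-\bx_{1}^{*}\|$, control the consensus term via Lemma~\ref{lemma:bounded_mean_deviationk}, write the averaged recursion with the perturbation $\alpha\bh_1^{(t)}$, and finish with the triangle inequality. Where you genuinely diverge is the middle step: the paper does not re-derive the eigen-coefficient dynamics of the perturbed average; it invokes Section~\ref{sec:centralized_sanger} as a one-step norm contraction, $\|\bar{\bx}_1^{(t)}-\bx_{1}^{*}\| \le \rho_1\|\bar{\bx}_1^{(t-1)}-\bx_{1}^{*}\| + \alpha\|\bh_1^{(t-1)}\|$ with $\rho_1 = \frac{1+\frac{\alpha}{M}\lambda_2}{1+\frac{\alpha}{M}\lambda_1}$, bounds $\|\bh_1^{(t-1)}\| \le 9\lambda_1 b_1\big(\beta^{t-1}+\frac{\alpha}{1-\beta}\big)$ by the explicit computation of Lemma~\ref{lemma:bound_hk} rather than a generic Lipschitz argument, and unrolls the inexact recursion (splitting $\rho_1\le\beta$ and $\rho_1>\beta$) to obtain the $t\max\{\rho_1^{t-1},\beta^{t-1}\}$ transient plus the $\frac{c_1}{1-\rho_1}\frac{\alpha}{1-\beta}$ floor. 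Your coefficient-level treatment is more granular and honestly confronts a point the paper glosses over (the averaged iterate is no longer unit-norm, so the centralized lemmas do not apply verbatim), at the cost of redoing Lemmas~\ref{lemma:coeff_decay_lower}--\ref{lemma:monotonic_rayleigh_quotientk} with forcing terms; note only that with a $1-\Theta(\alpha)$ contraction and $\cO(\alpha^2)$ forcing the steady-state residual is $\cO(\alpha^2)/\Theta(\alpha)=\cO(\alpha)$, not $\cO(\alpha^2)$ as you wrote (this is precisely how the paper's $\cO(\alpha)$ floor arises, since $1-\rho_1=\Theta(\alpha)$ and $c_1=\cO(\alpha)$), which leaves your final conclusion unchanged. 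Also, the bootstrap you flag as delicate is not circular in the paper's development: Lemma~\ref{lemma:bounded_rayleighk} establishes $\|\bx_{i,k}^{(t)}\|<\sqrt{3}$ by a direct induction over $t$ that uses only the $\min_i w_{ii}$-scaled step-size condition (no disagreement bound is needed), then Lemma~\ref{lemma:bounded_sanger} bounds the local Sanger directions, and only then does Lemma~\ref{lemma:bounded_mean_deviationk} bound the deviation from the average, so the dependency chain is linear rather than a joint induction.
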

\begin{proof}
We know that
\begin{equation}\label{eq:error_distributed1}
\|\bx_{i,1}^{(t)} - \bx_{1}^{*}\| \leq \|\bx_{i,1}^{(t)} - \bar{\bx}_1^{(t)}\| + \|\bar{\bx}_1^{(t)} - \bx_{1}^{*}\|, \quad \text{where $\bx_{1}^* = \pm \bq_1$}.
\end{equation}
The term $\|\bx_{i,1}^{(t)} - \bar{\bx}_1^{(t)}\|$ is a measure of consensus in the network and we prove in Lemma \ref{lemma:bounded_mean_deviationk} in Appendix~\ref{app:lemma8} that this difference decreases linearly until it reaches a level of $\cO(\alpha)$. More precisely, 
\begin{align}\label{eq:consensus diff1}
   \|\bx_{i,1}^{(t)} - \bar{\bx}_1^{(t)}\| \leq b_1\big(\beta^t + \frac{\alpha}{1-\beta}\big),
\end{align}
where $\beta = \max\{|\lambda_2(\bW)|, |\lambda_M(\bW)|\}$. In particular, it is well known that for a connected graph $\beta < 1$. Now, the average iterate of DSA for the estimate of the dominant eigenvector $(k=1)$ is 
\begin{align*}
    \bar{\bx}_1^{(t)} &= \bar{\bx}_1^{(t-1)} +  \frac{\alpha}{M} (\bC\bar{\bx}_{1}^{(t-1)} - (\bar{\bx}_{1}^{(t-1)})^T\bC\bar{\bx}_{1}^{(t-1)}\bar{\bx}_{1}^{(t-1)}) + \alpha\bh_1^{(t-1)}.
\end{align*}
Thus,
\begin{align}\nonumber
    \bar{\bx}_1^{(t)} - \bx_{1}^{*} &= \bar{\bx}_1^{(t-1)} +  \frac{\alpha}{M} (\bC\bar{\bx}_{1}^{(t-1)} - (\bar{\bx}_{1}^{(t-1)})^T\bC\bar{\bx}_{1}^{(t-1)}\bar{\bx}_{1}^{(t-1)}) - \bx_1^{*} + \alpha\bh_1^{(t-1)} \\\nonumber
    \text{or,}\quad \|\bar{\bx}_1^{(t)} - \bx_{1}^{*}\| &= \| \bar{\bx}_1^{(t-1)} +  \frac{\alpha}{M} (\bC\bar{\bx}_{1}^{(t-1)} - (\bar{\bx}_{1}^{(t-1)})^T\bC\bar{\bx}_{1}^{(t-1)}\bar{\bx}_{1}^{(t-1)}) - \bx_1^{*} + \alpha\bh_1^{(t-1)}\|\\ \label{eq:error_inequality1}
    \text{or,}\quad\|\bar{\bx}_1^{(t)} - \bx_{1}^{*}\| &\leq \| \bar{\bx}_1^{(t-1)} +  \frac{\alpha}{M} (\bC\bar{\bx}_{1}^{(t-1)} - (\bar{\bx}_{1}^{(t-1)})^T\bC\bar{\bx}_{1}^{(t-1)}\bar{\bx}_{1}^{(t-1)}) - \bx_1^{*}\| + \alpha\|\bh_1^{(t-1)}\|.
\end{align}
We saw in Section~\ref{sec:centralized_sanger} that an iterate of the form 
\begin{align*}
    \bar{\bx}_1^{(t)} = \bar{\bx}_1^{(t-1)} + \frac{\alpha}{M} (\bC\bar{\bx}_{1}^{(t-1)} - (\bar{\bx}_{1}^{(t-1)})^T\bC\bar{\bx}_{1}^{(t-1)}\bar{\bx}_{1}^{(t-1)})
\end{align*}
converges linearly to $\bx_1^* = \pm \bq_1$ for certain conditions on the step size and the initial point. 
Thus, 
 \begin{eqnarray*}
	\|\bar{\bx}_1^{(t)} - \bx_{1}^{*}\| &\leq& \rho_1\|\bar{\bx}_1^{(t-1)} - \bx_{1}^{*}\| + \alpha\|\bh_1^{(t-1)}\|, \quad\text{where}\quad \rho_1 = \frac{1+\frac{\alpha}{M}\lambda_2}{1+\frac{\alpha}{M}\lambda_1}.
\end{eqnarray*}
The term $\bh_1^{(t-1)}$ in the above equation appears due to the distributed nature of the algorithm and can be bounded separately. Specifically, we prove in Lemma~\ref{lemma:bound_hk}, whose formal statement and proof is given in Appendix~\ref{app:lemma10}, that
\begin{align*}
    \|\bh_1^{(t-1)}\| \leq 9\lambda_1b_1\big(\beta^{t-1} + \frac{\alpha}{1-\beta}\big).
\end{align*}
Thus,
 \begin{eqnarray*}
 \|\bar{\bx}_1^{(t)} - \bx_{1}^{*}\| &\leq&  \rho_1\|\bar{\bx}_1^{(t-1)} - \bx_{1}^{*}\|  + 9\alpha\lambda_1b_1\big(\beta^{t-1} + \frac{\alpha}{1-\beta}\big)\\
	&\leq&  \rho_1\Big(\rho_1\|\bar{\bx}_1^{(t-2)} - \bx_{1}^{*}\|  + 9\alpha\lambda_1b_1\beta^{t-2} + 9\alpha\lambda_1b_1\big(\frac{\alpha}{1-\beta}\big)\Big) +  9\alpha\lambda_1b_1\beta^{t-1} + 9\alpha\lambda_1b_1\big(\frac{\alpha}{1-\beta}\big)\\
	&\leq& \rho_1^{t}\|\bar{\bx}_1^{(0)} - \bx_{1}^{*}\| +  9\alpha\lambda_1b_1\sum_{r=0}^{t-1}(\rho_1\beta^{-1})^r\beta^{t-1} + \frac{1}{1-\rho_1}9\alpha\lambda_1b_1\big(\frac{\alpha}{1-\beta}\big).
\end{eqnarray*}
Since $\rho_1, \beta < 1$, we have the following two cases:
\begin{enumerate}
	\item $\rho_1 \leq \beta \implies \rho_1\beta^{-1} \leq 1$. Then, $\sum_{r=0}^{t-1}(\rho_1\beta^{-1})^{r}\beta^{t-1} \leq \sum_{r=0}^{t-1}\beta^{t-1} = t\beta^{t-1}$.
	\item $\rho_1 > \beta$. Then $\sum_{r=0}^{t-1}(\rho_1\beta^{-1})^{r}\beta^{t-1} = \beta^{t-1} + \rho_1\beta^{t-2} + \cdots + \rho_1^{t-1} < \rho_1^{t-1} + \cdots + \rho_1^{t-1} = t\rho_1^{t-1}$.
\end{enumerate}
Therefore,
\begin{equation}\label{eq:error_average1}
\|\bar{\bx}_1^{(t)} - \bx_{1}^{*}\| \leq \rho_1^{t}\|\bar{\bx}_1^{(0)} - \bx_{1}^{*}\| +  c_1t\max \{\rho_1^{t-1}, \beta^{t-1}\} + \frac{c_1}{1-\rho_1}\big(\frac{\alpha}{1-\beta}\big), \quad \text{where} \quad c_1 = 9\alpha\lambda_1b_1.
\end{equation}
Consequently, from \eqref{eq:consensus diff1} and~\eqref{eq:error_average1}, we get
\begin{eqnarray*}
	\|\bx_{i,1}^{(t)} - \bx_1^{*}\| &\leq& b_1(\beta^t + \frac{\alpha}{1-\beta}) + \rho_1^{t}\|\bar{\bx}_1^{(0)} - \bx_{1}^{*}\| +  c_1t\max \{\rho_1^{t-1}, \beta^{t-1}\} + \frac{c_1}{1-\rho_1}\big(\frac{\alpha}{1-\beta}\big)\\
	&=& \rho_1^{t}\|\bar{\bx}_1^{(0)} - \bx_{1}^{*}\| + b_1\beta^t +  c_1t\max \{\rho_1^{t-1}, \beta^{t-1}\} + (\frac{c_1}{1-\rho_1} + b_1)\big(\frac{\alpha}{1-\beta}\big).
\end{eqnarray*}
This proves that $\bx_{i,1}^{(t)}$ converges to a neighborhood of $\bx_1^{*} = \bq_1$ or $\bx_1^{*} = -\bq$ at a linear rate.
\end{proof}
\textbf{\textit{Case II for Induction -- $1<k\leq K$}:} For the remainder of the eigenvectors, we proceed with the proof of convergence by induction. Since we have already proven the base case, we can assume there exist constants $c_{i,p} > 0$ and $\theta_{i,p} < 1$ such that
\begin{enumerate}
	\item $\|\bx_{i,p}^{(t)}(\bx_{i,p}^{(t)})^T - \bq_p\bq_p^T\| \leq c_{i,p}(\theta_{i,p}^t + \frac{\alpha}{1-\beta}), \forall p = 1, \ldots, k-1$, and
	\item $\|\bx_{i,p}^{(t)}\|^2 \leq 3, p = 1, \ldots, k-1, i = 1,\ldots M$.
\end{enumerate}
Using the inequality in 1) above, we can write ${\bx}_{i,p}^{(t)}({\bx}_{i,p}^{(t)})^T = \bq_p\bq_p^T + {\bphi}_{i,p}^{(t)}, p = 1,\ldots, k-1$ such that $\|{\bphi}_{i,p}^{(t)}\| \leq {c}_{i,p}({\theta}_{i,p}^t + \frac{\alpha}{1-\beta})$. This therefore implies $\frac{\alpha}{M}\sum_{i=1}^{M}\sum_{p=1}^{k-1}{\bx}_{i,p}^{(t)}({\bx}_{i,p}^{(t)})^T\bC_i\bar{\bx}_{k}^{(t)} =
	\frac{\alpha}{M}\sum_{i=1}^{M}\sum_{p=1}^{k-1}(\bq_{p}\bq_{p}^T + {\bphi}_{i,p}^{(t)})\bC_i\bar{\bx}_{k}^{(t)} = \frac{\alpha}{M}\sum_{p=1}^{k-1}\bq_p\bq_p^T\bC\bar{\bx}_{k}^{(t)} + \alpha\bar{\bpsi}_{k}^{(t)},$
where $\bar{\bpsi}_{k}^{(t)} = \frac{1}{M}\sum_{i=1}^{M}\sum_{p=1}^{k-1}{\bphi}_{i,p}^{(t)}\bC\bar{\bx}_{k}^{(t)}$.

Thus, we have
\begin{eqnarray}\nonumber
	\|\bar{\bpsi}_{k}^{(t)}\| &\leq& \frac{1}{M}\sum_{i=1}^{M}\sum_{p=1}^{k-1}\lambda_1\|{\bphi}_{i,p}^{(t)}\|\|\bar{\bx}_{k}^{(t)}\|
	\leq \frac{1}{M}\sum_{i=1}^{M}\sum_{p=1}^{k-1}\sqrt{3}\lambda_1{c}_{i,p}({\theta}_{i,p}^t + \frac{\alpha}{1-\beta})\\ \label{eq:psi_avg_decay}
	&\leq& \frac{1}{M}\sqrt{3}\lambda_1(k-1)M\bar{c}(\bar{\theta}^t + \frac{\alpha}{1-\beta}) = \sqrt{3}\lambda_1(k-1)\bar{c}(\bar{\theta}^t + \frac{\alpha}{1-\beta}),
\end{eqnarray}
where $\bar{c} = \max_{i,p} \{{c}_{i,p}\}$ and $\bar{\theta} = \max_{i,p} \{{\theta}_{i,p}\} < 1$.

Consequently,
\begin{eqnarray}\nonumber
		\bar{\bx}_k^{(t+1)} &=& \bar{\bx}_k^{(t)} + \frac{\alpha}{M}\big(\bC\bar{\bx}_{k}^{(t)} - (\bar{\bx}_{k}^{(t)})^T\bC\bar{\bx}_{k}^{(t)}\bar{\bx}_{k}^{(t)} - \sum_{i=1}^{M}\sum_{p=1}^{k-1}{\bx}_{i,p}^{(t)}({\bx}_{i,p}^{(t)})^T\bC_i\bar{\bx}_{k}^{(t)}\big) + \alpha\bh_k^{(t)}\\ \nonumber
		&=& \bar{\bx}_k^{(t)} + \frac{\alpha}{M}\big(\bC\bar{\bx}_{k}^{(t)} - (\bar{\bx}_{k}^{(t)})^T\bC\bar{\bx}_{k}^{(t)}\bar{\bx}_{k}^{(t)} - \sum_{i=1}^{M}\sum_{p=1}^{k-1}{\bq}_{p}{\bq}_{p}^T\bC_i\bar{\bx}_{k}^{(t)}\big) + \\ \nonumber
		&& \frac{\alpha}{M}\sum_{i=1}^{M}\sum_{p=1}^{k-1}({\bq}_{p}{\bq}_{p}^T - {\bx}_{i,p}^{(t)}({\bx}_{i,p}^{(t)})^T)\bC_i\bar{\bx}_{k}^{(t)}+ \alpha\bh_k^{(t)}\\ \nonumber
		&=& \bar{\bx}_k^{(t)} + \frac{\alpha}{M}\big(\bC\bar{\bx}_{k}^{(t)} - (\bar{\bx}_{k}^{(t)})^T\bC\bar{\bx}_{k}^{(t)}\bar{\bx}_{k}^{(t)} - \sum_{p=1}^{k-1}{\bq}_{p}{\bq}_{p}^T\bC\bar{\bx}_{k}^{(t)}\big) - \frac{\alpha}{M}\sum_{i=1}^{M}\sum_{p=1}^{k-1}\bphi_{i,p}^{(t)}\bC_i\bar{\bx}_{k}^{(t)}+ \alpha\bh_k^{(t)}\\ \label{eq:avg_dsak}
		&=& \bar{\bx}_k^{(t)} + \frac{\alpha}{M}\big(\bC\bar{\bx}_{k}^{(t)} - (\bar{\bx}_{k}^{(t)})^T\bC\bar{\bx}_{k}^{(t)}\bar{\bx}_{k}^{(t)} - \sum_{p=1}^{k-1}{\bq}_{p}{\bq}_{p}^T\bC\bar{\bx}_{k}^{(t)}\big) - \alpha\bar{\bpsi}_k^{(t)} + \alpha\bh_k^{(t)}.
\end{eqnarray}

We can now proceed with the final theorem that characterizes the convergence behavior of DSA.
\begin{theorem}
Suppose $\alpha \leq \frac{\min_i w_{ii}}{3\lambda_1{(2K-1)}}$, where $\lambda_1$ is the largest eigenvalue of $\bC$ and $K$ is the number of eigenvectors to  be estimated, $\bq_k^T\bx_{i,k}^{(0)} \neq 0$ and $\|\bx_{i,k}^{(0)}\| = 1,  \forall k=2,\dots,K$. Then the DSA iterate for $\bx_{i,k}^{(t)}$ given by~\eqref{eq:dsak} converges at a linear rate to an $\cO(\alpha)$ neighborhood of the eigenvector $\bq_k$ corresponding to the $k^{th}$ largest eigenvalue $\lambda_k$ of the global covariance matrix $\bC$ at each node of the network.
\end{theorem}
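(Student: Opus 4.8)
The plan is to mirror the proof of the base case (Theorem~\ref{theorem:gha1}) and run the remaining indices $k$ as an induction whose hypotheses are precisely properties~1) and~2) already isolated above for the estimates $\bx_{i,p}^{(t)}$, $p<k$. As in~\eqref{eq:error_distributed1} I would first split
\[
\|\bx_{i,k}^{(t)} - \bx_k^{*}\| \le \|\bx_{i,k}^{(t)} - \bar{\bx}_k^{(t)}\| + \|\bar{\bx}_k^{(t)} - \bx_k^{*}\|, \qquad \bx_k^{*} = \pm\bq_k ,
\]
and bound the two pieces separately. The consensus term $\|\bx_{i,k}^{(t)} - \bar{\bx}_k^{(t)}\|$ is controlled by Lemma~\ref{lemma:bounded_mean_deviationk} (it decays geometrically down to an $\cO(\alpha/(1-\beta))$ floor), so the real work is the averaged error $\|\bar{\bx}_k^{(t)} - \bx_k^{*}\|$.

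For the averaged error I would start from~\eqref{eq:avg_dsak}, which already exhibits $\bar{\bx}_k^{(t+1)}$ as one step of the \emph{modified} GHA map (true eigenvectors $\bq_p$ in the deflation terms, effective step size $\alpha/M$, acting on the global $\bC$) plus two perturbations: $-\alpha\bar{\bpsi}_k^{(t)}$, arising from replacing $\bx_{i,p}^{(t)}(\bx_{i,p}^{(t)})^{\tT}$ by $\bq_p\bq_p^{\tT}$, and $\alpha\bh_k^{(t)}$, arising from imperfect consensus. By Theorem~\ref{theorem:convergence_centralized}, arguing exactly as in Theorem~\ref{theorem:gha1}, the unperturbed map contracts toward $\bx_k^{*}$ with factor $\rho_k = \tfrac{1+(\alpha/M)\lambda_{k+1}}{1+(\alpha/M)\lambda_k}<1$ (the step-size hypothesis is inherited since $\alpha/M\le\alpha\le\tfrac{1}{3\lambda_1(2K-1)}$), so that
\[
\|\bar{\bx}_k^{(t)} - \bx_k^{*}\| \le \rho_k \|\bar{\bx}_k^{(t-1)} - \bx_k^{*}\| + \alpha\|\bar{\bpsi}_k^{(t-1)}\| + \alpha\|\bh_k^{(t-1)}\| .
\]
Plugging in $\|\bar{\bpsi}_k^{(t-1)}\| \le \sqrt{3}\lambda_1(k-1)\bar c(\bar\theta^{\,t-1}+\tfrac{\alpha}{1-\beta})$ from~\eqref{eq:psi_avg_decay} and $\|\bh_k^{(t-1)}\| \le 9\lambda_1 b_1(\beta^{t-1}+\tfrac{\alpha}{1-\beta})$ from Lemma~\ref{lemma:bound_hk}, I would unroll as in the derivation of~\eqref{eq:error_average1}: the same geometric-series bookkeeping (comparing $\rho_k$ with $\bar\theta$ and $\beta$) yields
\[
\|\bar{\bx}_k^{(t)} - \bx_k^{*}\| \le \rho_k^{t}\|\bar{\bx}_k^{(0)} - \bx_k^{*}\| + c_k\, t\max\{\rho_k^{t-1},\bar\theta^{\,t-1},\beta^{t-1}\} + \frac{c_k}{1-\rho_k}\cdot\frac{\alpha}{1-\beta}
\]
for a constant $c_k$ proportional to $\alpha$.

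Combining the last display with Lemma~\ref{lemma:bounded_mean_deviationk} through the triangle inequality shows every $\bx_{i,k}^{(t)}$ converges at a linear rate to an $\cO(\alpha)$ ball around $\bx_k^{*}=\pm\bq_k$, which is the assertion. To close the induction one then checks that this forces properties~1) and~2) at index $k$: the norm bound $\|\bx_{i,k}^{(t)}\|^2\le 3$ follows because $\|\bq_k\|=1$ and the total error $\|\bx_{i,k}^{(t)}-\bx_k^{*}\|$ above has a bounded transient plus an $\cO(\alpha)$ floor (for admissible, hence small enough, $\alpha$), while
\[
\|\bx_{i,k}^{(t)}(\bx_{i,k}^{(t)})^{\tT} - \bq_k\bq_k^{\tT}\| \le \|\bx_{i,k}^{(t)}-\bx_k^{*}\|\,(\|\bx_{i,k}^{(t)}\|+1)
\]
inherits the same geometric-plus-$\cO(\alpha)$ estimate with a new constant $c_{i,k}$ and a rate $\theta_{i,k}=\max\{\rho_k,\bar\theta,\beta\}<1$.

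The step I expect to be the main obstacle is justifying the ``modified GHA contracts'' reduction in the presence of a \emph{persistent} $\cO(\alpha)$ perturbation. Unlike the setting of Theorem~\ref{theorem:convergence_centralized}, here $\bar{\bx}_k^{(t)}$ is never exactly unit-norm and is continually nudged by $\alpha(\bh_k^{(t)}-\bar{\bpsi}_k^{(t)})$, so one must first verify that $\|\bar{\bx}_k^{(t)}\|$ stays uniformly bounded (e.g.\ $\le\sqrt{3}$) and that its Rayleigh quotient $(\bar{\bx}_k^{(t)})^{\tT}\bC\bar{\bx}_k^{(t)}$ remains close enough to $\lambda_k$ for the coefficient-decay arguments behind Lemmas~\ref{lemma:coeff_decay_lower}--\ref{lemma:monotonic_rayleigh_quotientk} to keep running with uniform constants; only then is the contraction factor $\rho_k$ legitimate, rather than being a one-step contraction asserted without proof. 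A secondary nuisance is that the deflation error $\bphi_{i,p}^{(t)}$ multiplies the \emph{local} $\bC_i$ (not $\bC$), so $\|\bar{\bpsi}_k^{(t)}\|$ must be estimated node-by-node through $\|\bC_i\|\le\lambda_1$ as in~\eqref{eq:psi_avg_decay} rather than via a clean global identity, and the extra polynomial factor $t$ inherited from Lemma~\ref{lemma:monotonic_rayleigh_quotientk} must be absorbed into a slightly enlarged geometric rate when telescoping.
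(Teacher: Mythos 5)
Your proposal follows essentially the same route as the paper's proof: the same triangle-inequality split, Lemma~\ref{lemma:bounded_mean_deviationk} for the consensus term, the decomposition~\eqref{eq:avg_dsak} of the averaged iterate into a modified-GHA step plus the perturbations $\alpha\bar{\bpsi}_k^{(t)}$ and $\alpha\bh_k^{(t)}$ bounded via~\eqref{eq:psi_avg_decay} and Lemma~\ref{lemma:bound_hk}, and the same geometric unrolling to a linear rate with an $\cO(\alpha/(1-\beta))$ floor, the only slip being cosmetic (for general $k$ the bound is $3(k+2)\lambda_1 b_k(\beta^{t-1}+\tfrac{\alpha}{1-\beta})$, not the $k=1$ constant $9\lambda_1 b_1$). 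Your additional remarks---explicitly closing the induction on hypotheses 1)--2) at index $k$, and noting that the one-step contraction $\rho_k'$ for the perturbed, non-unit-norm average is asserted by appeal to Theorem~\ref{theorem:convergence_centralized} rather than re-derived---go slightly beyond what the paper writes down, which simply invokes the discussion in Section~\ref{sec:centralized_sanger} at that point.
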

\begin{proof}
We know
\begin{equation}\label{eq:error_distributedk}
	\|\bx_{i,k}^{(t)} - \bx_{k}^{*}\| \leq \|\bx_{i,k}^{(t)} - \bar{\bx}_k^{(t)}\| + \|\bar{\bx}_k^{(t)} - \bx_{k}^{*}\|, \quad \text{where $\bx_{k}^* = \pm \bq_k$}.
\end{equation}
Also, from Lemma \ref{lemma:bounded_mean_deviationk} in the appendix we know that
\begin{align*}
   \|\bx_{i,k}^{(t)} - \bar{\bx}_k^{(t)}\| \leq b_k(\beta^t + \frac{\alpha}{1-\beta}\big).
\end{align*}
Now, the average iterate of DSA for estimating the $k^{th}$ eigenvector is
\begin{align*}
    \bar{\bx}_k^{(t)} &= \bar{\bx}_k^{(t-1)} +  \frac{\alpha}{M} (\bC\bar{\bx}_{k}^{(t-1)} - ((\bar{\bx}_{k}^{(t-1)})^T\bC\bar{\bx}_{k}^{(t-1)})\bar{\bx}_{k}^{(t-1)} - \sum_{p=1}^{k-1}{\bq}_{p}{\bq}_{p}^T\bC\bar{\bx}_{k}^{(t-1)}) + \alpha\bh_k^{(t-1)} + \alpha\bar{\bpsi}_k^{(t-1)}\\
    \text{or,} \quad \|\bar{\bx}_k^{(t)} - \bx_{k}^{*}\| &\leq \| \bar{\bx}_k^{(t-1)} +  \frac{\alpha}{M} (\bC\bar{\bx}_{k}^{(t-1)} - ((\bar{\bx}_{k}^{(t-1)})^T\bC\bar{\bx}_{k}^{(t-1)})\bar{\bx}_{k}^{(t-1)} - \sum_{p=1}^{k-1}{\bq}_{p}{\bq}_{p}^T\bC\bar{\bx}_{k}^{(t-1)}) - \bx_k^{*}\| + \\
    &\quad\quad\quad\quad\quad\quad\quad\quad\quad\quad\quad\quad\quad\quad\quad\quad\quad\quad \alpha\|\bh_k^{(t-1)}\| + \alpha\|\bar{\bpsi}_k^{(t-1)}\|.
\end{align*}
We know from the discussion in Section~\ref{sec:centralized_sanger} that for an iterate of the form 
\begin{align*}
    \bar{\bx}_k^{(t)} &= \bar{\bx}_k^{(t-1)} +  \frac{\alpha}{M} (\bC\bar{\bx}_{k}^{(t-1)} - ((\bar{\bx}_{k}^{(t-1)})^T\bC\bar{\bx}_{k}^{(t-1)})\bar{\bx}_{k}^{(t-1)} - \sum_{p=1}^{k-1}{\bq}_{p}{\bq}_{p}^T\bC\bar{\bx}_{k}^{(t-1)}),
\end{align*}
there exists a constant $\rho_k^{'} < 1$ such that $\|\bar{\bx}_k^{(t)} - \bx_{k}^{*}\| \leq \rho_k^{'}\|\bar{\bx}_k^{(t-1)} - \bx_{k}^{*}\|$. Thus,
\begin{align*}
    \|\bar{\bx}_k^{(t)} - \bx_{k}^{*}\| \leq \rho_k^{'}\|\bar{\bx}_k^{(t-1)} - \bx_{k}^{*}\| + \alpha\|\bh_k^{(t-1)}\| + \alpha\|\bar{\bpsi}_k^{(t-1)}\|.
\end{align*}
Now, the term $\|\bh_k^{(t-1)}\|$ was bounded in Lemma~\ref{lemma:bound_hk} in the appendix as
\begin{align}\label{eq:hk_bound1}
    \|\bh_k^{(t-1)}\| \leq 3(k+2)\lambda_1b_k\big(\beta^{t-1} + \frac{\alpha}{1-\beta}\big).
\end{align}
Thus, using~\eqref{eq:psi_avg_decay} and~\eqref{eq:hk_bound1}, we can write
\begin{eqnarray*}
	\|\bar{\bx}_k^{(t)} - \bx_{k}^{*}\| 
	&\leq&  \rho_k^{'}\|\bar{\bx}_k^{(t-1)} - \bx_{k}^{*}\|  + \alpha(3(k+2)\lambda_1b_k(\beta^{t-1} + \frac{\alpha}{1-\beta})) + \alpha(\sqrt{3}\lambda_1(k-1)\bar{c}(\bar{\theta}^{t-1} + \frac{\alpha}{1-\beta}))\\
	&\leq&  \rho_k^{'}\|\bar{\bx}_k^{(t-1)} - \bx_{k}^{*}\|  + c_k\max\{\beta^{t-1}, \bar{\theta}^{t-1}\}  + c_k\frac{\alpha}{1-\beta}, \quad c_k = \max\{\alpha(3(k+2)\lambda_1b_k),\alpha(\sqrt{3}\lambda_1(k-1)\bar{c})\} \\
	&\leq&  \rho_k^{'}\Big(\rho_k^{'}\|\bar{\bx}_k^{(t-2)} - \bx_{k}^{*}\|  + c_k\max\{\beta^{t-2}, \bar{\theta}^{t-2}\}  + c_k\frac{\alpha}{1-\beta}\Big) + c_k\max\{\beta^{t-1}, \bar{\theta}^{t-1}\}  + c_k\frac{\alpha}{1-\beta}\\
	&\leq& \rho_k^{'t}\|\bar{\bx}_k^{(0)} - \bx_{k}^{*}\| +  c_k\sum_{r=0}^{t-1}(\rho_k^{'}\max\{\beta, \bar{\theta}\}^{-1})^r\max\{\beta, \bar{\theta}\}^{t-1} + \frac{c_k}{1-\rho_k^{'}}\big(\frac{\alpha}{1-\beta}\big)\\
	&\leq& \rho_k^{'t}\|\bar{\bx}_k^{(0)} - \bx_{k}^{*}\| +  c_kt\max\{\rho_k^{'t-1}, \beta^{t-1}, \bar{\theta}^{t-1}\} + \frac{c_k}{1-\rho_k^{'}}\big(\frac{\alpha}{1-\beta}\big).
\end{eqnarray*}
Consequently, from \eqref{eq:error_distributedk} and Lemma \ref{lemma:bounded_mean_deviationk} we get
\begin{eqnarray*}
	\|\bx_{i,k}^{(t)} - \bx_k^{*}\| &\leq& b_k\big(\beta^t + \frac{\alpha}{1-\beta}\big) + \rho_k^{'t}\|\bar{\bx}_k^{(0)} - \bx_{k}^{*}\| +  c_kt\max\{\rho_k^{'t-1}, \beta^{t-1}, \bar{\theta}^{t-1}\} + \frac{c_k}{1-\rho_k^{'}}\big(\frac{\alpha}{1-\beta}\big)\\
	&=& \rho_k^{t}\|\bar{\bx}_k^{(0)} - \bx_{k}^{*}\| + b_k\beta^t +  c_k(t-1)\max\{\rho_k^{t-1}, \beta^{t-1}, \bar{\theta}^{t-1}\} + (\frac{c_k}{1-\rho_k} + b_k)\big(\frac{\alpha}{1-\beta}\big).
\end{eqnarray*}
This proves that $\bx_{i,k}^{(t)}$ converges to a neighborhood of $\bx_k^{*} = \bq_k$ or $\bx_k^{*} = -\bq_k$ at a linear rate.
\end{proof}
It is noteworthy that if decaying step sizes $\alpha_t$ are used such that $\alpha_t \rightarrow 0$ as $t \rightarrow \infty$ (instead of constant $\alpha$), the convergence will be exact but not linear. The rate in that case will be dominated by the rate of decay of $\alpha_t$.

\section{Experimental Results}\label{sec:experiments}
In this section, we provide results that demonstrate the efficacy of the proposed DSA algorithm. The need for collaboration between the nodes of a network is a vital part of any distributed algorithm, as already pointed out in Section~\ref{sec:problem}. We first verify that necessity along with the effect of step size on DSA by performing some experiments. In these experiments, the weight matrix $\bW$ that conforms to the underlying graph topology is generated using the Metropolis constant edge-weight approach~\cite{Boydfastestmixing.2003}. The performance of DSA in comparison to some baseline methods is also evaluated in additional experiments. We provide experimental results for DSA on synthetic and real data and compare the results with centralized generalized Hebbian algorithm (GHA)~\cite{sanger}, centralized orthogonal iteration (OI)~\cite{Golub}, distributed projected gradient descent (DPGD) and sequential distributed power method (SeqDistPM). For both the centralized methods, all the data is assumed to be at a single location with the difference being that GHA uses the Hebbian update whereas OI uses the well-known orthogonal iterations to estimate the top $K$ eigenvectors of the covariance matrix $\bC$. DPGD involves two significant steps per iteration. The first is a distributed gradient descent step at every node $i$ given by $\sum_{j\in \cN_i \cup \{i\}}w_{ij}\bX_j + \alpha\nabla f_i(\bX_i)$ as in~\cite{dgd} using trace maximization $f_i(\bX_i) = \max \text{Trace}(\bX_i^\tT\bC_i\bX_i)$ as the objective. This is followed by a projection step to ensure the orthogonality constraint $\bX_i^{\tT}\bX_i = \bI$. The orthogonalization is accomplished using QR decomposition, an approach that ensures projection onto the Stiefel manifold~\cite{absil.opt.matrixmanifolds.2007} and whose computational complexity is $\cO(K^2d)$, at each node in each iteration. In contrast, SeqDistPM involves implementing the distributed power method~\cite{cksvd.allerton.2013, cksvd} $K$ times, estimating one eigenvector at a time and subtracting its impact on the covariance matrix for the estimation of subsequent eigenvectors. Note that SeqDistPM requires a finite $T_c$ number of consensus iterations per iteration of the power method. Assuming the cost of communicating one $\R^{d\times K}$ matrix across the network from nodes to their neighbors to be one unit, the communication cost of SeqDistPM is $T_c/K$ per iteration of the power method. The error metric used for comparison and reporting of the results is the average of the angles between the estimated and true eigenvectors, i.e., if $\bx_{i,k}$ is the estimate of the $k^{th}$ eigenvector at $i^{th}$ node and $\bq_k$ is the true $k^{th}$ eigenvector then the average error across all nodes is calculated as follows:
\begin{equation}
    E = \frac{1}{MK}\sum_{i=1}^{M}\sum_{k=1}^{K}\bigg(1 - \big(\frac{\bx_{i,k}^T\bq_k}{\|\bx_{i,k}\|}\big)^2\bigg).
\end{equation}
\subsection{Synthetic Data}

\begin{figure}[t]
    \centering
    \begin{subfigure}{.4\textwidth}
     \centering   
     \includegraphics[width=\linewidth]{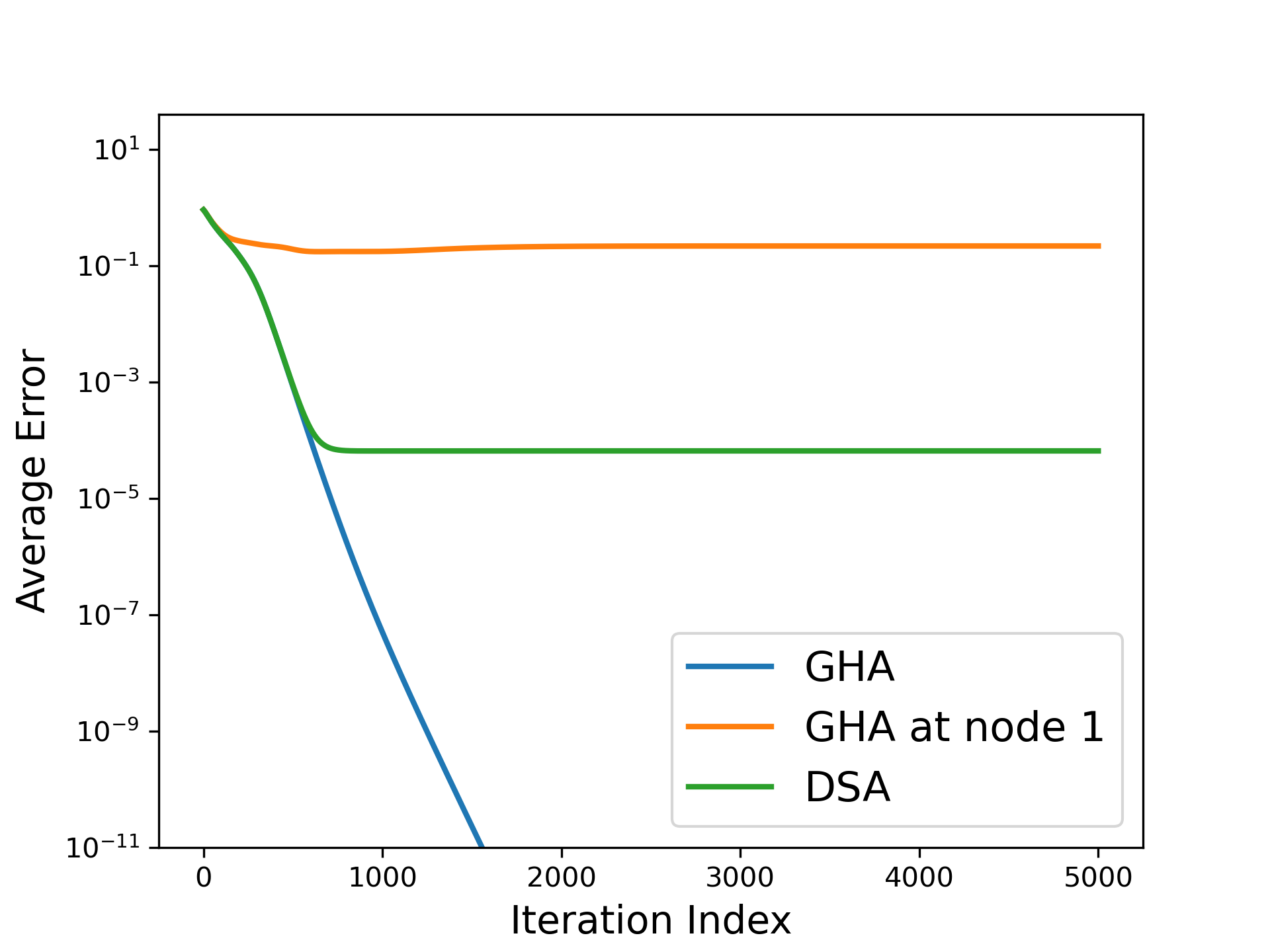}
    \caption{Demonstration of the need for collaboration among the nodes in a network for the PCA problem}
    \label{fig:compare_dsa_gha}
    \end{subfigure}
    \hfil
    \begin{subfigure}{.4\textwidth}
     \centering   
     \includegraphics[width=\linewidth]{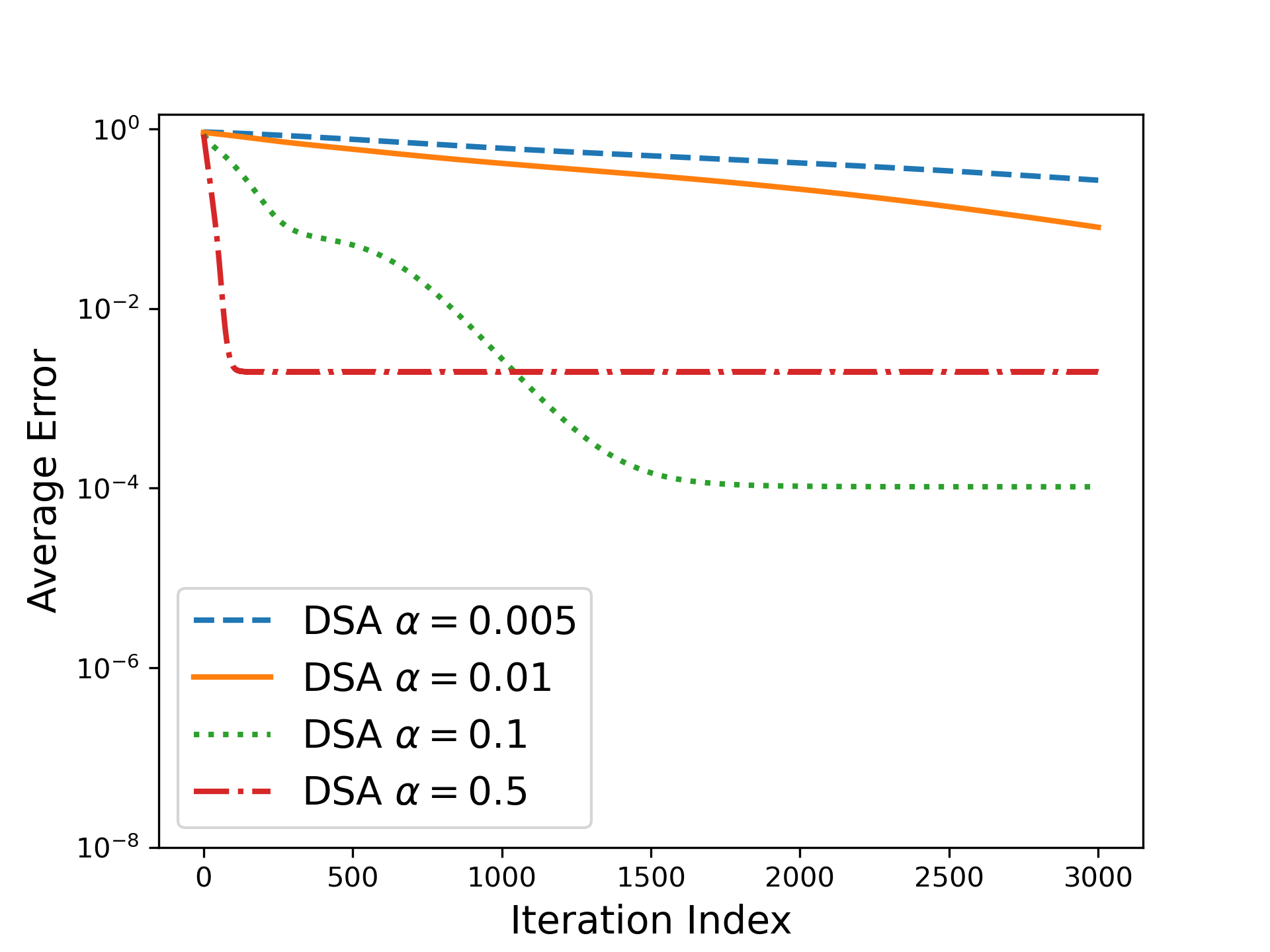}
    \caption{Effect of varying the step size $\alpha$ on the performance of DSA}
    \label{fig:effect_of_alpha}
    \end{subfigure}
    \caption{The role of collaboration in the distributed PCA problem and the effect of changing the step size on the performance of DSA. The distributed setup corresponds to an Erdos--Renyi graph ($p=0.5$) with $M=10$ nodes, while the dimension of data is $d=10$ and the number of estimated eigenvectors is $K=3$.}
	\label{fig:dsa}
\end{figure}
We first show results that emphasize on the need for collaboration among the nodes. To that end, we generate $N = 10,000$ independent and identically distributed (i.i.d.) samples drawn from a multivariate Gaussian distribution with an eigengap $\Delta_K = \frac{\lambda_{K+1}}{\lambda_K} = 0.8$ and dimension $d = 10$. These samples are distributed equally among the $M=10$ nodes of an Erdos--Renyi network (with connectivity probability $p = 0.5$), implying that each node has 1,000 samples. The number of eigenvectors estimated is $K=3$ and a constant step size of $\alpha = 0.1$ is used for this experiment. Figure~\ref{fig:compare_dsa_gha} shows the effect of using the GHA at a node without collaboration with other nodes versus DSA, which in simple terms embodies GHA + collaboration in the network. The blue line indicating GHA in the figure is the result of using all the data in a centralized manner. It is clear that the lack of any communication between nodes increases the error in estimation of the eigenvectors by a significant factor. In Figure~\ref{fig:effect_of_alpha}, we use the same setup and parameters to show the effect of different step sizes on our proposed DSA algorithm. It is evident that if the step size is too low, the convergence becomes significantly slow, while if its high, the final error is larger. Hence, careful choice of the step size is required for DSA, as characterized by its convergence analysis.

\begin{figure}[t]
	\centering
	\begin{subfigure}{.3\textwidth}
		\centering
		\includegraphics[width=\linewidth]{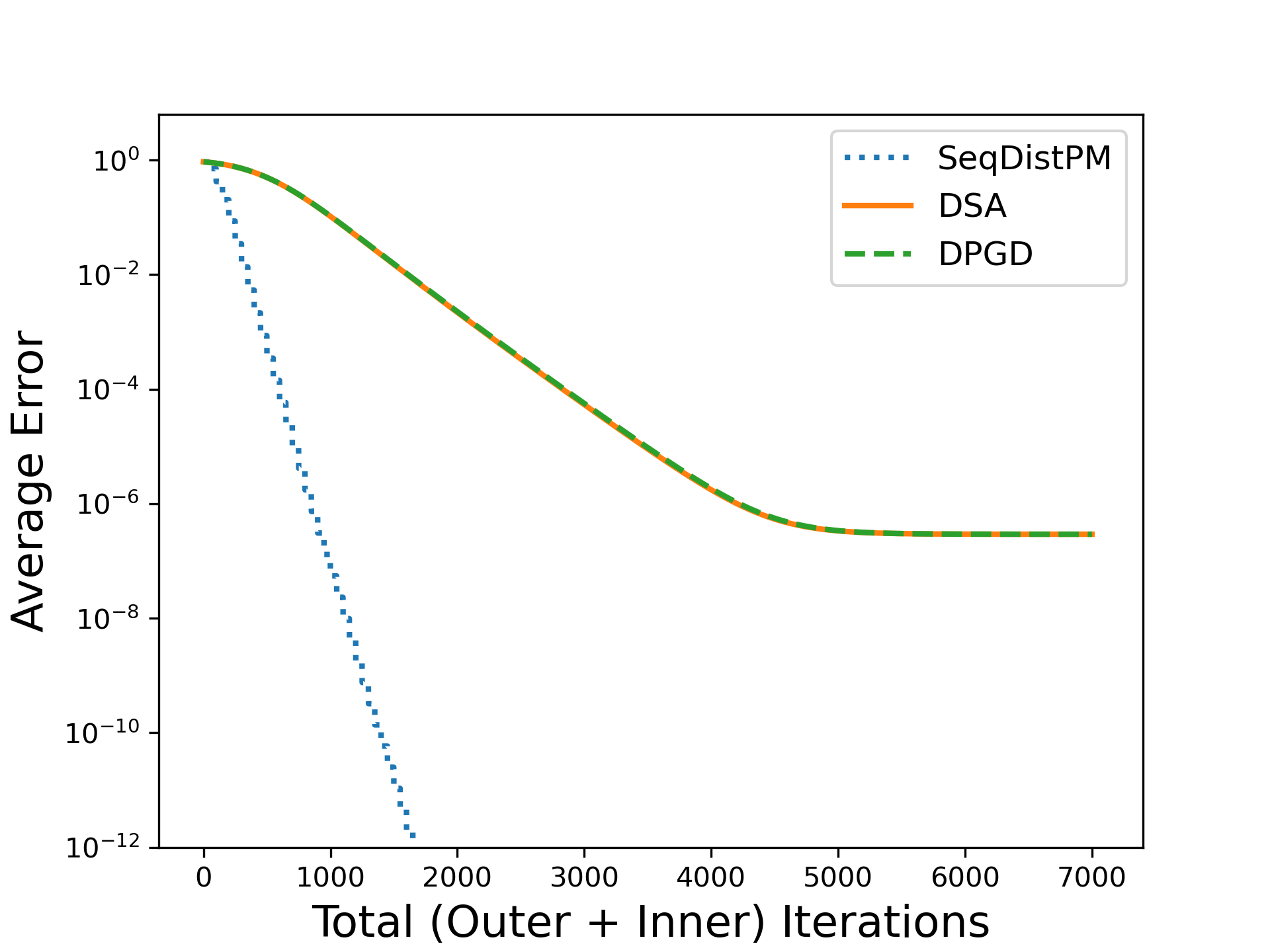}
		\caption{Erdos--Renyi network}
		\label{fig:a1}
	\end{subfigure}
	\hfil
	\begin{subfigure}{.3\textwidth}
		\centering
		\includegraphics[width=\linewidth]{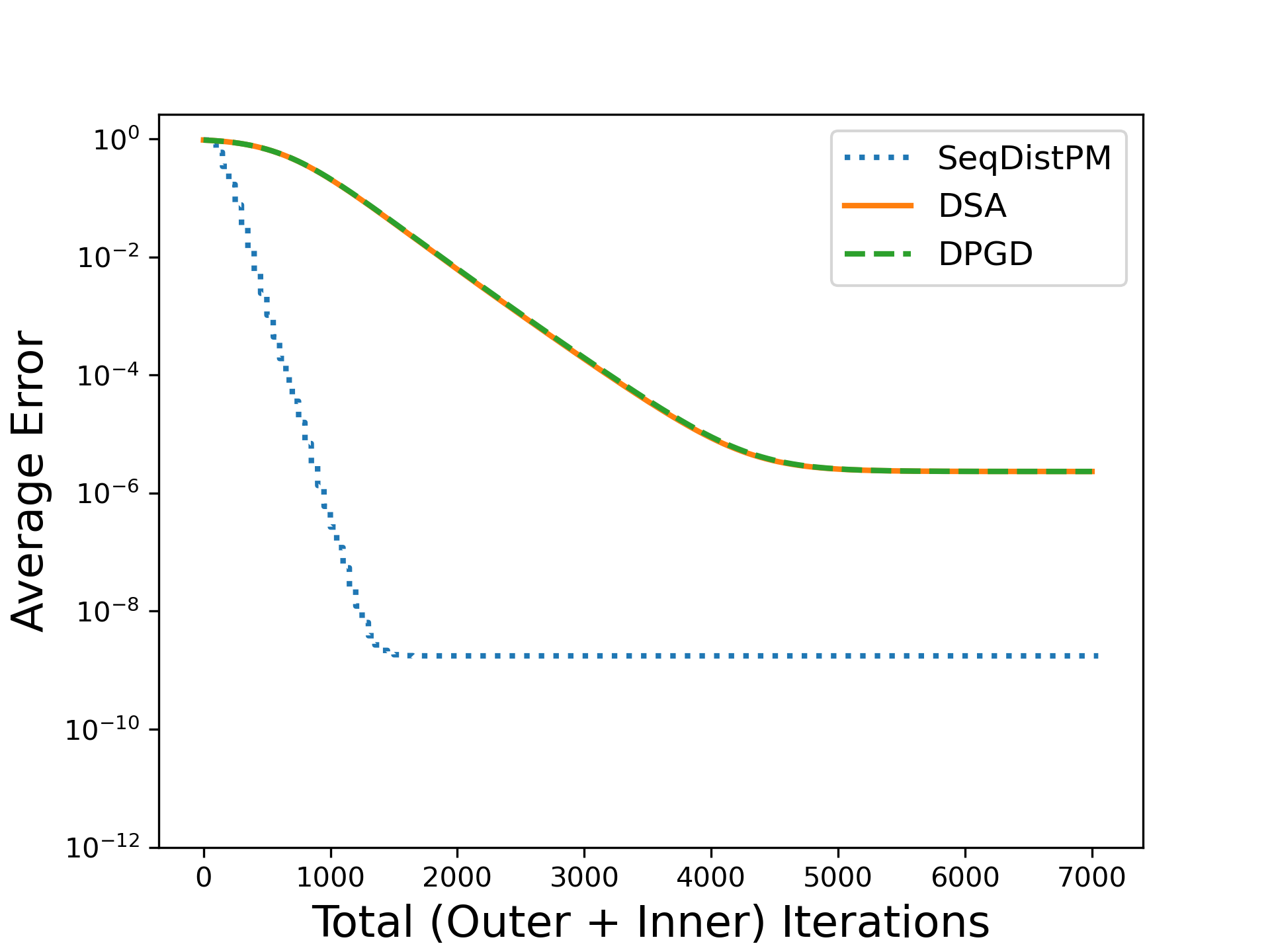}
		\caption{Cyclic network}
		\label{fig:b1}
	\end{subfigure}
	\hfil
	\begin{subfigure}{.3\textwidth}
		\centering
		\includegraphics[width=\linewidth]{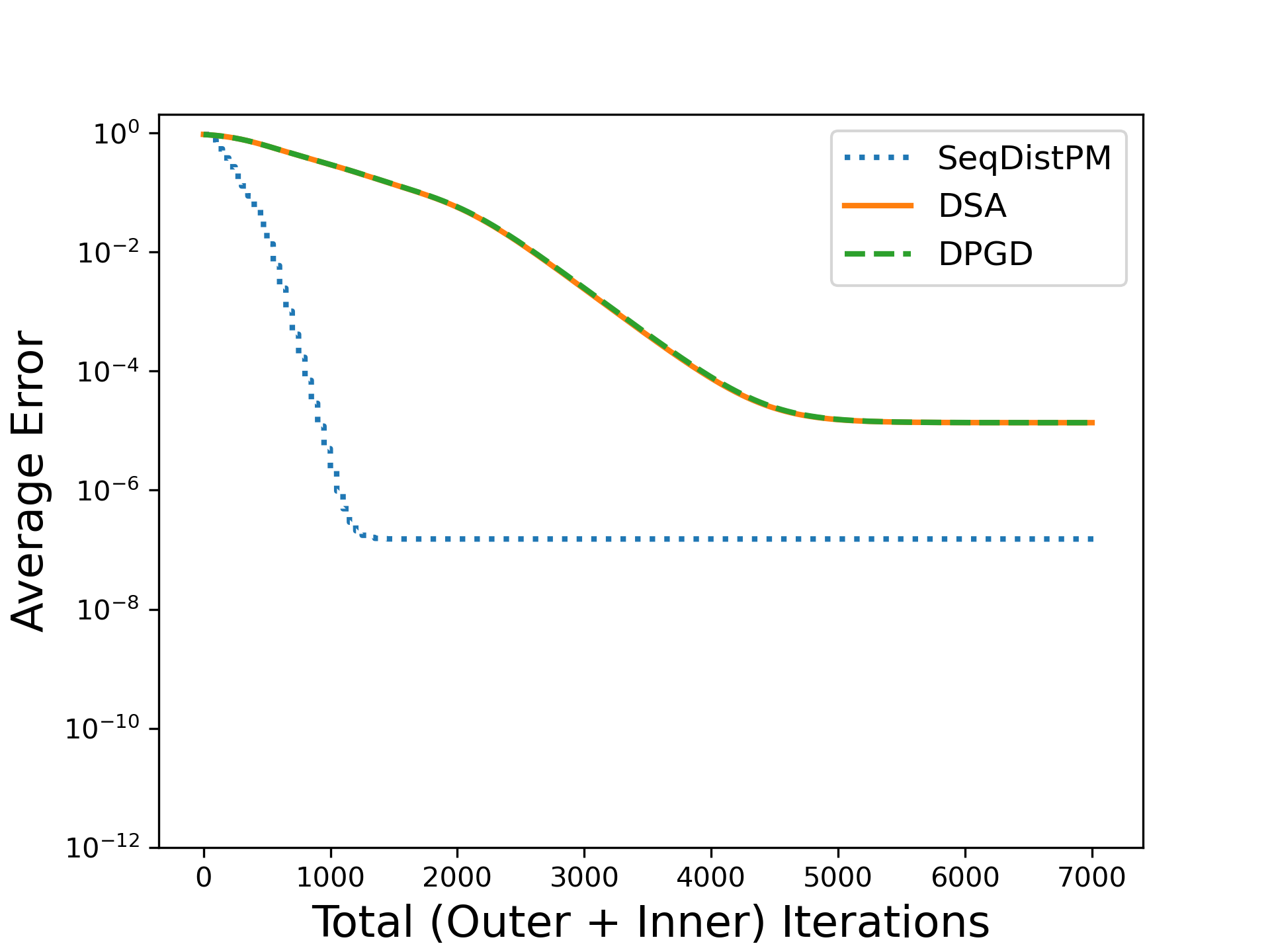}
		\caption{Star network}
		\label{fig:c1}
	\end{subfigure}
	\caption{Comparison between the performances of DSA, DPGD and SeqDistPM for $K=1$ and $\Delta_K = 0.8$ in terms of communications efficiency, i.e., decrease in average estimation error as a function of the number of data units communicated throughout the network.}
	\label{fig:k1}
\end{figure}
Next, we compare DSA with the distributed methods of DPGD and SeqDistPM to demonstrate its communication efficiency. For that purpose, we generate synthetic data with different eigengaps $\Delta_K \in \{0.6, 0.8\}$. We simulate the distributed setup for Erdos-Renyi ($p = 0.5$), star and cycle graph topologies with $M = 10$ nodes. The data is generated so that each node has 1,000 i.i.d samples ($N_i = 1000$) drawn from a multivariate Gaussian distribution for $d = 20$, i.e., the total samples generated are 10,000. The dimension of the subspace to be estimated is taken to be $K \in \{1, 5\}$. We use $T_c = 50$ as the number of consensus iterations per power iteration for SeqDistPM throughout out experiments. The results reported are an average of 10 Monte-Carlo trials. Figure~\ref{fig:k1} shows the performance of different algorithms for the estimation of the most dominant eigenvector for different network topologies. It is clear that for $K=1$ SeqDistPM outperforms both DSA and DPGD in terms of communications efficiency because it is basically distributed power method, which is shown in~\cite{cksvd,cksvd.allerton.2013} to have good performance for $K=1$. Even though DSA and DPGD have the same performance in terms of communications cost, it is important to remember that DPGD requires an additional QR normalization step per communications round. Next, Figure~\ref{fig:k5} shows a comparison between the three algorithms when the top-5 eigenvectors are estimated i.e., $K=5$. It is clear that while estimating higher-order eigenvectors, DSA slightly outperforms DPGD without performing explicit QR normalization and it also has much better communications efficiency than SeqDistPM. The error for SeqDistPM is significantly high in the beginning because of the sequential estimation, which means that when the first (higher-order) eigenvector(s) is (are) being estimated, the lower-order estimates are still at their initial values and hence those contribute significant error even when the first or higher order terms have low error. After a sufficiently large number of communications rounds, SeqDistPM eventually does reach a lower final error compared to DSA. But this comes at the expense of slower convergence as a function of communications costs. It should also be noted that SeqDistPM lacks a formal convergence analysis and has two time scales that need to be adjusted as both contribute to the final error. Finally, the benefits of DSA over DPGD are twofold. First, DSA reaches similar or better error floor without explicit QR normalization, thus saving $\cO(K^2d)$ computations per iteration; and second, the convergence guarantees for gradient descent-based algorithms for non-convex problems like the PCA have limitations. The guarantees usually exist for convergence to a stationary solution with a sub-linear rate.

\begin{figure}[t]
	\centering
	\begin{subfigure}{.3\textwidth}
		\centering
		\includegraphics[width=\linewidth]{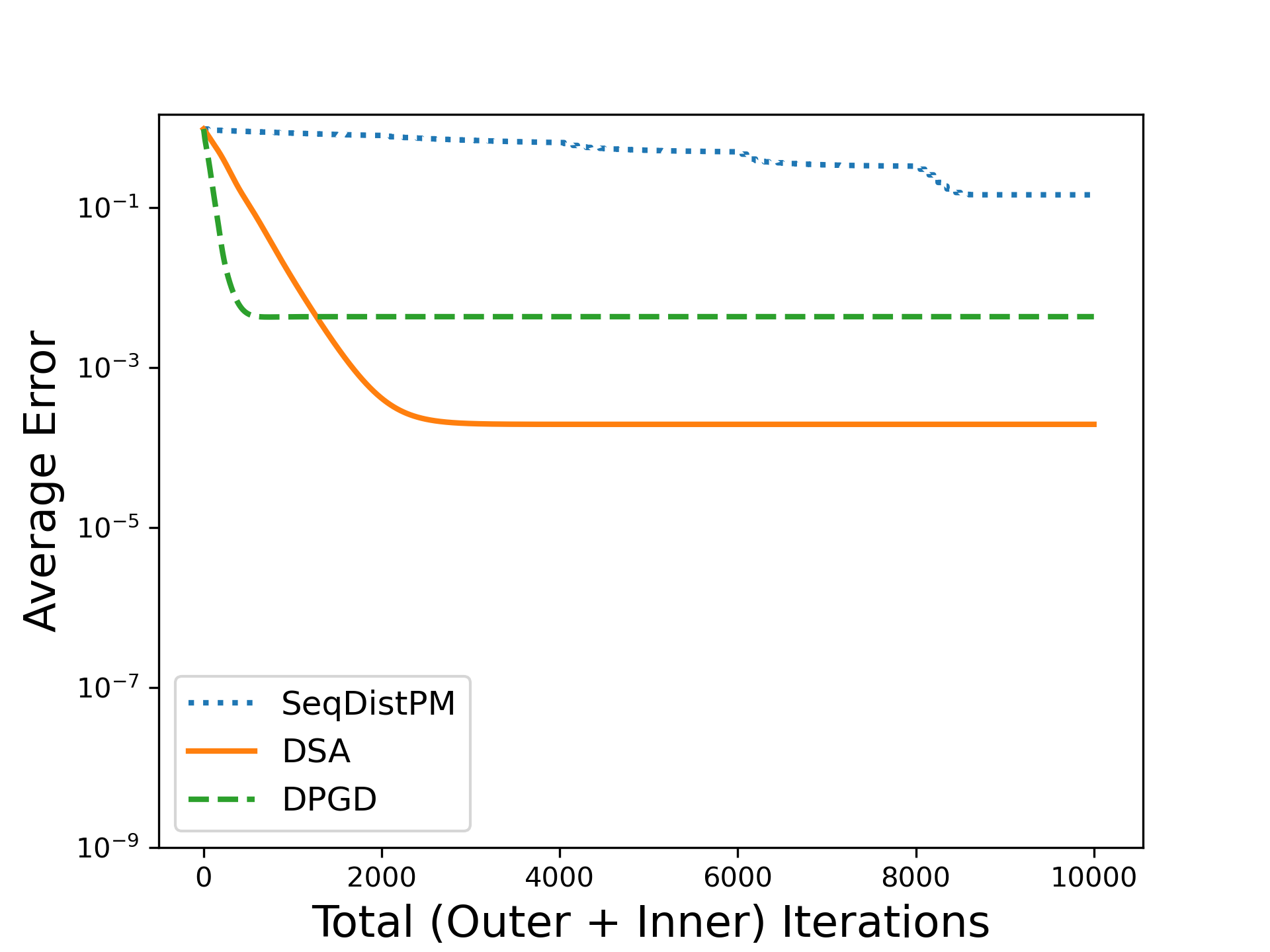}
		\caption{Erdos--Renyi network, $\Delta_K = 0.6$}
		\label{fig:a}
	\end{subfigure}
	\hfil
	\begin{subfigure}{.3\textwidth}
		\centering
		\includegraphics[width=\linewidth]{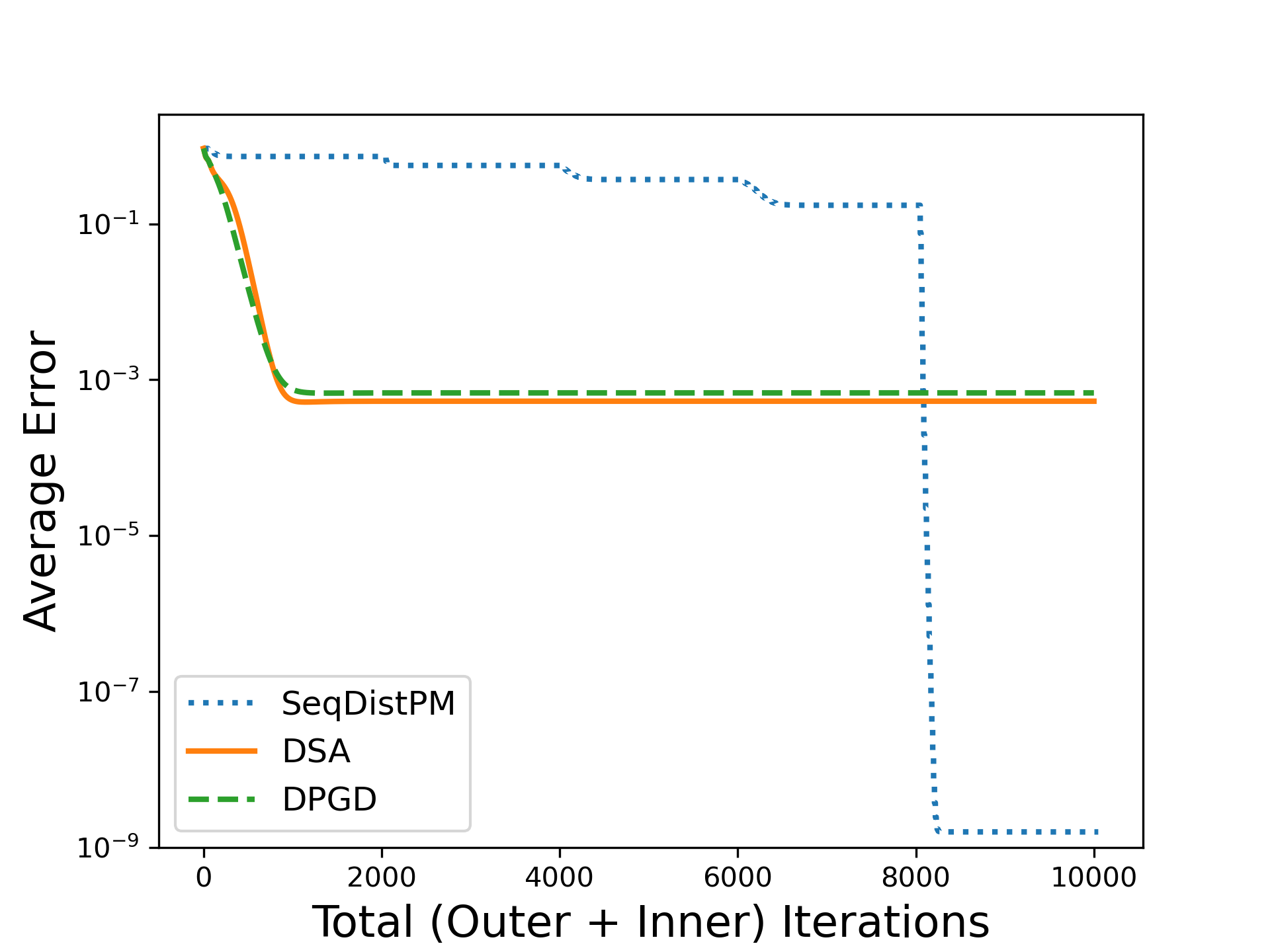}
		\caption{Cyclic network, $\Delta_K = 0.6$}
		\label{fig:b}
	\end{subfigure}
	\hfil
	\begin{subfigure}{.3\textwidth}
		\centering
		\includegraphics[width=\linewidth]{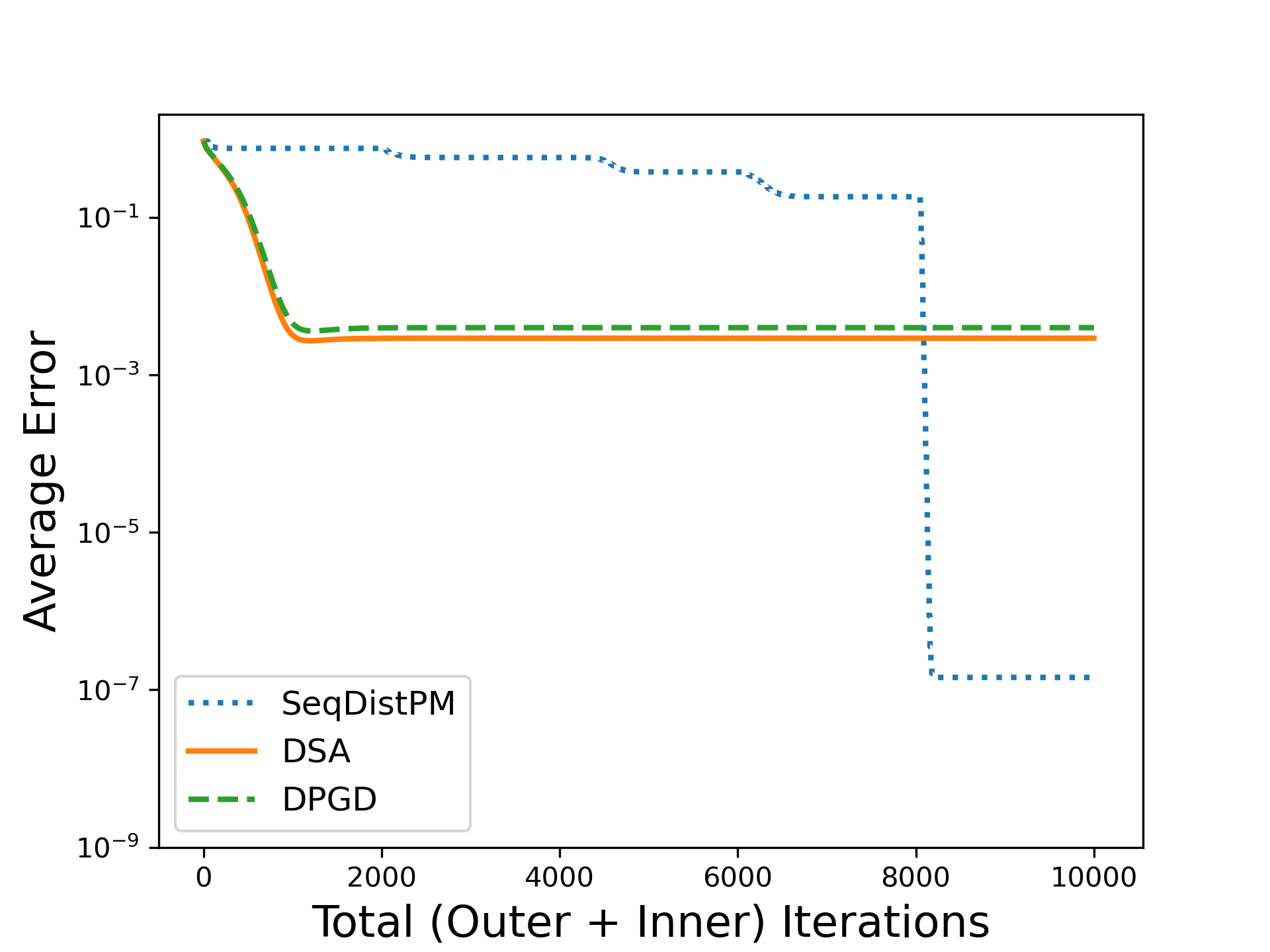}
		\caption{Star network, $\Delta_K = 0.6$}
		\label{fig:c}
	\end{subfigure}
	\hfil
	\begin{subfigure}{.3\textwidth}
		\centering
		\includegraphics[width=\linewidth]{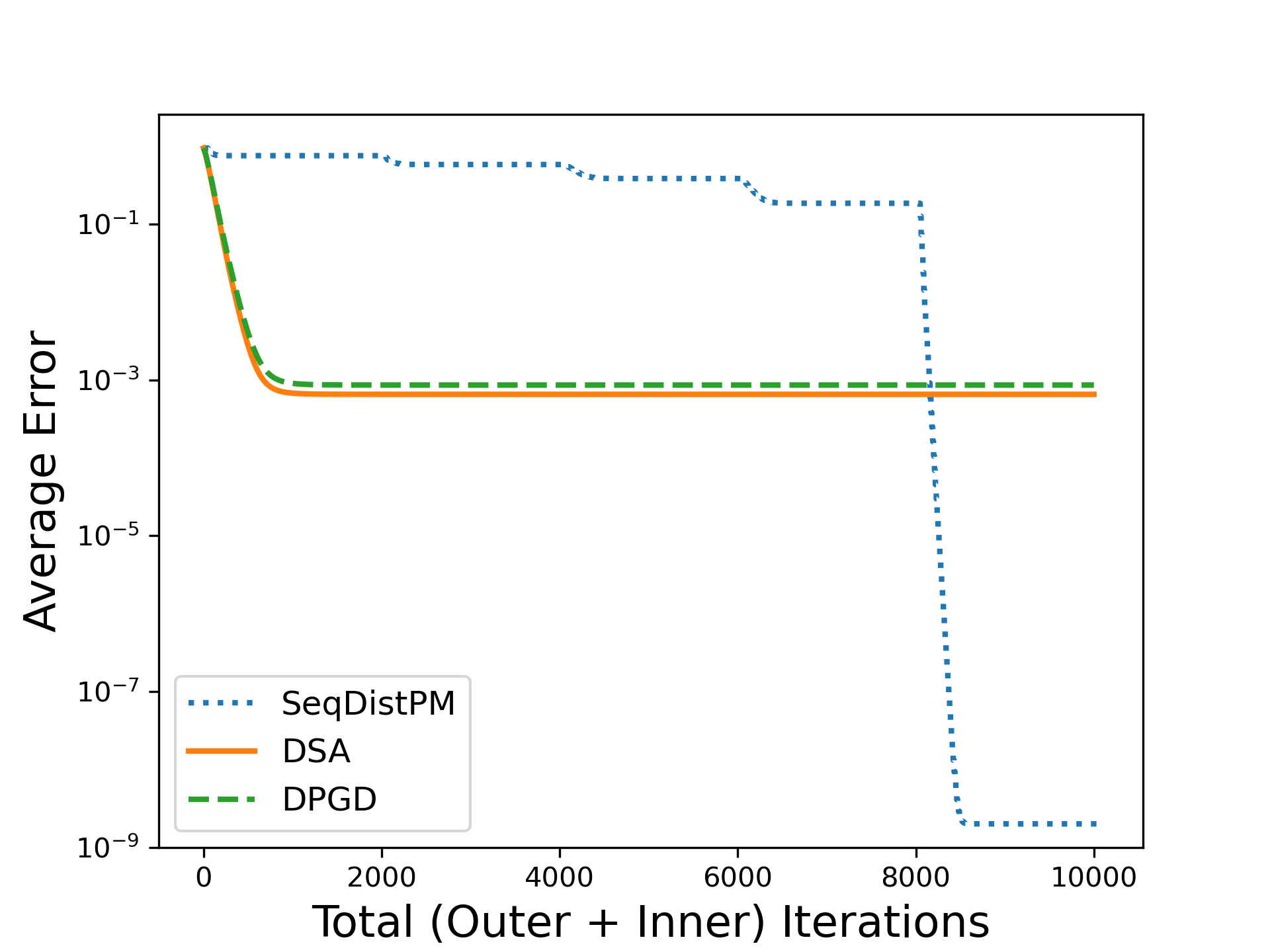}
		\caption{Erdos--Renyi network, $\Delta_K = 0.8$}
		\label{fig:d}
	\end{subfigure}
	\hfil
	\begin{subfigure}{.3\textwidth}
		\centering
		\includegraphics[width=\linewidth]{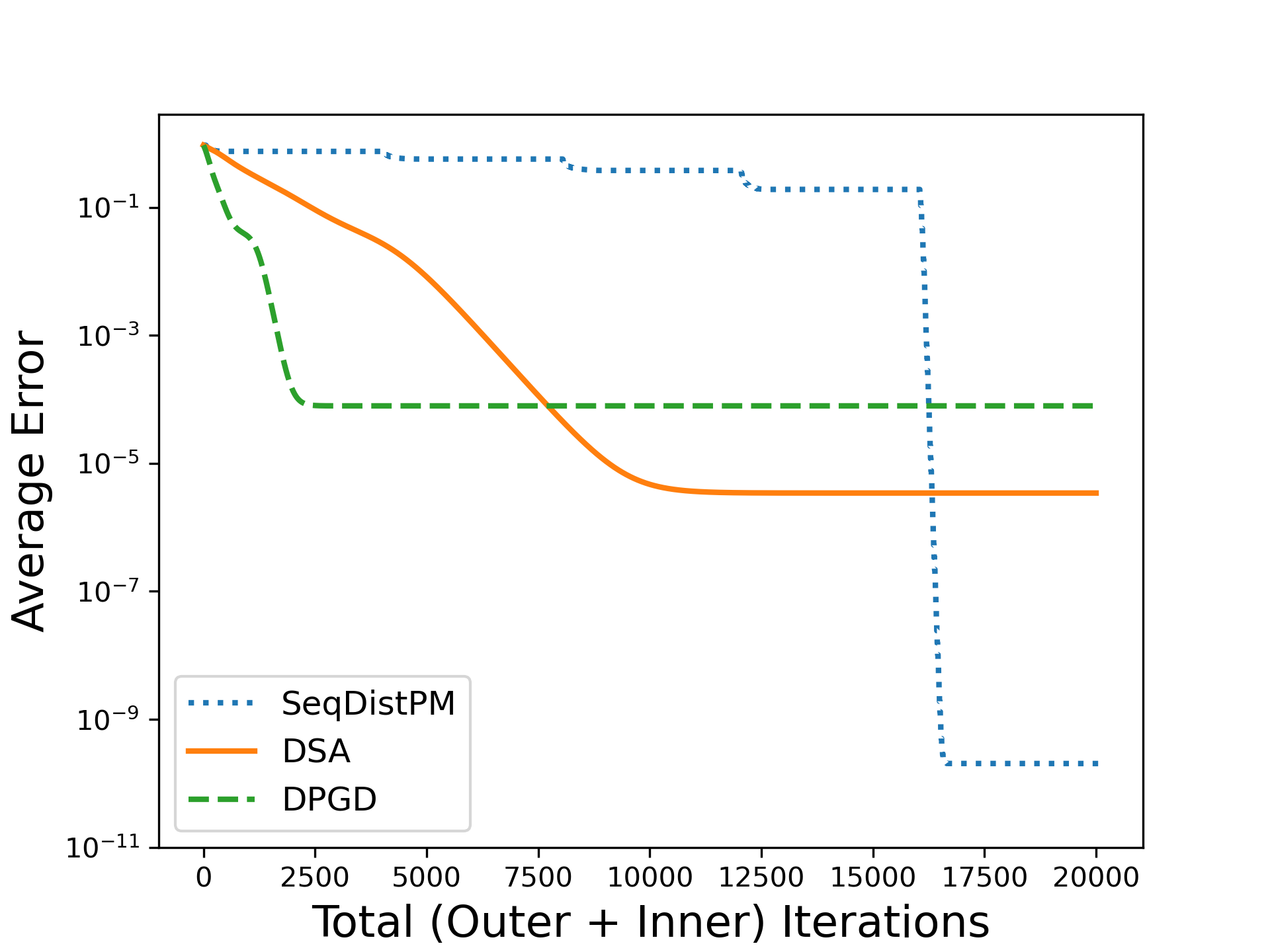}
		\caption{Cyclic network, $\Delta_K = 0.8$}
		\label{fig:e}
	\end{subfigure}
	\hfil
	\begin{subfigure}{.3\textwidth}
		\centering
		\includegraphics[width=\linewidth]{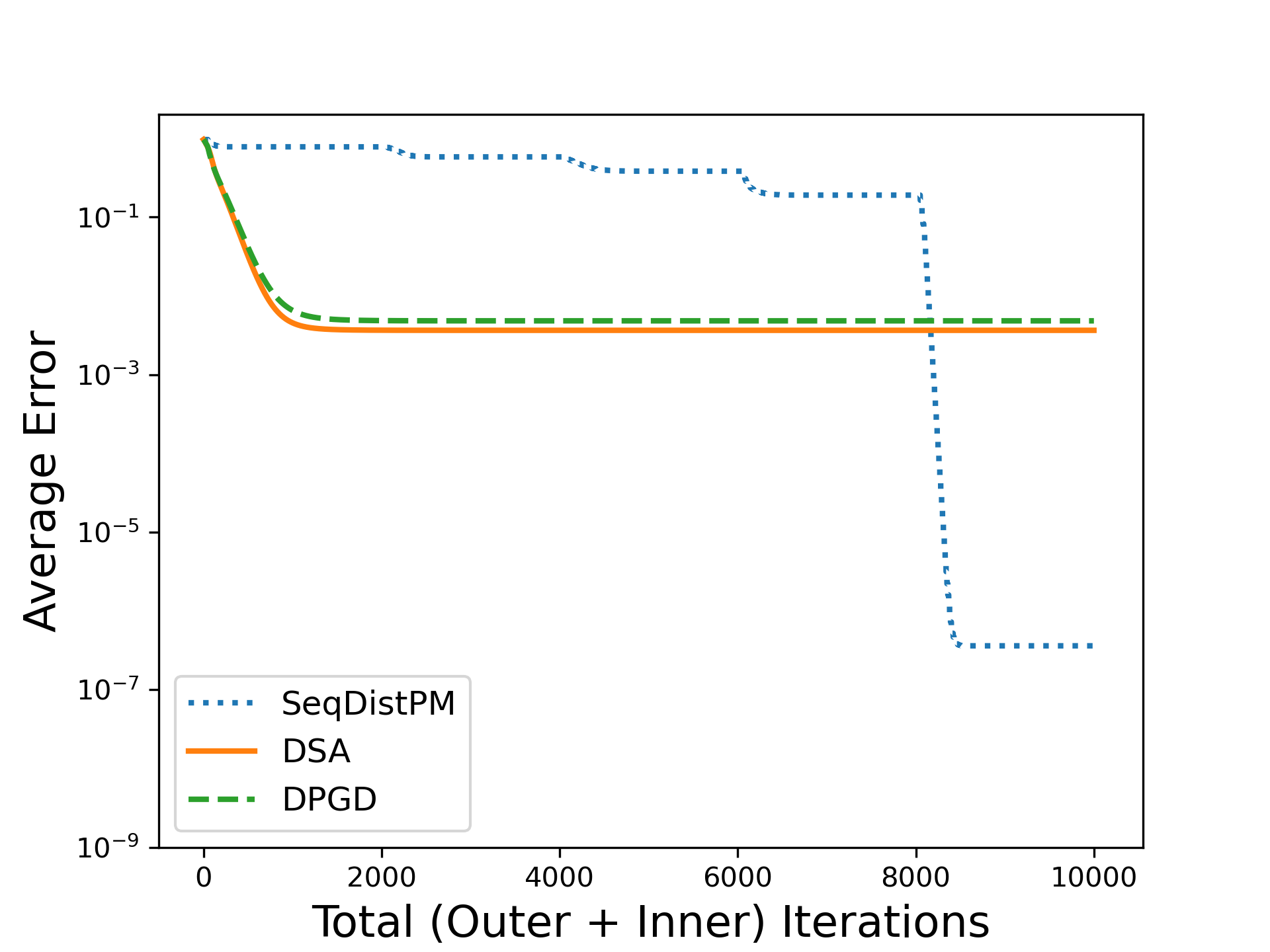}
		\caption{Star network, $\Delta_K = 0.8$}
		\label{fig:f}
	\end{subfigure}
	\caption{Comparison between DSA, DPGD, and SeqDistPM for $K=5$ in terms of communications efficiency.}
	\label{fig:k5}
\end{figure}

\subsection{Real-World Data}
Along with the synthetic data experiments, we provide some experiments with real-world datasets of MNIST~\cite{mnist.2010} and CIFAR-10~\cite{cifar.2009}. For the distributed setup in this case, we use an Erdos--Renyi graph with $M = 20$ nodes and $p=0.5$. Both the datasets have 60,000 samples, thereby making the number of samples per node to be $N_i = 3000$. The data dimension for MNIST is $d = 784$ and a constant step size of $\alpha = 0.1$ was used. The plots in Figure~\ref{fig:mnista} and Figure~\ref{fig:mnistb} show the results for $K \in \{10, 40\}$ for MNIST. Similar plots are shown for CIFAR-10 in Figure~\ref{fig:cifara} and Figure~\ref{fig:cifarb}, where the dimension $d$ for CIFAR-10 is 1024, the number of estimated eigenvectors $K \in \{10,20\}$ and a constant step size of $\alpha = 0.7$ is used. For these real-world data sets, we exclude the comparison with SeqDistPM as it is evident this method requires much higher cost of communications for estimating larger number of eigenvectors. 
\begin{figure}[t]
	\centering
	\begin{subfigure}{.45\textwidth}
		\centering
		\includegraphics[width=\linewidth]{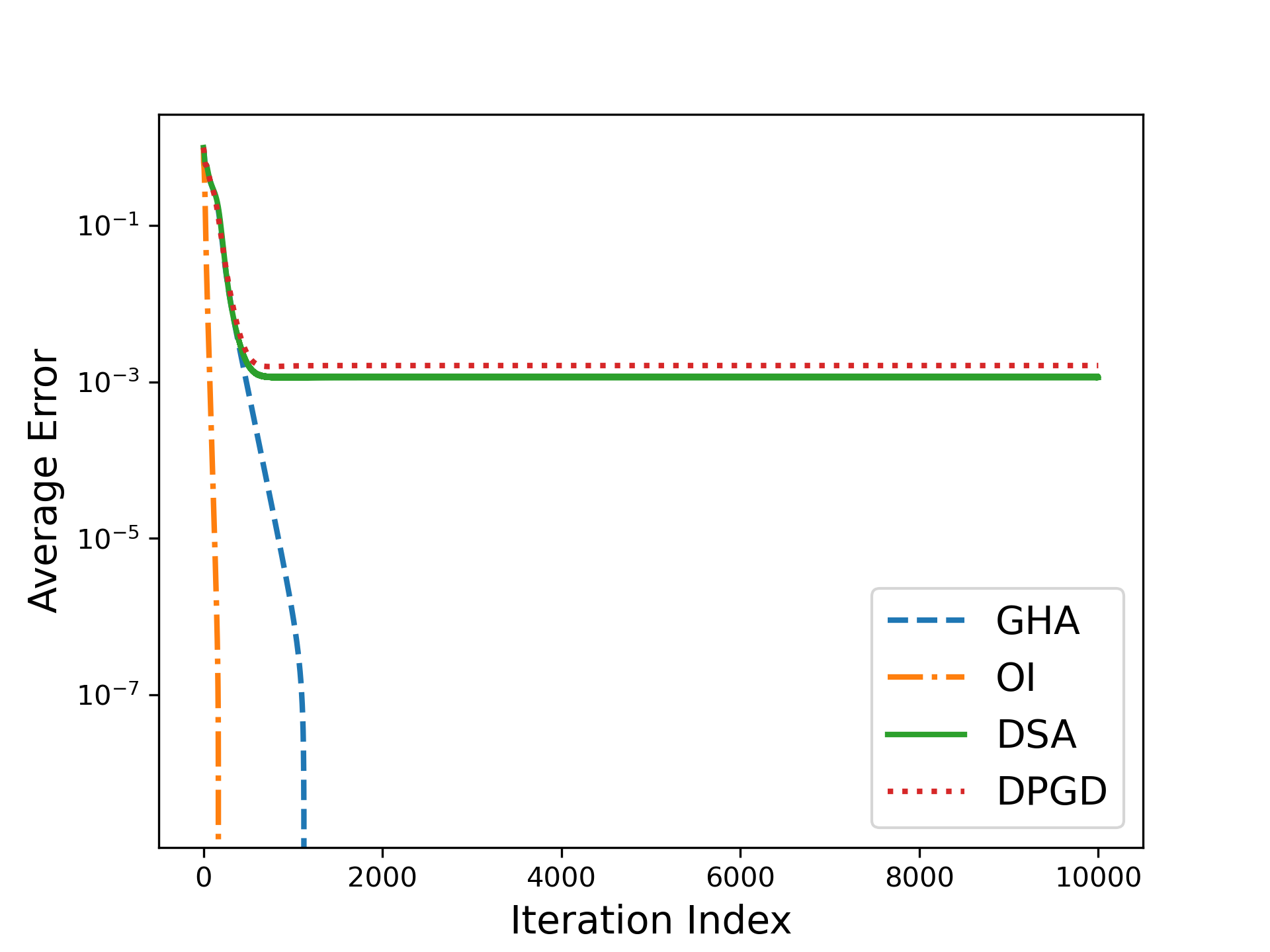}
		\caption{MNIST, $K = 10$}
		\label{fig:mnista}
	\end{subfigure}
	\begin{subfigure}{.45\textwidth}
		\centering
		\includegraphics[width=\linewidth]{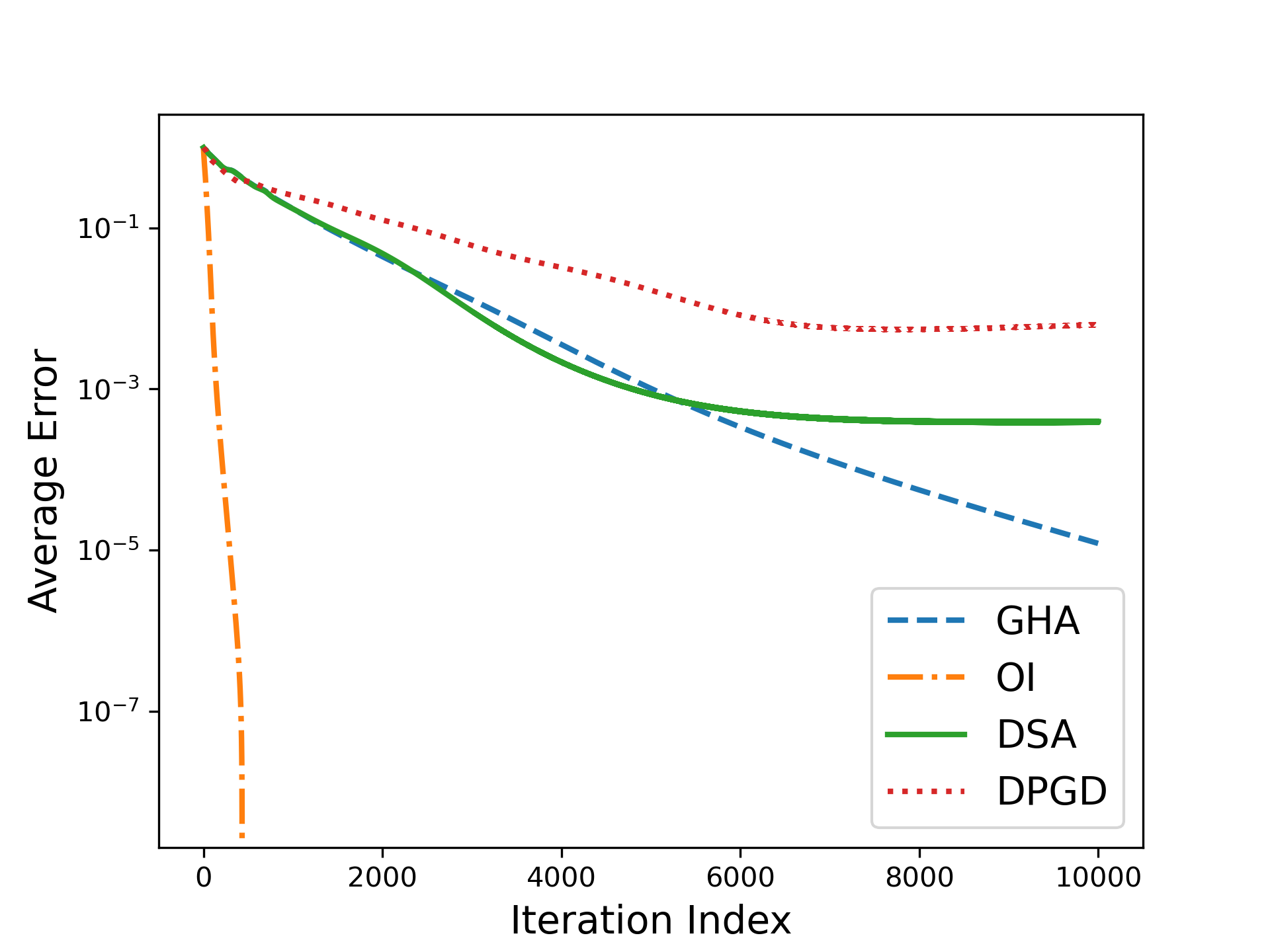}
		\caption{MNIST, $K = 40$}
		\label{fig:mnistb}
	\end{subfigure}
	\caption{Comparison between DSA, OI, GHA, and DPGD for MNIST dataset as a function of the number of algorithmic iterations.}
	\label{fig:mnist}
\end{figure}
	
\begin{figure}[t]
	\centering
	\begin{subfigure}{.45\textwidth}
		\centering
		\includegraphics[width=\linewidth]{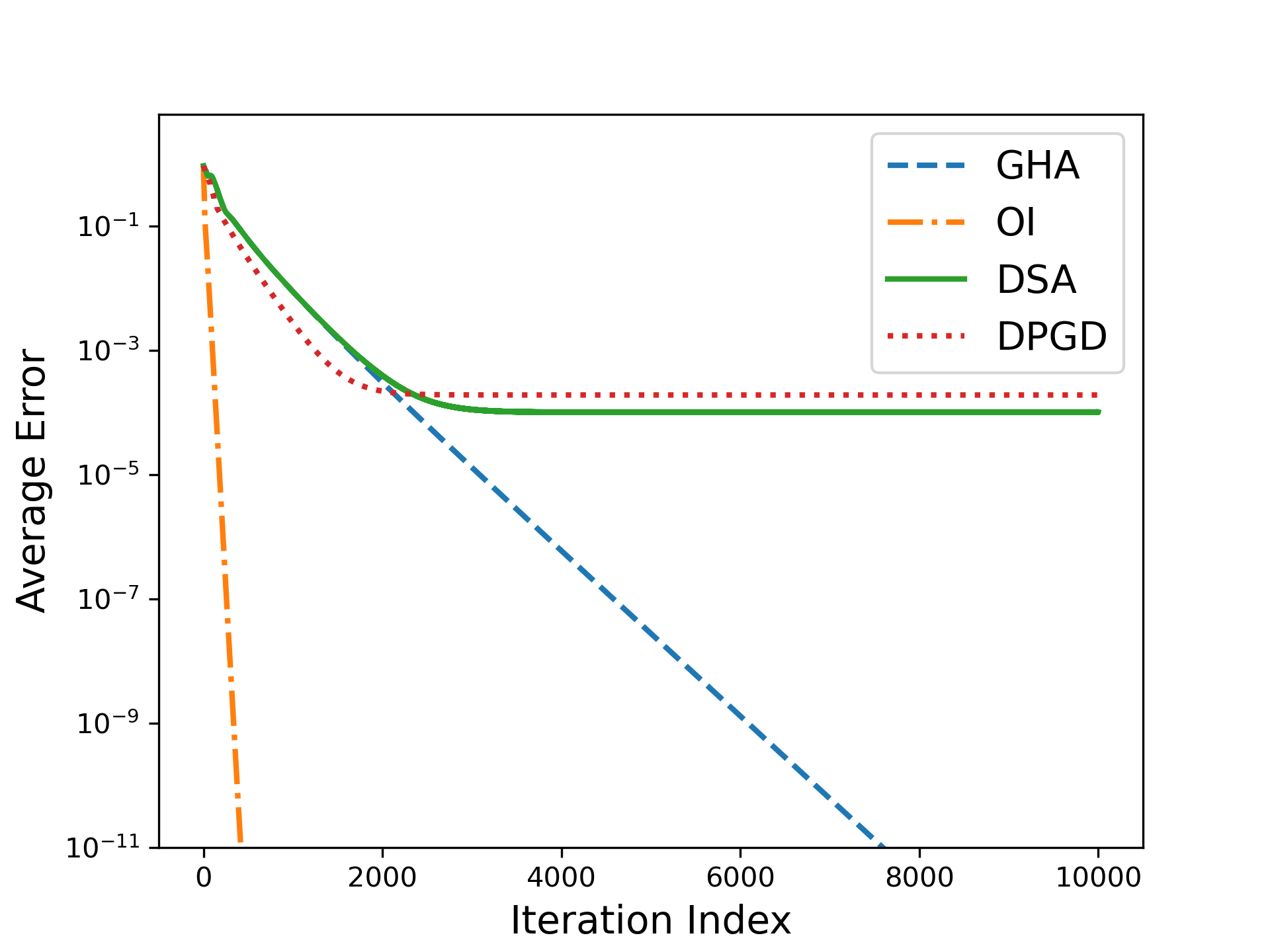}
		\caption{CIFAR-10, $K = 10$}
		\label{fig:cifara}
	\end{subfigure}
	\begin{subfigure}{.45\textwidth}
		\centering
		\includegraphics[width=\linewidth]{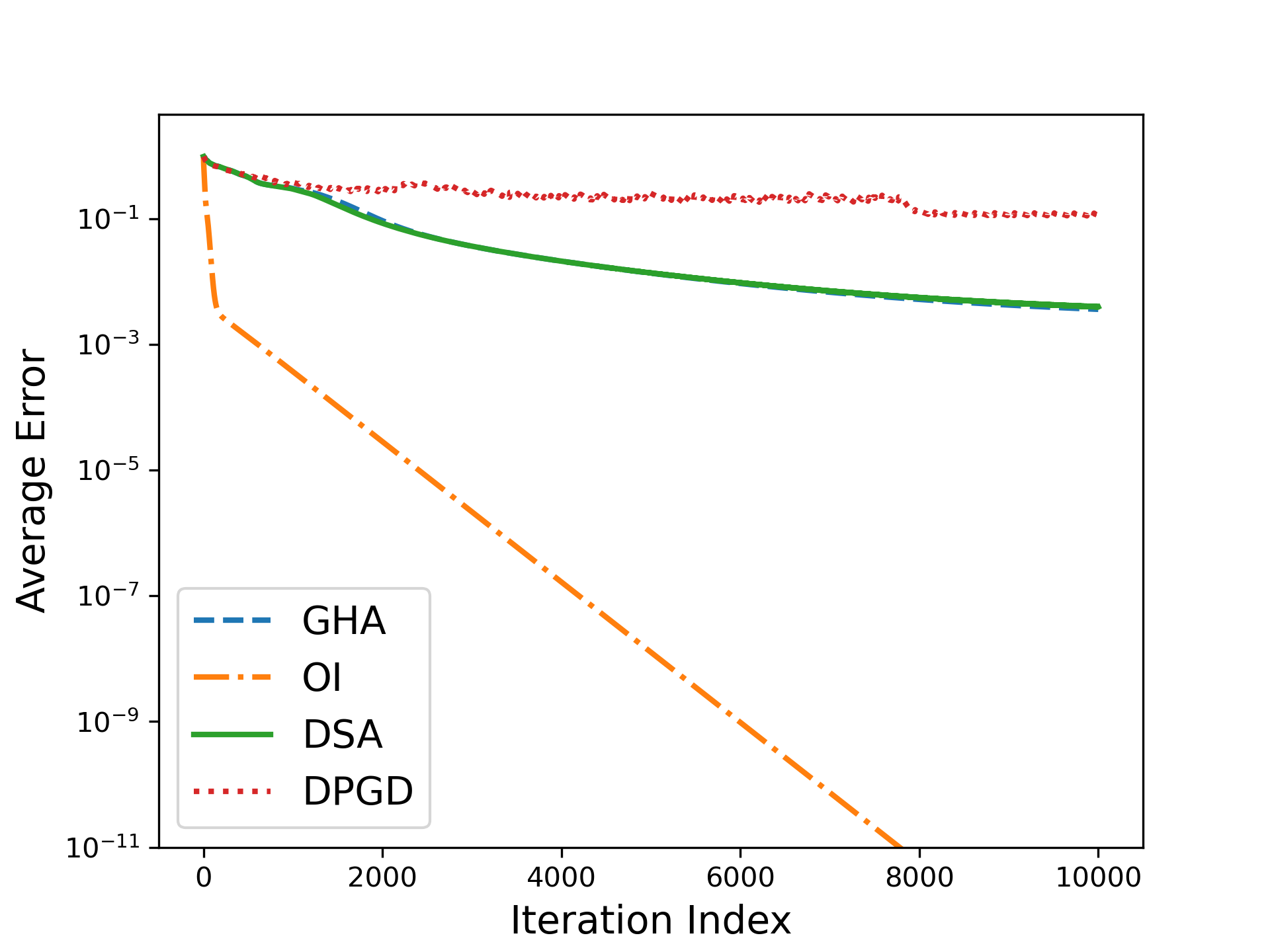}
		\caption{CIFAR-10, $K = 20$}
		\label{fig:cifarb}
	\end{subfigure}
	\caption{Comparison between DSA, OI, GHA, and DPGD for CIFAR-10 dataset as a function of the number of algorithmic iterations.}
	\label{fig:cifar}
\end{figure}

\section{Conclusion}\label{sec:conc}
In this paper, we proposed and analyzed a new distributed Principal Component Analysis (PCA) algorithm that, as opposed to distributed subspace learning methods, facilitates both dimensionality reduction and data decorrelation in a distributed setup. Our main contribution in this regard was a detailed convergence analysis to prove that the proposed distributed method linearly converges to a neighborhood of the eigenvectors of the global covariance matrix. We also provided numerical results to demonstrate the communications efficiency and overall effectiveness of the proposed algorithm. 

In terms of future work, an obvious extension would be a distributed algorithm that enables \textit{exact} convergence to the PCA solution at a linear rate. 
Note that the use of a diminishing step size $\alpha$ along with the analysis in this paper already guarantees that DSA can converge exactly to the PCA solution. However, this exact convergence guarantee comes at the expense of a slow convergence rate. We instead expect to combine ideas from this work as well as ideas such as \emph{gradient tracking} from the literature on distributed optimization~\cite{extra, next, qu.li.2018} to develop a linearly convergent, exact algorithm for distributed PCA in the future. Another possible future direction involves developing an algorithm for distributed PCA that does not require the top $K$ eigenvalues to be distinct. We also leave the case of multiple eigenvector estimation from distributed, streaming data, as in \cite{raja.bajwa.2020}, for future work.
\begin{appendices} 
\section{Statements and Proofs of Auxiliary Lemmas}\label{app:aux_lemma}
\subsection{Statement and Proof of Lemma~\ref{lemma:bounded_iterate_rayleigh}}\label{app:lemma1}
\begin{lemma}\label{lemma:bounded_iterate_rayleigh}
   	Assume $\|\bx_{k}^{(0)}\| = 1,~\forall k$. If the step size is bounded above as $\alpha \leq \frac{1}{3\lambda_1{(2K-1)}}$, where $\lambda_1$ is the largest eigenvalue of $\bC$ and $K$ is the number of eigenvectors to be estimated, then
   \begin{equation}
   	\forall t, \quad \|\bx_{k}^{(t)}\| < \sqrt{3} \quad \text{and} \quad (\bx_k^{(t)})^T\bC\bx_k^{(t)} < \frac{1}{\alpha}.
   \end{equation}
\end{lemma}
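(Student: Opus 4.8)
The plan is to reduce everything to the single inequality $\|\bx_k^{(t)}\|^2 < 3$, proved by induction on $t$. The Rayleigh‑quotient bound is then automatic: since $\bC=\tfrac1N\bY\bY^{\tT}\succeq 0$ we have $(\bx_k^{(t)})^{\tT}\bC\bx_k^{(t)}\leq\lambda_1\|\bx_k^{(t)}\|^2<3\lambda_1\leq\tfrac1\alpha$, the last step using $\alpha\leq\frac{1}{3\lambda_1(2K-1)}\leq\frac{1}{3\lambda_1}$ (valid for $K\geq1$). So it suffices to control the norm.

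I would work in the eigenbasis of $\bC$. Write $z_l:=z_{k,l}^{(t)}$ for the coefficients in \eqref{eq:expansion_centralized}, and set $s_t:=\|\bx_k^{(t)}\|^2=\sum_l z_l^2$ and $r_t:=(\bx_k^{(t)})^{\tT}\bC\bx_k^{(t)}=\sum_l\lambda_l z_l^2$; positive semidefiniteness gives $0\leq r_t\leq\lambda_1 s_t$. Squaring the scalar recursions \eqref{eq:eqn_z_lower}--\eqref{eq:eqn_z_upper} gives
\[
s_{t+1}=\sum_{l<k}(1-\alpha r_t)^2 z_l^2+\sum_{l\geq k}\big(1+\alpha(\lambda_l-r_t)\big)^2 z_l^2 .
\]
Collecting powers of $\alpha$: the coefficient of $\alpha$ is $2\big(\sum_{l\geq k}\lambda_l z_l^2-r_t s_t\big)\leq 2r_t(1-s_t)$, since $\sum_{l\geq k}\lambda_l z_l^2\leq\sum_l\lambda_l z_l^2=r_t$; and the coefficient of $\alpha^2$ is $r_t^2\sum_{l<k}z_l^2+\sum_{l\geq k}(\lambda_l-r_t)^2 z_l^2\leq\lambda_1 r_t+r_t^2 s_t$ (use $\sum_{l\geq k}\lambda_l^2 z_l^2\leq\lambda_1 r_t$ and discard the nonpositive term $-2r_t\sum_{l\geq k}\lambda_l z_l^2$). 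Hence
\[
s_{t+1}\leq s_t+2\alpha r_t(1-s_t)+\alpha^2\big(\lambda_1 r_t+r_t^2 s_t\big)=s_t+\alpha r_t\big(2-2s_t+\alpha\lambda_1+\alpha r_t s_t\big).
\]

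Next I would insert the step size. From $\alpha\lambda_1\leq\tfrac1{3(2K-1)}\leq\tfrac13$ and $\alpha r_t\leq\alpha\lambda_1 s_t\leq\tfrac{s_t}{3}$, the bracket is at most $2-2s_t+\tfrac13+\tfrac{s_t^2}{3}=\tfrac13(s_t^2-6s_t+7)$, so, multiplying by $\alpha r_t\geq0$,
\[
s_{t+1}\leq s_t+\tfrac{\alpha r_t}{3}\big(s_t^2-6s_t+7\big),\qquad 0\leq\alpha r_t\leq\tfrac{s_t}{3}.
\]
The right side is affine in $\alpha r_t$, so it is at most its value at one of the two endpoints $\alpha r_t\in\{0,s_t/3\}$, giving $s_{t+1}\leq\max\big\{s_t,\ \psi(s_t)\big\}$ with $\psi(s):=\tfrac19 s(s^2-6s+16)$. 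Since $\psi'(s)=\tfrac19(3s^2-12s+16)>0$ (negative discriminant), $\psi$ is increasing, and $\psi(3)=\tfrac73<3$. Thus if $s_t<3$ then $s_t<3$ and $\psi(s_t)<\psi(3)<3$, so $s_{t+1}<3$; with the base case $s_0=\|\bx_k^{(0)}\|^2=1<3$ the induction is complete.

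The routine parts are the expansion of the squared scalar updates and the elementary bounds $\sum_{l\geq k}\lambda_l z_l^2\leq r_t$ and $\sum_{l\geq k}\lambda_l^2 z_l^2\leq\lambda_1 r_t$ (both using $\bC\succeq0$). The one point requiring care is that a naive induction trying to keep $\|\bx_k^{(t)}\|\leq1$ fails, because the norm genuinely overshoots $1$ along the trajectory; the correct invariant needs the slack ceiling $\sqrt3$, and the monotonicity of $\psi$ on $[0,3]$ is exactly what makes that ceiling self‑sustaining — which is also what fixes the constant $3$ appearing in the step‑size condition $\alpha\leq\frac{1}{3\lambda_1(2K-1)}$.
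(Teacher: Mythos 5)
Your proof is correct, but it takes a genuinely different route from the paper's. The paper works directly with the vector update, rewrites it via the deflated matrix $\tilde{\bC}_k = \bC - \sum_{p<k}\lambda_p\bq_p\bq_p^{\tT}$, expands $\|\bx_k^{(t+1)}\|^2$, and then argues by a three-way case split on $\|\bx_k^{(t)}\|^2$ (at most $1$, in $(1,2]$, in $(2,3)$), with the hardest case handled by isolating $\alpha$ on one side and lower-bounding the resulting ratio through a generalized Rayleigh-quotient argument; that is where the factor $(2K-1)$ in the step-size condition enters. You instead pass to the eigenbasis, square the scalar recursions \eqref{eq:eqn_z_lower}--\eqref{eq:eqn_z_upper}, and compress everything into a single one-dimensional recursion inequality for $s_t=\|\bx_k^{(t)}\|^2$ in terms of $u=\alpha r_t\in[0,s_t/3]$, closing the induction by maximizing an affine function of $u$ and using monotonicity of the cubic $\psi(s)=\tfrac19 s(s^2-6s+16)$ with $\psi(3)=\tfrac73<3$; I checked the algebra (the coefficients of $\alpha$ and $\alpha^2$, the endpoint maximization, and the sign/discriminant claims) and it is sound, as is the final step $r_t\leq\lambda_1 s_t<3\lambda_1\leq 1/\alpha$. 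Your argument is cleaner in that it avoids the case analysis entirely and only uses $\alpha\lambda_1\leq\tfrac13$, so it actually establishes the modified-GHA bound under a weaker step-size restriction than stated (the $(2K-1)$ factor plays no role here), which is perfectly acceptable since the stated hypothesis implies it; the paper's heavier bookkeeping, on the other hand, mirrors the structure it reuses in the distributed analogue (Lemma~\ref{lemma:bounded_rayleighk}), where the deflation is done with inexact local iterates and the $K$-dependent constants genuinely matter.
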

\begin{proof}
From \eqref{eq:centralized_sangerk}, we know the iterate for $k^{th}$ eigenvector estimate is
\begin{eqnarray*}
	\bx_{k}^{(t+1)} &=& \bx_{k}^{(t)} + \alpha \bigg(\bC\bx_{k}^{(t)} - (\bx_{k}^{(t)})^T\bC\bx_{k}^t\bx_{k}^{(t)} - \sum_{p=1}^{k-1}\bq_{p}\bq_{p}^T\bC\bx_{k}^{(t)}\bigg)\\
	&=& \bx_{k}^{(t)} + \alpha \big(\bC\bx_{k}^{(t)} - (\bx_{k}^{(t)})^T\bC\bx_{k}^t\bx_{k}^{(t)} - \sum_{p=1}^{k-1}\lambda_p\bq_{p}\bq_{p}^T\bx_{k}^{(t)}\big)\\
	&=& \bx_{k}^{(t)} + \alpha \big(\tilde{\bC}_k\bx_{k}^{(t)} - (\bx_{k}^{(t)})^T\bC\bx_{k}^{(t)}\bx_{k}^{(t)} \big),
\end{eqnarray*}
where $\tilde{\bC}_k = \bC - \sum_{p=1}^{k-1}\lambda_p\bq_{p}\bq_{p}^T$. Notice that $\tilde{\bC}_k^2 = \bC^2 - \sum_{p=1}^{k-1}\lambda_p^2\bq_p\bq_p^T$.
Hence,
\begin{align}\nonumber
	\|\bx_k^{(t+1)}\|^2	&= \|\bx_{k}^{(t)} + \alpha \big( \tilde{\bC}_k\bx_{k}^{(t)} - (\bx_{k}^{(t)})^T\bC\bx_{k}^{(t)}\bx_{k}^{(t)} \big)\|^2\\ \nonumber
	&= \|\bx_{k}^{(t)}\|^2 + \alpha^2 \| \tilde{\bC}_k\bx_{k}^{(t)} - (\bx_{k}^{(t)})^T\bC\bx_{k}^{(t)}\bx_{k}^{(t)}\|^2 + 2\alpha(\bx_{k}^{(t)})^T( \tilde{\bC}_k\bx_{k}^{(t)} - (\bx_{k}^{(t)})^T\bC\bx_{k}^{(t)}\bx_{k}^{(t)})\\ \nonumber
	&= \|\bx_{k}^{(t)}\|^2 + \alpha^2\big((\bx_{k}^{(t)})^T\tilde{\bC}_k^2\bx_{k}^{(t)} + ((\bx_{k}^{(t)})^T\bC\bx_{k}^{(t)})^2\|\bx_{k}^{(t)}\|^2 - 2(\bx_{k}^{(t)})^T\bC\bx_{k}^{(t)}(\bx_{k}^{(t)})^T\tilde{\bC}_k\bx_{k}^{(t)}\big)\\ \nonumber
	& + 2\alpha \big((\bx_{k}^{(t)})^T\tilde{\bC}_k\bx_{k}^{(t)} - (\bx_{k}^{(t)})^T\bC\bx_{k}^{(t)}\|\bx_{k}^{(t)}\|^2\big)\\ \nonumber
	&=  \|\bx_{k}^{(t)}\|^2 + \alpha^2\big((\bx_{k}^{(t)})^T( \bC^2 - \sum_{p=1}^{k-1}\lambda_p^2\bq_p\bq_p^T)\bx_{k}^{(t)} + ((\bx_{k}^{(t)})^T\bC\bx_{k}^{(t)})^2\|\bx_{k}^{(t)}\|^2 - 2(\bx_{k}^{(t)})^T\bC\bx_{k}^{(t)}\times\\\nonumber
	&(\bx_{k}^{(t)})^T(\bC - \sum_{p=1}^{k-1}\lambda_p\bq_{p}\bq_{p}^T)\bx_{k}^{(t)}\big) + 2\alpha \big((\bx_{k}^{(t)})^T( \bC - \sum_{p=1}^{k-1}\lambda_p\bq_{p}\bq_{p}^T)\bx_{k}^{(t)} - (\bx_{k}^{(t)})^T\bC\bx_{k}^{(t)}\|\bx_{k}^{(t)}\|^2\big)\\ \nonumber
	&=  \|\bx_{k}^{(t)}\|^2 + \alpha^2\big((\bx_{k}^{(t)})^T\bC^2\bx_{k}^{(t)} - \sum_{p=1}^{k-1}\lambda_p^2(\bq_p^T\bx_{k}^{(t)})^2 + ((\bx_{k}^{(t)})^T\bC\bx_{k}^{(t)})^2(\|\bx_{k}^{(t)}\|^2 - 2)  \\ \label{eq:lemma1_eq1}
	& + 2(\bx_{k}^{(t)})^T\bC\bx_{k}^{(t)}\sum_{p=1}^{k-1}\lambda_p(\bq_{p}^T\bx_{k}^{(t)})^2\big) +   2\alpha \big((\bx_{k}^{(t)})^T\bC\bx_{k}^{(t)}(1-\|\bx_{k}^{(t)}\|^2) - \sum_{p=1}^{k-1}\lambda_p(\bq_{p}^T\bx_{k}^{(t)})^2 \big).
\end{align}
We now split our analysis into three cases based on the range of values of $\|\bx_{k}^{(t)}\|^2$.

\textbf{Case \rom{1}}: Let $\|\bx_{k}^{(t)}\|^2 \leq 1$. Then we see from~\eqref{eq:lemma1_eq1} that
\begin{eqnarray*}
	\|\bx_k^{(t+1)}\|^2 &\leq&  1  + \alpha^2(\lambda_1^2 + 2\lambda_1\sum_{p=1}^{k-1}\lambda_p) + 2\alpha\lambda_1 \leq  1  + \alpha^2(\lambda_1^2 + 2\lambda_1\sum_{p=1}^{k-1}\lambda_1) + 2\alpha\lambda_1 \\
	&\leq&  1  + \alpha^2\lambda_1^2(2K-1) + 2\alpha\lambda_1\sqrt{2K-1} =  (1  + \alpha\lambda_1\sqrt{2K-1})^2 \\
	&\leq& 2(1 + \alpha^2\lambda_1^2(2K-1)) \leq 2(1 + \frac{1}{9(2K-1)}) \leq 2(1 + \frac{1}{9}) <  3.
\end{eqnarray*}
\textbf{Case \rom{2}}: Now suppose $1 < \|\bx_k^{(t)}\|^2 \leq 2$. Then from~\eqref{eq:lemma1_eq1} we have
\begin{eqnarray*}
	\|\bx_k^{(t+1)}\|^2 &\leq&  2  + \alpha^2(2\lambda_1^2 + 2\lambda_1\sum_{p=1}^{k-1}2\lambda_p) \leq  2  + \alpha^2(2\lambda_1^2 + 2\lambda_1\sum_{p=1}^{k-1}2\lambda_1)  \\
	&\leq&  2(1 + \frac{1}{9(2K-1)}) \leq 2(1 + \frac{1}{9}) < 3, \quad \text{using similar steps as Case \rom{1}}.
\end{eqnarray*}
\textbf{Case \rom{3}}: Finally suppose $2 < \|\bx_k^{(t)}\|^2 < 3$. Then from~\eqref{eq:lemma1_eq1} we get
\begin{eqnarray*}
	\|\bx_k^{(t+1)}\|^2 &<&  3  + \alpha^2\big((\bx_{k}^{(t)})^T\bC^2\bx_{k}^{(t)} - \sum_{p=1}^{k-1}\lambda_p^2(\bq_p^T\bx_{k}^{(t)})^2 + ((\bx_{k}^{(t)})^T\bC\bx_{k}^{(t)})^2(\|\bx_{k}^{(t)}\|^2 - 2)  \\
	&& + 2(\bx_{k}^{(t)})^T\bC\bx_{k}^{(t)}\sum_{p=1}^{k-1}\lambda_p(\bq_{p}^T\bx_{k}^{(t)})^2\big) +   2\alpha \big((\bx_{k}^{(t)})^T\bC\bx_{k}^{(t)}(1-\|\bx_{k}^{(t)}\|^2) - \sum_{p=1}^{k-1}\lambda_p(\bq_{p}^T\bx_{k}^{(t)})^2 \big).
\end{eqnarray*}
To show that $\|\bx_k^{(t+1)}\|^2 < 3$, we have to show
\begin{align}\nonumber
&\alpha^2\big((\bx_{k}^{(t)})^T\bC^2\bx_{k}^{(t)} - \sum_{p=1}^{k-1}\lambda_p^2(\bq_p^T\bx_{k}^{(t)})^2 + ((\bx_{k}^{(t)})^T\bC\bx_{k}^{(t)})^2(\|\bx_{k}^{(t)}\|^2 - 2)  \\ \nonumber
 & 2(\bx_{k}^{(t)})^T\bC\bx_{k}^{(t)}\sum_{p=1}^{k-1}\lambda_p(\bq_{p}^T\bx_{k}^{(t)})^2\big) +   2\alpha \big((\bx_{k}^{(t)})^T\bC\bx_{k}^{(t)}(1-\|\bx_{k}^{(t)}\|^2) - \sum_{p=1}^{k-1}\lambda_p(\bq_{p}^T\bx_{k}^{(t)})^2 \big) \leq 0\\ \label{eq:lemma1_alphabound}
 \Leftrightarrow \quad & \alpha \leq \frac{2(\bx_{k}^{(t)})^T\bC\bx_{k}^{(t)}(\|\bx_{k}^{(t)}\|^2 - 1) + 2\sum_{p=1}^{k-1}\lambda_p(\bq_{p}^T\bx_{k}^{(t)})^2}{(\bx_{k}^{(t)})^T\bC^2\bx_{k}^{(t)} - \sum_{p=1}^{k-1}\lambda_p^2(\bq_p^T\bx_{k}^{(t)})^2 + ((\bx_{k}^{(t)})^T\bC\bx_{k}^{(t)})^2(\|\bx_{k}^{(t)}\|^2 - 2)  +  2(\bx_{k}^{(t)})^T\bC\bx_{k}^{(t)}\sum_{p=1}^{k-1}\lambda_p(\bq_{p}^T\bx_{k}^{(t)})^2}.
\end{align}
We now find a lower bound of the right hand side of~\eqref{eq:lemma1_alphabound}. Note that
\begin{align}\label{eq:alpha_num}
&2(\bx_{k}^{(t)})^T\bC\bx_{k}^{(t)}(\|\bx_{k}^{(t)}\|^2 - 1) + 2\sum_{p=1}^{k-1}\lambda_p(\bq_{p}^T\bx_{k}^{(t)})^2 \geq 2(\bx_{k}^{(t)})^T\bC\bx_{k}^{(t)}(\|\bx_{k}^{(t)}\|^2 - 1)\\ \nonumber
\text{and} \quad
&(\bx_{k}^{(t)})^T\bC^2\bx_{k}^{(t)} - \sum_{p=1}^{k-1}\lambda_p^2(\bq_p^T\bx_{k}^{(t)})^2 + ((\bx_{k}^{(t)})^T\bC\bx_{k}^{(t)})^2(\|\bx_{k}^{(t)}\|^2 - 2) + 2(\bx_{k}^{(t)})^T\bC\bx_{k}^{(t)}\sum_{p=1}^{k-1}\lambda_p(\bq_{p}^T\bx_{k}^{(t)})^2 \\ \label{eq:alpha_den}
&\leq (\bx_{k}^{(t)})^T\bC^2\bx_{k}^{(t)} + ((\bx_{k}^{(t)})^T\bC\bx_{k}^{(t)})^2(\|\bx_{k}^{(t)}\|^2 - 2)  +  2(\bx_{k}^{(t)})^T\bC\bx_{k}^{(t)}\sum_{p=1}^{k-1}\lambda_p(\bq_{p}^T\bx_{k}^{(t)})^2.
\end{align}
Now, $\frac{(\bx_{k}^{(t)})^T\bC\bx_{k}^{(t)}}{(\bx_{k}^{(t)})^T\bC^2\bx_{k}^{(t)}}$ is a generalized Rayleigh quotient whose maximum and minimum values are the largest and smallest eigenvalues of the generalized eigenvalue problem $\bC\by = \lambda\bC^2\by$. Since the eigenvectors of $\bC$ and $\bC^2$ are the same, the largest and smallest eigenvalues of the generalized problems are $\frac{1}{\lambda_{d}}$ and $\frac{1}{\lambda_{1}}$, respectively, where $\lambda_{1}$ and $\lambda_{d}$ are the largest and smallest eigenvalues of $\bC$. Thus,  $(\bx_{k}^{(t)})^T\bC^2\bx_{k}^{(t)} \leq \lambda_1(\bx_{k}^{(t)})^T\bC\bx_{k}^{(t)}$. Also, since $\bq_{p}^T\bx_{k}^{(t)} \leq \|\bq\|\|\bx_k^{(t)}\|$, we have the right hand side of~\eqref{eq:alpha_den}
\begin{eqnarray*}
&& \lambda_1(\bx_{k}^{(t)})^T\bC\bx_{k}^{(t)} + ((\bx_{k}^{(t)})^T\bC\bx_{k}^{(t)})^2(\|\bx_{k}^{(t)}\|^2 - 2)  +  2(\bx_{k}^{(t)})^T\bC\bx_{k}^{(t)}\sum_{p=1}^{k-1}\lambda_p\|\bx_{k}^{(t)}\|^2 \\
&=&(\bx_{k}^{(t)})^T\bC\bx_{k}^{(t)}(\lambda_1 + (\bx_{k}^{(t)})^T\bC\bx_{k}^{(t)}(\|\bx_{k}^{(t)}\|^2 - 2)  +  2\sum_{p=1}^{k-1}\lambda_p\|\bx_{k}^{(t)}\|^2)\\
&\leq&(\bx_{k}^{(t)})^T\bC\bx_{k}^{(t)}(\lambda_1 + \lambda_1\|\bx_k^{(t)}\|^2(\|\bx_{k}^{(t)}\|^2 - 2)  +  2\sum_{p=1}^{k-1}\lambda_1\|\bx_{k}^{(t)}\|^2)\\
&=&\lambda_1(\bx_{k}^{(t)})^T\bC\bx_{k}^{(t)}(1 + \|\bx_k^{(t)}\|^4 - 2\|\bx_k^{(t)}\|^2  +  2(k-1)\|\bx_{k}^{(t)}\|^2)\\
&=&\lambda_1(\bx_{k}^{(t)})^T\bC\bx_{k}^{(t)}((\|\bx_k^{(t)}\|^2 - 1)^2 +  2(k-1)\|\bx_{k}^{(t)}\|^2)\\
&=&\lambda_1(\bx_{k}^{(t)})^T\bC\bx_{k}^{(t)}(\|\bx_k^{(t)}\|^2 - 1)((\|\bx_k^{(t)}\|^2 - 1) +  2(k-1)\frac{\|\bx_{k}^{(t)}\|^2}{(\|\bx_k^{(t)}\|^2 - 1)})\\
&<&\lambda_1(\bx_{k}^{(t)})^T\bC\bx_{k}^{(t)}(\|\bx_k^{(t)}\|^2 - 1)((3 - 1) +  2(k-1)2), \quad \text{since} \frac{\|\bx_{k}^{(t)}\|^2}{(\|\bx_k^{(t)}\|^2 - 1)} < 2  \\
&=&2\lambda_1(\bx_{k}^{(t)})^T\bC\bx_{k}^{(t)}(\|\bx_k^{(t)}\|^2 - 1)(2k-1) \leq 2\lambda_1(\bx_{k}^{(t)})^T\bC\bx_{k}^{(t)}(\|\bx_k^{(t)}\|^2 - 1)(2K -1).
\end{eqnarray*}
Hence, we have that the right hand side of~\eqref{eq:lemma1_alphabound} exceeds
\begin{eqnarray*}
  \frac{2(\bx_{k}^{(t)})^T\bC\bx_{k}^{(t)}(\|\bx_{k}^{(t)}\|^2 - 1)}{2\lambda_1(\bx_{k}^{(t)})^T\bC\bx_{k}^{(t)}(\|\bx_k^{(t)}\|^2 - 1)(2K -1)} = \frac{1}{\lambda_1(2K -1)} > \frac{1}{3\lambda_1(2K -1)}.
\end{eqnarray*}
Thus, if $\alpha \leq \frac{1}{3\lambda_1(2K -1)}$, then $\|\bx_{k}^{(t)}\|^2 < 3$. 

Next,
\begin{equation} \label{eq:rayleigh_upperboundk}
	0 \leq (\bx_k^{(t)})^T\bC\bx_k^{(t)} \leq \lambda_1\|\bx_k^{(t)}\|^2 < 3\lambda_1 \leq 3(2K-1)\lambda_1 \leq \frac{1}{\alpha}.
\end{equation}
Hence, $(\bx_k^{(t)})^T\bC\bx_k^{(t)} < \frac{1}{\alpha}$. 
\end{proof}

\subsection{Statement and Proof of Lemma~\ref{lemma:lowerbound_rayleigh}}\label{app:lemma2}
\begin{lemma} \label{lemma:lowerbound_rayleigh}
	Suppose $\bq_k^T\bx_k^{(0)} = z_{k,k}^{(0)} \neq 0$ and $(\bx_k^{(t)})^T\bC\bx_k^{(t)} < \frac{1}{\alpha}$, then
	\begin{equation*}
		(\bx_k^{(t)})^T\bC\bx_k^{(t)} > \min \{(1 - 3\alpha\lambda_1)^2\lambda_m, (\tilde{\bx}_k^{(0)})^T\bC\tilde{\bx}_k^{(0)}\}, \quad \forall t.
	\end{equation*}
\end{lemma}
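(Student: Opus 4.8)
The plan is an induction on $t$, carried out in the eigenbasis of $\bC$. Expanding $\bx_k^{(t)}=\sum_{l=1}^d z_{k,l}^{(t)}\bq_l$ and squaring the scalar recursions~\eqref{eq:eqn_z_lower}--\eqref{eq:eqn_z_upper}, the Rayleigh quotient $r^{(t)}:=(\bx_k^{(t)})^T\bC\bx_k^{(t)}$ satisfies
\[
r^{(t+1)}=\sum_{l<k}\lambda_l\big(z_{k,l}^{(t)}\big)^2\big(1-\alpha r^{(t)}\big)^2+\sum_{l\ge k}\lambda_l\big(z_{k,l}^{(t)}\big)^2\big(1+\alpha(\lambda_l-r^{(t)})\big)^2.
\]
Two facts feed the argument. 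First, the hypothesis $r^{(t)}<1/\alpha$ holds for every $t$. Second, Lemma~\ref{lemma:bounded_iterate_rayleigh} gives $r^{(t)}\le\lambda_1\|\bx_k^{(t)}\|^2<3\lambda_1$ for every $t$; combined with $\alpha\le\frac{1}{3\lambda_1(2K-1)}$ this places each ``active'' multiplier $1+\alpha(\lambda_l-r^{(t)})$ ($l\ge k$) in the interval $\big(1-3\alpha\lambda_1,\,1+\alpha\lambda_1\big]$, hence strictly positive. In particular the coefficient $z_{k,k}^{(t)}$ never vanishes (this is where $\bq_k^T\bx_k^{(0)}=z_{k,k}^{(0)}\ne 0$ enters), and no active coefficient ever leaves its eigen-direction.

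Next I would reduce the claim to the ``active'' subspace $\tspan\{\bq_k,\dots,\bq_d\}$ spanned by the directions not deflated out of $\tilde\bC_k$. With $\tilde\bx_k^{(t)}$ the orthogonal projection of $\bx_k^{(t)}$ onto that subspace, set $\tilde r^{(t)}:=(\tilde\bx_k^{(t)})^T\bC\tilde\bx_k^{(t)}=\sum_{l\ge k}\lambda_l(z_{k,l}^{(t)})^2\le r^{(t)}$; since $\tilde r^{(0)}=(\tilde\bx_k^{(0)})^T\bC\tilde\bx_k^{(0)}$, it suffices to prove $\tilde r^{(t)}>\min\{(1-3\alpha\lambda_1)^2\lambda_m,\tilde r^{(0)}\}$ for all $t$. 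The inductive step splits on the size of $r^{(t)}$. If $r^{(t)}\le\lambda_m$, then $1+\alpha(\lambda_l-r^{(t)})\ge 1$ for every eigen-direction carrying positive eigenvalue in the active subspace, so $\tilde r^{(t+1)}\ge\tilde r^{(t)}$ and the bound is inherited. If instead $r^{(t)}>\lambda_m$, then because each active multiplier squared strictly exceeds $(1-3\alpha\lambda_1)^2$, the update contracts $\tilde r$ by no more than that factor: $\tilde r^{(t+1)}>(1-3\alpha\lambda_1)^2\tilde r^{(t)}$; and whenever the prior value was $\tilde r^{(t)}\ge\lambda_m$ this yields $\tilde r^{(t+1)}>(1-3\alpha\lambda_1)^2\lambda_m$, the floor in the statement. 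Stitching these together — $\tilde r^{(t)}$ is non-decreasing until (if ever) it drops below $\lambda_m$, any such drop lands it above $(1-3\alpha\lambda_1)^2\lambda_m$, and the non-decreasing argument then keeps it there — yields the uniform lower bound on $\tilde r^{(t)}$, and $r^{(t)}\ge\tilde r^{(t)}$ transfers it to $r^{(t)}$. The base case $t=0$ is immediate since $r^{(0)}\ge\tilde r^{(0)}\ge\min\{(1-3\alpha\lambda_1)^2\lambda_m,\tilde r^{(0)}\}$.

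The main obstacle is making the first case actually applicable: $r^{(t)}=\tilde r^{(t)}+\sum_{l<k}\lambda_l(z_{k,l}^{(t)})^2$, so ``$\tilde r^{(t)}$ small'' does not by itself force the inactive mass small, and hence does not obviously give $r^{(t)}\le\lambda_m$. The way around this is to exploit monotonicity of the coefficients: for $l<k$ the recursion reads $z_{k,l}^{(t+1)}=(1-\alpha r^{(t)})z_{k,l}^{(t)}$ with $0<1-\alpha r^{(t)}<1$, so the inactive coordinates are non-increasing, and moreover the ratios $(z_{k,l}^{(t)}/z_{k,k}^{(t)})^2$ are non-increasing for every $l\ne k$ — since both $|1-\alpha r^{(t)}|$ and $1+\alpha(\lambda_l-r^{(t)})$ are $\le 1+\alpha(\lambda_k-r^{(t)})$ when $r^{(t)}<1/\alpha$ — so the inactive contribution to $r^{(t)}$ remains controlled relative to $(z_{k,k}^{(t)})^2\le\|\tilde\bx_k^{(t)}\|^2$. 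Turning this monotonicity into the quantitative implication ``$\tilde r^{(t)}<\lambda_m\Rightarrow r^{(t)}\le\lambda_m$'' is the delicate step; everything else is the case bookkeeping and the elementary estimates on the multipliers described above.
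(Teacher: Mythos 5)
Your two one-step estimates are exactly the ones the paper proves: writing $r^{(t)}:=(\bx_k^{(t)})^T\bC\bx_k^{(t)}$, when $\lambda_m\le r^{(t)}<3\lambda_1$ every squared coefficient is multiplied by at least $(1-\alpha r^{(t)})^2>(1-3\alpha\lambda_1)^2$, giving $r^{(t+1)}>(1-3\alpha\lambda_1)^2\lambda_m$ (the paper's \eqref{eq:rayleigh_lowerbound1}), and when $r^{(t)}<\lambda_m$ the active multipliers $1+\alpha(\lambda_l-r^{(t)})$, $l\ge k$, are at least $1$ on every nonzero eigendirection, giving $r^{(t+1)}\ge(\tilde{\bx}_k^{(t+1)})^T\bC\tilde{\bx}_k^{(t+1)}>(\tilde{\bx}_k^{(t)})^T\bC\tilde{\bx}_k^{(t)}$ (the paper's \eqref{eq:rayleigh_lowerbound2}). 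Up to that point you are on the paper's route, and those estimates are correct. The paper stops there and combines the two inequalities as bounds on $r^{(t+1)}$; it does not attempt the stronger statement you aim for, namely a uniform floor on $\tilde r^{(t)}:=(\tilde{\bx}_k^{(t)})^T\bC\tilde{\bx}_k^{(t)}$ itself.

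The genuine gap is in your stitching, precisely at the step you call ``delicate.'' Your argument needs the monotone case to be entered only with $\tilde r^{(t)}$ already above the floor, and you propose to secure this via the implication ``$\tilde r^{(t)}<\lambda_m\Rightarrow r^{(t)}\le\lambda_m$.'' That implication is false: already at $t=0$, for $k\ge 2$, take $\bx_k^{(0)}$ nearly aligned with $\bq_1$; then $\tilde r^{(0)}$ is arbitrarily small while $r^{(0)}\approx\lambda_1>\lambda_m$, and the monotonicity of the ratios $(z_{k,l}^{(t)}/z_{k,k}^{(t)})^2$ gives no control of the inactive mass $\sum_{l<k}\lambda_l(z_{k,l}^{(t)})^2$ at the level of $\lambda_m-\tilde r^{(t)}$. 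Moreover the intermediate target itself is suspect: during a stretch of consecutive iterations with $r^{(t)}>\lambda_l$ for all active $l$ (e.g., early iterations dominated by $\bq_1$, where $r^{(t)}\approx\lambda_1$), every active multiplier is strictly less than $1$, so $\tilde r^{(t)}$ contracts at \emph{every} step; the one-step bound $\tilde r^{(t+1)}>(1-3\alpha\lambda_1)^2\tilde r^{(t)}$ does not prevent a multi-step decay below both $\tilde r^{(0)}$ and $(1-3\alpha\lambda_1)^2\lambda_m$, and your ``any drop lands above the floor, and the non-decreasing argument keeps it there'' does not apply because the non-decreasing case is keyed to $r^{(t)}\le\lambda_m$, not to $\tilde r^{(t)}\le\lambda_m$. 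So the proposal is incomplete: to follow the paper you should combine \eqref{eq:rayleigh_lowerbound1}--\eqref{eq:rayleigh_lowerbound2} directly as statements about $r^{(t+1)}$ (the paper's own combination is admittedly terse, but it does not hinge on the false bridging implication), rather than trying to propagate a floor on $\tilde r^{(t)}$ through the $r^{(t)}\ge\lambda_m$ phase.
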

\begin{proof}
We know $ 0  \leq (\bx_k^{(t)})^T\bC\bx_k^{(t)} \leq \lambda_1\|\bx_k^{(t)}\|^2 < 3\lambda_1~\text{using Lemma~\ref{lemma:bounded_iterate_rayleigh}} $.
Let $\lambda_m, m > K$ be the smallest non-zero eigenvalue of $\bC$. Now, if $\lambda_m \leq (\bx_k^{(t)})^T\bC\bx_k^{(t)} < 3\lambda_1$, then
\begin{align}\nonumber
    ({\bx}_k^{(t+1)})^T\bC{\bx}_k^{(t+1)} &= \sum_{l=1}^{d}\lambda_l(z_{k,l}^{(t+1)})^2 \\ \nonumber
    &= \sum_{l=1}^{k-1}\lambda_l(z_{k,l}^{(t+1)})^2 + \sum_{l=k}^{d}\lambda_l(z_{k,l}^{(t+1)})^2 \\ \nonumber
    &= \sum_{l=1}^{k-1}\lambda_l(1 - \alpha({\bx}_{k}^{(t)})^T\bC{\bx}_{k}^{(t)})^2(z_{k,l}^{(t)})^2 + \sum_{l=k}^{d} \lambda_l(1 + \alpha(\lambda_l - ({\bx}_{k}^{(t)})^T\bC{\bx}_{k}^{(t)}))^2({z}_{k,l}^{(t)})^2\\ \nonumber
    &\geq (1 - \alpha({\bx}_{k}^{(t)})^T\bC{\bx}_{k}^{(t)})^2\sum_{l=1}^{k-1}\lambda_l(z_{k,l}^{(t)})^2 + (1 + \alpha(\lambda_m - ({\bx}_{k}^{(t)})^T\bC{\bx}_{k}^{(t)}))^2\sum_{l=k}^{d} \lambda_l({z}_{k,l}^{(t)})^2\\ \nonumber
    &> (1 - \alpha({\bx}_{k}^{(t)})^T\bC{\bx}_{k}^{(t)})^2\sum_{l=1}^{k-1}\lambda_l(z_{k,l}^{(t)})^2 + (1 - \alpha({\bx}_{k}^{(t)})^T\bC{\bx}_{k}^{(t)}))^2\sum_{l=k}^{d} \lambda_l({z}_{k,l}^{(t)})^2\\ \nonumber
    &= (1 - \alpha({\bx}_{k}^{(t)})^T\bC{\bx}_{k}^{(t)})^2\sum_{l=1}^{d}\lambda_l(z_{k,l}^{(t)})^2 \\ \label{eq:rayleigh_lowerbound1}
    &> (1 - 3\alpha\lambda_1)^2({\bx}_k^{(t)})^T\bC{\bx}_k^{(t)} \geq  (1 - 3\alpha\lambda_1)^2\lambda_m.
\end{align}
Also, from \eqref{eq:expansion_centralized}, we have $\bx_{k}^{(t)} = \sum_{l=1}^{d}z_{k,l}^{(t)}\bq_l = \sum_{l=1}^{k-1}z_{k,l}^{(t)}\bq_l + \sum_{l=k}^{d}z_{k,l}^{(t)}\bq_l.$
Let $\sum_{l=1}^{k-1}z_{k,l}^{(t)}\bq_l = \bx_k^{'(t)}$ and $\sum_{l=k}^{d}z_{k,l}^{(t)}\bq_l = \tilde{\bx}_k^{(t)}$. Thus, $(\bx_k^{(t)})^T\bC\bx_k^{(t)} = (\tilde{\bx}_k^{(t)})^T\bC\tilde{\bx}_k^{(t)} + (\bx_k^{'(t)})^T\bC\bx_k^{'(t)}$. 

Now, if  $(\tilde{\bx}_k^{(t)})^T\bC\tilde{\bx}_k^{(t)} \leq (\bx_k^{(t)})^T\bC\bx_k^{(t)} <  \lambda_m$ then
\begin{align}\nonumber
    ({\bx}_k^{(t+1)})^T\bC{\bx}_k^{(t+1)} \geq (\tilde{\bx}_k^{(t+1)})^T\bC\tilde{\bx}_k^{(t+1)} &= \sum_{l=k}^{d}\lambda_l(z_{k,l}^{(t+1)})^2 \\ \nonumber
    &= \sum_{l=k}^{d} \lambda_l(1 + \alpha(\lambda_l - ({\bx}_{k}^{(t)})^T\bC{\bx}_{k}^{(t)}))^2({z}_{k,l}^{(t)})^2\\ \label{eq:rayleigh_lowerbound2}
    &\geq (1 + \alpha(\lambda_m - ({\bx}_{k}^{(t)})^T\bC{\bx}_{k}^{(t)}))^2\sum_{l=k}^{d} \lambda_l({z}_{k,l}^{(t)})^2 > (\tilde{\bx}_k^{(t)})^T\bC\tilde{\bx}_k^{(t)}.
\end{align}
Combining \eqref{eq:rayleigh_lowerbound1} and \eqref{eq:rayleigh_lowerbound2}, we have 
\begin{align}\label{eq:rayleigh_lowerboundk}
    (\bx_k^{(t)})^T\bC\bx_k^{(t)} > \min \{(1 - 3\alpha\lambda_1)^2\lambda_m, (\tilde{\bx}_k^{(0)})^T\bC\tilde{\bx}_k^{(0)}\}.
\end{align} 
\end{proof}
\subsection{Statement and Proof of Lemma~\ref{lemma:bounded_rayleighk}}\label{app:lemma6}
\begin{lemma}\label{lemma:bounded_rayleighk}
	Assume $\|\bx_{i,k}^{(0)}\| = 1$. If the step size is bounded above as $\alpha \leq \frac{w_{ii}}{3\lambda_1{(2K-1)}}$, where $\lambda_1$ is the largest eigenvalue of $\bC$ and $K$ is the number of eigenvectors to be estimated, then
	\begin{equation}
		\|\bx_{i,k}^{(t)}\| < \sqrt{3} \quad \text{and} \quad (\bx_{i,k}^{(t)})^T\bC_i\bx_k^{(t)} < \frac{1}{\alpha}, \quad \forall k,t.
	\end{equation}
\end{lemma}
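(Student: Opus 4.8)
The plan is to mirror the proof of Lemma~\ref{lemma:bounded_iterate_rayleigh}, handling the combine step of DSA separately via the convex-combination structure of the doubly stochastic matrix $\bW$. I would establish $\|\bx_{i,k}^{(t)}\| < \sqrt{3}$ by a double induction: an outer induction on the eigenvector index $k$ and, for each fixed $k$, an inner induction on $t$. The base case $k=1$ carries no lower-order correction, so its inner induction is a node-wise replica of the argument below. For $1 < k \le K$ the outer hypothesis supplies $\|\bx_{i,p}^{(t)}\| < \sqrt{3}$ for all $p < k$, all $i$, and all $t$, hence $\|\bP_i^{(t)}\| \le \sum_{p=1}^{k-1}\|\bx_{i,p}^{(t)}\|^2 < 3(k-1)$ for $\bP_i^{(t)} := \sum_{p=1}^{k-1}\bx_{i,p}^{(t)}(\bx_{i,p}^{(t)})^{\tT}$.

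The key step is to isolate the self-weight. Since $\bW$ is doubly stochastic, $\sum_{j\in\cN_i}w_{ij} = 1 - w_{ii}$, and I would write
\begin{equation*}
\bx_{i,k}^{(t+1)} = w_{ii}\Big(\bx_{i,k}^{(t)} + \tfrac{\alpha}{w_{ii}}\bcH_i(\bC_i,\bx_{i,k}^{(t)})\Big) + \sum_{j\in\cN_i}w_{ij}\bx_{j,k}^{(t)}.
\end{equation*}
By the triangle inequality and the inner hypothesis $\|\bx_{j,k}^{(t)}\| < \sqrt{3}$ for every $j$, this yields $\|\bx_{i,k}^{(t+1)}\| \le w_{ii}\,\big\|\bx_{i,k}^{(t)} + \tfrac{\alpha}{w_{ii}}\bcH_i(\bC_i,\bx_{i,k}^{(t)})\big\| + (1-w_{ii})\sqrt{3}$, so it suffices to show that the ``local full-GHA step'' with effective step size $\alpha' := \alpha/w_{ii} \le \tfrac{1}{3\lambda_1(2K-1)}$ obeys $\|\bx_{i,k}^{(t)} + \alpha'\bcH_i(\bC_i,\bx_{i,k}^{(t)})\| < \sqrt{3}$; then $\|\bx_{i,k}^{(t+1)}\| < w_{ii}\sqrt{3} + (1-w_{ii})\sqrt{3} = \sqrt{3}$. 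To prove this last bound I would expand the squared norm exactly as in~\eqref{eq:lemma1_eq1}, with $\bC$ replaced by $\bC_i$ and the subtracted correction $\sum_{p}\lambda_p\bq_p\bq_p^{\tT}$ in $\tilde{\bC}_k$ replaced by $\bP_i^{(t)}\bC_i$, and again split into the three regimes $\|\bx_{i,k}^{(t)}\|^2 \le 1$, $1 < \|\bx_{i,k}^{(t)}\|^2 \le 2$, and $2 < \|\bx_{i,k}^{(t)}\|^2 < 3$, bounding in each regime the terms linear and quadratic in $\alpha'$ using $\|\bC_i\| \le \lambda_1$, $\|\bP_i^{(t)}\| < 3(k-1)$, $\|\bx_{i,k}^{(t)}\|^2 < 3$, and $\alpha' \le \tfrac{1}{3\lambda_1(2K-1)}$. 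The second assertion then follows immediately: $(\bx_{i,k}^{(t)})^{\tT}\bC_i\bx_{i,k}^{(t)} \le \lambda_1\|\bx_{i,k}^{(t)}\|^2 < 3\lambda_1 \le 3(2K-1)\lambda_1 \le w_{ii}/\alpha \le 1/\alpha$.

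The main obstacle is the third regime $2 < \|\bx_{i,k}^{(t)}\|^2 < 3$, just as in Lemma~\ref{lemma:bounded_iterate_rayleigh}, and it is now harder because the lower-order terms are the current iterates $\bx_{i,p}^{(t)}$ rather than true eigenvectors. In particular the clean identity $\tilde{\bC}_k^2 = \bC^2 - \sum_p\lambda_p^2\bq_p\bq_p^{\tT}$ exploited there is no longer available, since $\bP_i^{(t)}\bC_i$ is not symmetric and its square does not telescope; the term analogous to $(\bx_k^{(t)})^{\tT}\tilde{\bC}_k^2\bx_k^{(t)}$ and the cross terms must instead be controlled by cruder operator-norm estimates. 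The delicate point is to verify that these cruder estimates still close under the same step-size budget $\alpha \le \tfrac{w_{ii}}{3\lambda_1(2K-1)}$. I expect this to go through because the proof of Lemma~\ref{lemma:bounded_iterate_rayleigh} already resorts to crude bounds in its Case~III (e.g.\ $\bq_p^{\tT}\bx_k^{(t)} \le \|\bx_k^{(t)}\|$ and $\|\bx_k^{(t)}\|^2/(\|\bx_k^{(t)}\|^2-1) < 2$), and the $(2K-1)$ factor in the step-size bound is exactly what absorbs the sum over the $k-1 < K$ lower-order corrections together with the $\|\bx_{i,k}^{(t)}\|^2 < 3$ slack. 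Should a direct replication not quite close, the fallback is to recast the required inequality in the form of~\eqref{eq:lemma1_alphabound} and lower-bound the resulting ratio, using $\|\bx_{i,p}^{(t)}\| < \sqrt{3}$ in place of the eigenvector normalization.
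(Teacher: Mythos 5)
Your plan diverges from the paper's proof at the decisive point, and the step you defer with ``I expect this to go through'' is the one that fails. The paper never places the deflation term $\sum_{p<k}\bx_{i,p}^{(t)}(\bx_{i,p}^{(t)})^{\tT}\bC_i\bx_{i,k}^{(t)}$ inside the squared-norm expansion: it peels it off by the triangle inequality, bounds it crudely by $3(k-1)\alpha\lambda_1\|\bx_{i,k}^{(t)}\|$ using $\|\bx_{i,p}^{(t)}\|^2\le 3$, and runs the three-case analysis only on $\|w_{ii}\bx_{i,k}^{(t)}+\alpha(\bC_i\bx_{i,k}^{(t)}-((\bx_{i,k}^{(t)})^{\tT}\bC_i\bx_{i,k}^{(t)})\bx_{i,k}^{(t)})\|$, i.e.\ the Oja part alone, against the shrunken target $\sqrt3\,w_{ii}-3(k-1)\alpha\lambda_1\|\bx_{i,k}^{(t)}\|$ (its Case II also produces the extra numerical requirement $\alpha\le 0.225/(3\lambda_1(2K-1))$, which your plan would inherit). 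You instead fold the entire $\bcH_i$ into the self-weighted term and reduce the induction step to the claim that $\|\bx_{i,k}^{(t)}+\alpha'\bcH_i(\bC_i,\bx_{i,k}^{(t)})\|<\sqrt3$ whenever $\|\bx_{i,k}^{(t)}\|<\sqrt3$, $\|\bx_{i,p}^{(t)}\|<\sqrt3$ and $\alpha'\le 1/(3\lambda_1(2K-1))$. Since the neighbors already consume the full $(1-w_{ii})\sqrt3$ of the budget, this reduction leaves no slack, and the reduced claim is false --- not because your constants are loose, but structurally, for every fixed $\alpha'>0$.

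Concretely, take $K=k=2$, $\bC_i=\lambda_1\be_1\be_1^{\tT}$, $\bx_{i,1}^{(t)}=\sqrt{3-\epsilon}\,(\be_1+\be_2)/\sqrt2$ and $\bx_{i,2}^{(t)}=\sqrt{3-\epsilon}\,(0.1\,\be_1-\sqrt{0.99}\,\be_2)$, all hypotheses satisfied. Then $(\bx_{i,2}^{(t)})^{\tT}\bC_i\bx_{i,2}^{(t)}\approx 0.03\lambda_1$, while $-(\bx_{i,2}^{(t)})^{\tT}\bx_{i,1}^{(t)}\,(\bx_{i,1}^{(t)})^{\tT}\bC_i\bx_{i,2}^{(t)}\approx +0.40\lambda_1$, so $(\bx_{i,2}^{(t)})^{\tT}\bcH_i(\bC_i,\bx_{i,2}^{(t)})\approx 0.34\lambda_1>0$ and $\|\bx_{i,2}^{(t)}+\alpha'\bcH_i(\bC_i,\bx_{i,2}^{(t)})\|^2\ge 3-\epsilon+0.68\,\alpha'\lambda_1>3$ once $\epsilon<0.68\,\alpha'\lambda_1$ (e.g.\ $\epsilon=10^{-3}$ at $\alpha'=1/(9\lambda_1)$). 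The structural point is the one you half-identify but then hope away: in Lemma~\ref{lemma:bounded_iterate_rayleigh} the radial contribution of the deflation is $-\sum_{p<k}\lambda_p(\bq_p^{\tT}\bx_k^{(t)})^2\le0$, whereas with current iterates the quantity $(\bx_{i,k}^{(t)})^{\tT}\bx_{i,p}^{(t)}(\bx_{i,p}^{(t)})^{\tT}\bC_i\bx_{i,k}^{(t)}$ has no sign and can be of order $\lambda_1^{1/2}\big((\bx_{i,k}^{(t)})^{\tT}\bC_i\bx_{i,k}^{(t)}\big)^{1/2}$, which dominates the Oja contraction $-\big((\bx_{i,k}^{(t)})^{\tT}\bC_i\bx_{i,k}^{(t)}\big)(\|\bx_{i,k}^{(t)}\|^2-1)$ when $\bx_{i,k}^{(t)}$ is nearly orthogonal to the range of $\bC_i$. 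Hence neither cruder operator-norm bookkeeping nor your ratio-style fallback can close Case III while the deflation term sits inside the norm; it must be handled additively, as in the paper's decomposition, where it only enters through the norm bound $3(k-1)\alpha\lambda_1\|\bx_{i,k}^{(t)}\|$ and is absorbed by the $w_{ii}\sqrt3$ margin --- which is precisely what the factors $w_{ii}$ and $2K-1$ in the step-size condition are budgeted for.
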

\begin{proof}
We have
\begin{eqnarray}
\bx_{i,k}^{(t+1)} &=& \sum_{j\in \cN_i \cup \{i\}}w_{ij}\bx_{j,k}^{(t)} + \alpha \big(\bC_i\bx_{i,k}^{(t)} - (\bx_{i,k}^{(t)})^T\bC_i\bx_{i,k}^t\bx_{i,k}^{(t)} - \sum_{p=1}^{k-1}\bx_{i,p}^{(t)}(\bx_{i,p}^{(t)})^T\bC_i\bx_{i,k}^{(t)}\big).
\end{eqnarray}
Hence,
\begin{eqnarray*}
	\|\bx_{i,k}^{(t+1)}\| &\leq& \|w_{ii}\bx_{i,k}^{(t)} + \alpha \big( {\bC}_i\bx_{i,k}^{(t)} - (\bx_{i,k}^{(t)})^T\bC_i\bx_{i,k}^{(t)}\bx_{i,k}^{(t)} \big)\| +\alpha\sum_{p=1}^{k-1}\|\bx_{i,p}^{(t)}(\bx_{i,p}^{(t)})^T\bC_i\bx_{i,k}^{(t)}\|+ \sum_{j\neq i} \|w_{ij}\bx_{j,k}^{(t)}\|\\
	&\leq& \|w_{ii}\bx_{i,k}^{(t)} + \alpha \big( {\bC}_i\bx_{i,k}^{(t)} - (\bx_{i,k}^{(t)})^T\bC_i\bx_{i,k}^{(t)}\bx_{i,k}^{(t)} \big)\| +\alpha\sum_{p=1}^{k-1}\lambda_1\|(\bx_{i,p}^{(t)})\|\|\bx_{i,k}^{(t)}\|\|\bx_{i,p}^{(t)}\|+ \sum_{j\neq i}w_{ij} \|\bx_{j,k}^{(t)}\|\\
	&=& \|w_{ii}\bx_{i,k}^{(t)} + \alpha \big( {\bC}_i\bx_{i,k}^{(t)} - (\bx_{i,k}^{(t)})^T\bC_i\bx_{i,k}^{(t)}\bx_{i,k}^{(t)} \big)\| +\alpha\sum_{p=1}^{k-1}\lambda_1\|(\bx_{i,p}^{(t)})\|^2\|\bx_{i,k}^{(t)}\| +  \sum_{j\neq i}w_{ij}\|\bx_{j,k}^{(t)}\|\\
	&\leq& \|w_{ii}\bx_{i,k}^{(t)} + \alpha \big( {\bC}_i\bx_{i,k}^{(t)} - (\bx_{i,k}^{(t)})^T\bC_i\bx_{i,k}^{(t)}\bx_{i,k}^{(t)} \big)\| +3\alpha\lambda_1\sum_{p=1}^{k-1}\|\bx_{i,k}^{(t)}\| +  \sum_{j\neq i}w_{ij}\|\bx_{j,k}^{(t)}\|\\
	&=& \|w_{ii}\bx_{i,k}^{(t)} + \alpha \big( {\bC}_i\bx_{i,k}^{(t)} - (\bx_{i,k}^{(t)})^T\bC_i\bx_{i,k}^{(t)}\bx_{i,k}^{(t)} \big)\| +3(k-1)\alpha\lambda_1\|\bx_{i,k}^{(t)}\| + \sum_{j\neq i}w_{ij}\|\bx_{j,k}^{(t)}\|.
\end{eqnarray*}
Now,
\begin{align*}
	&\|w_{ii}\bx_{i,k}^{(t)} + \alpha (\bC_i\bx_{i,k}^{(t)} - ((\bx_{i,k}^{(t)})^T\bC_i\bx_{i,k}^{(t)})\bx_{i,k}^{(t)})\|^2 \\
	&= w_{ii}^2\|\bx_{i,k}^{(t)}\|^2 + \alpha^2\|\bC_i\bx_{i,k}^{(t)} - ((\bx_{i,k}^{(t)})^T\bC_i\bx_{i,k}^{(t)})\bx_{i,k}^{(t)}\|^2 + 2\alpha w_{ii}(\bx_{i,k}^{(t)})^T(\bC_i\bx_{i,k}^{(t)} - (\bx_{i,k}^{(t)})^T\bC_i\bx_{i,k}^{(t)}\bx_{i,k}^{(t)})\\
	&= w_{ii}^2\|\bx_{i,k}^{(t)}\|^2  + 2\alpha w_{ii}(\bx_{i,k}^{((t))})^T\bC_i\bx_{i,k}^{(t)} (1 - \|\bx_{i,k}^{(t)}\|^2) +\alpha^2(\bx_{i,k}^{(t)})^T\bC_i^2\bx_{i,k}^{(t)} + \alpha^2((\bx_{i,k}^{(t)})^T\bC_i\bx_{i,k}^{(t)})^2(\|\bx_{i,k}^{(t)}\|^2 - 2).
\end{align*}
\textbf{Case \rom{1}}: Let us assume $\|\bx_{i,k}^t\|^2 \leq 1, \forall i$. Then, we have
\begin{eqnarray*}
 \|w_{ii}\bx_{i,k}^{(t)} + \alpha \big( {\bC}_i\bx_{i,k}^{(t)} - (\bx_{i,k}^{(t)})^T\bC_i\bx_{i,k}^{(t)}\bx_{i,k}^{(t)} \big)\|^2 &\leq& (w_{ii} + \alpha\lambda_1)^2 \leq \big(w_{ii} + \frac{w_{ii}}{3(2K-1)}\big)^2.
 \end{eqnarray*}
Thus,
\begin{eqnarray*}
\|\bx_{i,k}^{t+1}\| &\leq& w_{ii}\big(1+ \frac{1}{3(2K-1)}\big) +  \frac{3(k-1)}{3(2K-1)} + (1-w_{ii})\\
&<& \frac{1}{3(2K-1)} + \frac{k-1}{2K-1} + 1 = \frac{k- 0.67}{2(K-0.5)} + 1 \\
&\leq&  \frac{K- 0.67}{2(K-0.5)} + 1 < 1.5 < \sqrt{3}.
\end{eqnarray*}
\textbf{Case \rom{2}}: Now, suppose $1 \leq \|\bx_{i,k}^t\|^2 < 2, \forall i$. Then, we get
\begin{align*}
	\|w_{ii}\bx_{i,k}^{(t)} + \alpha \big( {\bC}_i\bx_{i,k}^{(t)} - (\bx_{i,k}^{(t)})^T\bC_i\bx_{i,k}^{(t)}\bx_{i,k}^{(t)} \big)\|^2 \leq  2w_{ii}^2  + 2\alpha^2\lambda_1^2 < 2(w_{ii} + \alpha\lambda_1)^2.
\end{align*}
Thus, if we need $\|\bx_{i,k}^{(t+1)}\| \leq \sqrt{3}$, the following condition should be met:
\begin{align*}
	\|\bx_{i,k}^{t+1}\| &\leq \sqrt{2}w_{ii}(1+ \alpha \lambda_1) +  3(k-1)\alpha\lambda_1\sqrt{2} + (1-w_{ii})\sqrt{2} \leq \sqrt{3}\\
	\Leftrightarrow \quad & \sqrt{2}+ \sqrt{2}w_{ii}\alpha \lambda_1 +  3(k-1)\alpha\lambda_1\sqrt{2} \leq \sqrt{3} \\
	\Leftrightarrow \quad & \sqrt{2}\alpha \lambda_1 +  3(k-1)\alpha\lambda_1\sqrt{2} \leq \sqrt{3} -\sqrt{2}\\
	\Leftrightarrow \quad & \sqrt{2}\alpha \lambda_1(3k-2) \leq \sqrt{3} -\sqrt{2} \Leftrightarrow \quad \sqrt{2}\alpha \lambda_1(3K-2) \leq \sqrt{3} -\sqrt{2}\\
	\Leftrightarrow \quad &\alpha \leq \frac{\sqrt{3} -\sqrt{2}}{\sqrt{2}\lambda_1(3K-2)} = \frac{\sqrt{1.5} -1 }{ \lambda_1(3K-2)} =  \frac{0.225}{\lambda_1(3K-2)}.
\end{align*}
Since $\frac{0.225}{\lambda_13(2K-1)} <\frac{0.225}{\lambda_1(3K-2)} $, if $\alpha \leq \frac{0.225}{3\lambda_1(2K-1)}, \quad \text{then} \quad \|\bx_{i,k}^{(t+1)}\| \leq \sqrt{3}$.

\textbf{Case \rom{3}}: Finally, suppose $2 \leq \|\bx_{i,k}^{(t)}\|^2 \leq 3, \forall i$. We then have the following: $\sum_{j\neq i}w_{ij}\|\bx_{j,k}^{t}\| \leq \sum_{j\neq i}w_{ij}\sqrt{3}= (1-w_{ii})\sqrt{3}$. 

Now, if we desire $\|\bx_{i,k}^{(t+1)}\| \leq \sqrt{3}$, then we need
\begin{eqnarray*}
&& \|w_{ii}\bx_{i,k}^{(t)} + \alpha \big( {\bC}_i\bx_{i,k}^{(t)} - (\bx_{i,k}^{(t)})^T\bC_i\bx_{i,k}^{(t)}\bx_{i,k}^{(t)} \big)\| + 3(k-1)\alpha\lambda_1\|\bx_{i,k}^{(t)}\| + \sum_{j\neq i}w_{ij}\sqrt{3} \leq \sqrt{3}\\
 \Leftrightarrow \quad &&\|w_{ii}\bx_{i,k}^{(t)} + \alpha \big( {\bC}_i\bx_{i,k}^{(t)} - (\bx_{i,k}^{(t)})^T\bC_i\bx_{i,k}^{(t)}\bx_{i,k}^{(t)} \big)\| + 3(k-1)\alpha\lambda_1\|\bx_{i,k}^{(t)}\| + (1-w_{ii})\sqrt{3} \leq \sqrt{3}\\
 \Leftrightarrow \quad &&\|w_{ii}\bx_{i,k}^{(t)} + \alpha \big( {\bC}_i\bx_{i,k}^{(t)} - (\bx_{i,k}^{(t)})^T\bC_i\bx_{i,k}^{(t)}\bx_{i,k}^{(t)} \big)\|  \leq \sqrt{3} - 3(k-1)\alpha\lambda_1\|\bx_{i,k}^{(t)}\| - (1-w_{ii})\sqrt{3}\\
 \Leftrightarrow \quad&& \|w_{ii}\bx_{i,k}^{(t)} + \alpha \big( {\bC}_i\bx_{i,k}^{(t)} - (\bx_{i,k}^{(t)})^T\bC_i\bx_{i,k}^{(t)}\bx_{i,k}^{(t)} \big)\|^2  \leq 3w_{ii}^2 - 6\sqrt{3}(k-1)w_{ii}\alpha\lambda_1\|\bx_{i,k}^{(t)}\| + 9\alpha^2\lambda_1^2(k-1)^2\|\bx_{i,k}^{(t)}\|^2.
\end{eqnarray*}
Therefore, we need
\begin{align} \nonumber
& w_{ii}^2\|\bx_{i,k}^{(t)}\|^2  + 2\alpha w_{ii}(\bx_{i,k}^{((t))})^T\bC_i\bx_{i,k}^{(t)} (1 - \|\bx_{i,k}^{(t)}\|^2) +
\alpha^2(\bx_{i,k}^{(t)})^T\bC_i^2\bx_{i,k}^{(t)} + \alpha^2((\bx_{i,k}^{(t)})^T\bC_i\bx_{i,k}^{(t)})^2(\|\bx_{i,k}^{(t)}\|^2 - 2) \\ \nonumber
&\leq 3w_{ii}^2 - 6\sqrt{3}(k-1)w_{ii}\alpha\lambda_1\|\bx_{i,k}^{(t)}\| + 9\alpha^2\lambda_1^2(k-1)^2\|\bx_{i,k}^{(t)}\|^2\\ \nonumber
\Leftrightarrow \quad &3w_{ii}^2  + 2\alpha w_{ii}(\bx_{i,k}^{((t))})^T\bC_i\bx_{i,k}^{(t)} (1 - \|\bx_{i,k}^{(t)}\|^2) +
\alpha^2(\bx_{i,k}^{(t)})^T\bC_i^2\bx_{i,k}^{(t)} + \alpha^2((\bx_{i,k}^{(t)})^T\bC_i\bx_{i,k}^{(t)})^2(\|\bx_{i,k}^{(t)}\|^2 - 2) \\ \nonumber
&\leq 3w_{ii}^2 - 6\sqrt{3}(k-1)w_{ii}\alpha\lambda_1\|\bx_{i,k}^{(t)}\| + 9\alpha^2\lambda_1^2(k-1)^2\|\bx_{i,k}^{(t)}\|^2\\ \nonumber
\Leftrightarrow \quad & 2\alpha w_{ii}(\bx_{i,k}^{((t))})^T\bC_i\bx_{i,k}^{(t)} (1 - \|\bx_{i,k}^{(t)}\|^2) +
\alpha^2(\bx_{i,k}^{(t)})^T\bC_i^2\bx_{i,k}^{(t)} + \alpha^2((\bx_{i,k}^{(t)})^T\bC_i\bx_{i,k}^{(t)})^2(\|\bx_{i,k}^{(t)}\|^2 - 2) \\ \nonumber
&\leq  - 6\sqrt{3}(k-1)w_{ii}\alpha\lambda_1\|\bx_{i,k}^{(t)}\| + 9\alpha^2\lambda_1^2(k-1)^2\|\bx_{i,k}^{(t)}\|^2\\ \nonumber
\Leftrightarrow \quad & \alpha^2(\bx_{i,k}^{(t)})^T\bC_i^2\bx_{i,k}^{(t)} + \alpha^2((\bx_{i,k}^{(t)})^T\bC_i\bx_{i,k}^{(t)})^2(\|\bx_{i,k}^{(t)}\|^2 - 2) - 9\alpha^2\lambda_1^2(k-1)^2\|\bx_{i,k}^{(t)}\|^2\\ \nonumber
&\leq 2\alpha w_{ii}(\bx_{i,k}^{((t))})^T\bC_i\bx_{i,k}^{(t)} (\|\bx_{i,k}^{(t)}\|^2 - 1) - 6\sqrt{3}(k-1)w_{ii}\alpha\lambda_1\|\bx_{i,k}^{(t)}\| \\ \label{eq:lemma6_alphabound}
\Leftrightarrow \quad &\alpha \leq \frac{2 w_{ii}(\bx_{i,k}^{((t))})^T\bC_i\bx_{i,k}^{(t)} (\|\bx_{i,k}^{(t)}\|^2 - 1) - 6\sqrt{3}(k-1)w_{ii}\lambda_1\|\bx_{i,k}^{(t)}\|}{(\bx_{i,k}^{(t)})^T\bC_i^2\bx_{i,k}^{(t)} + ((\bx_{i,k}^{(t)})^T\bC_i\bx_{i,k}^{(t)})^2(\|\bx_{i,k}^{(t)}\|^2 - 2) - 9\lambda_1^2(k-1)^2\|\bx_{i,k}^{(t)}\|^2}.
\end{align}
We now find the lower bound of the right-hand side of~\eqref{eq:lemma6_alphabound}. Note that
\begin{align}\nonumber
&(\bx_{i,k}^{(t)})^T\bC_i^2\bx_{i,k}^{(t)} + ((\bx_{i,k}^{(t)})^T\bC_i\bx_{i,k}^{(t)})^2(\|\bx_{i,k}^{(t)}\|^2 - 2) - 9\lambda_1^2(k-1)^2\|\bx_{i,k}^{(t)}\|^2\\ \nonumber
&\leq \lambda_1(\bx_{i,k}^{(t)})^T\bC_i\bx_{i,k}^{(t)} + ((\bx_{i,k}^{(t)})^T\bC_i\bx_{i,k}^{(t)})^2(\|\bx_{i,k}^{(t)}\|^2 - 2) - 9\lambda_1^2(k-1)^2\|\bx_{i,k}^{(t)}\|^2, \quad \text{since} \quad (\bx_{i,k}^{(t)})^T\bC_i^2\bx_{i,k}^{(t)} \leq \lambda_1(\bx_{i,k}^{(t)})^T\bC_i\bx_{i,k}^{(t)}\\ \nonumber
&\leq \lambda_1(\bx_{i,k}^{(t)})^T\bC_i\bx_{i,k}^{(t)} + \lambda_1(\bx_{i,k}^{(t)})^T\bC_i\bx_{i,k}^{(t)}\|\bx_{i,k}^{(t)}\|^2(\|\bx_{i,k}^{(t)}\|^2 - 2) - 9\lambda_1^2(k-1)^2\|\bx_{i,k}^{(t)}\|^2\\ \nonumber
&= \lambda_1(\bx_{i,k}^{(t)})^T\bC_i\bx_{i,k}^{(t)}(\|\bx_{i,k}^{(t)}\|^2 - 1)^2 - 9\lambda_1^2(k-1)^2\|\bx_{i,k}^{(t)}\|^2\\ \nonumber
&\leq \lambda_1(k-1)(\bx_{i,k}^{(t)})^T\bC_i\bx_{i,k}^{(t)}(\|\bx_{i,k}^{(t)}\|^2 - 1)^2 - 9\lambda_1^2(k-1)^2\|\bx_{i,k}^{(t)}\|^2\\ \nonumber
&< \lambda_1(k-1)(\|\bx_{i,k}^{(t)}\|^2 - 1)\Big((\bx_{i,k}^{(t)})^T\bC_i\bx_{i,k}^{(t)}(\|\bx_{i,k}^{(t)}\|^2 - 1) - 9\lambda_1(k-1)\Big) \quad \text{since}\quad \frac{\|\bx_{i,k}^{(t)}\|^2}{\|\bx_{i,k}^{(t)}\|^2 - 1}> 1
\end{align}
and, 
\begin{align}\nonumber
2w_{ii}(\bx_{i,k}^{((t))})^T\bC_i\bx_{i,k}^{(t)} (\|\bx_{i,k}^{(t)}\|^2 - 1) - 6\sqrt{3}(k-1)w_{ii}\lambda_1\|\bx_{i,k}^{(t)}\|
&\geq 2 w_{ii}(\bx_{i,k}^{((t))})^T\bC_i\bx_{i,k}^{(t)} (\|\bx_{i,k}^{(t)}\|^2 - 1) - 18(k-1)w_{ii}\lambda_1\\ \nonumber
&= 2w_{ii}((\bx_{i,k}^{((t))})^T\bC_i\bx_{i,k}^{(t)} (\|\bx_{i,k}^{(t)}\|^2 - 1) - 9(k-1)\lambda_1).
\end{align}
Thus, we have that the right hand side of~\eqref{eq:lemma6_alphabound} exceeds
\begin{eqnarray*}
&&\frac{2w_{ii}((\bx_{i,k}^{((t))})^T\bC_i\bx_{i,k}^{(t)} (\|\bx_{i,k}^{(t)}\|^2 - 1) - 9(k-1)\lambda_1)}{\lambda_1(k-1)(\|\bx_{i,k}^{(t)}\|^2 - 1)\Big((\bx_{i,k}^{(t)})^T\bC_i\bx_{i,k}^{(t)}(\|\bx_{i,k}^{(t)}\|^2 - 1) - 9\lambda_1(k-1)\Big)} = \frac{2w_{ii}}{\lambda_1(k-1)} >   \frac{w_{ii}}{3\lambda_1(2K-1)}.
\end{eqnarray*}
This proves if $\alpha \leq \min\{\frac{w_{ii}}{3\lambda_1(2K-1)}, \frac{0.225}{3\lambda_1(2K-1)}\} \text{ then } \|\bx_{i,k}^{(t+1)}\| \leq \sqrt{3}$.
\end{proof}

\subsection{Statement and Proof of Lemma~\ref{lemma:bounded_sanger}}\label{app:lemma7}
\begin{lemma}\label{lemma:bounded_sanger}
	The norm of Sanger's direction $\bcH_i(\bx_{i,k}^{(t)})$ is bounded as
	\begin{align}
	    \|\bcH_i(\bx_{i,k}^{(t)})\|^2 \leq 3\lambda_{i,1}^2(3k-2)(3k+1), \forall k=1,\ldots,K.
	\end{align}
\end{lemma}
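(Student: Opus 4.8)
The plan is to estimate $\|\bcH_i(\bx_{i,k}^{(t)})\|$ directly via the triangle inequality applied to the expansion $\bcH_i(\bx_{i,k}^{(t)}) = \big(\bC_i\bx_{i,k}^{(t)} - (\bx_{i,k}^{(t)})^{\tT}\bC_i\bx_{i,k}^{(t)}\,\bx_{i,k}^{(t)}\big) - \sum_{p=1}^{k-1}(\bx_{i,p}^{(t)})^{\tT}\bC_i\bx_{i,k}^{(t)}\,\bx_{i,p}^{(t)}$, using throughout the uniform iterate bound $\|\bx_{i,p}^{(t)}\| < \sqrt{3}$ (valid for all $p,t$) from Lemma~\ref{lemma:bounded_rayleighk}, together with the fact that $\bC_i$ is symmetric positive semidefinite with largest eigenvalue $\lambda_{i,1}$, so that $\|\bC_i\bw\| \le \lambda_{i,1}\|\bw\|$ and $\bw^{\tT}\bC_i^2\bw \le \lambda_{i,1}\,\bw^{\tT}\bC_i\bw$ for any vector $\bw$. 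The crucial point is that the first two terms should be kept together as a single block $\bC_i\bx_{i,k}^{(t)} - (\bx_{i,k}^{(t)})^{\tT}\bC_i\bx_{i,k}^{(t)}\,\bx_{i,k}^{(t)}$ and its norm estimated by squaring, because bounding the two pieces separately is too lossy.

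Writing $\bv = \bx_{i,k}^{(t)}$, the expansion gives $\|\bC_i\bv - (\bv^{\tT}\bC_i\bv)\bv\|^2 = \|\bC_i\bv\|^2 + (\bv^{\tT}\bC_i\bv)^2(\|\bv\|^2 - 2)$, the cross term being exactly $-2(\bv^{\tT}\bC_i\bv)^2$. Since $\|\bC_i\bv\|^2 = \bv^{\tT}\bC_i^2\bv \le \lambda_{i,1}\,\bv^{\tT}\bC_i\bv$, $0 \le \bv^{\tT}\bC_i\bv \le \lambda_{i,1}\|\bv\|^2 < 3\lambda_{i,1}$, and $\|\bv\|^2 - 2 < 1$, this yields $\|\bC_i\bv - (\bv^{\tT}\bC_i\bv)\bv\|^2 \le \lambda_{i,1}(\bv^{\tT}\bC_i\bv) + (\bv^{\tT}\bC_i\bv)^2 < 3\lambda_{i,1}^2 + 9\lambda_{i,1}^2 = 12\lambda_{i,1}^2$, so this block has norm at most $2\sqrt{3}\,\lambda_{i,1}$. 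For each of the remaining $k-1$ terms, $\big\|(\bx_{i,p}^{(t)})^{\tT}\bC_i\bx_{i,k}^{(t)}\,\bx_{i,p}^{(t)}\big\| = \big|(\bx_{i,p}^{(t)})^{\tT}\bC_i\bx_{i,k}^{(t)}\big|\,\|\bx_{i,p}^{(t)}\| \le \lambda_{i,1}\,\|\bx_{i,p}^{(t)}\|^2\,\|\bx_{i,k}^{(t)}\| < 3\sqrt{3}\,\lambda_{i,1}$ by Cauchy--Schwarz and the $\sqrt{3}$ bound.

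Adding the blocks gives $\|\bcH_i(\bx_{i,k}^{(t)})\| \le 2\sqrt{3}\,\lambda_{i,1} + (k-1)\,3\sqrt{3}\,\lambda_{i,1} = \sqrt{3}\,(3k-1)\,\lambda_{i,1}$ (the empty sum for $k=1$ being handled automatically), hence $\|\bcH_i(\bx_{i,k}^{(t)})\|^2 \le 3(3k-1)^2\lambda_{i,1}^2$. The argument then finishes with the elementary inequality $(3k-1)^2 = 9k^2 - 6k + 1 \le 9k^2 - 3k - 2 = (3k-2)(3k+1)$, which holds for every integer $k \ge 1$ (it is equivalent to $3 \le 3k$), producing the stated bound $\|\bcH_i(\bx_{i,k}^{(t)})\|^2 \le 3\lambda_{i,1}^2(3k-2)(3k+1)$. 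I do not expect a genuine obstacle here; the only thing to watch is keeping $\bC_i\bx_{i,k}^{(t)}$ and $(\bx_{i,k}^{(t)})^{\tT}\bC_i\bx_{i,k}^{(t)}\bx_{i,k}^{(t)}$ grouped before squaring, since treating them separately gives only the weaker bound $3(3k+1)^2\lambda_{i,1}^2$, which exceeds the one claimed.
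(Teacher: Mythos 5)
Your proof is correct, but it follows a different decomposition than the paper's. The paper absorbs the orthogonalization sum into a deflated matrix $\tilde{\bC}_i^{(t)} = \big(\bI - \sum_{p=1}^{k-1}\bx_{i,p}^{(t)}(\bx_{i,p}^{(t)})^{\tT}\big)\bC_i$, expands the full squared norm $\|\tilde{\bC}_i^{(t)}\bx_{i,k}^{(t)} - (\bx_{i,k}^{(t)})^{\tT}\bC_i\bx_{i,k}^{(t)}\bx_{i,k}^{(t)}\|^2$ including all cross terms, and then bounds $\|\tilde{\bC}_i^{(t)}\| \le (3k-2)\lambda_{i,1}$ and the quadratic form $(\bx_{i,k}^{(t)})^{\tT}(\tilde{\bC}_i^{(t)})^{\tT}\tilde{\bC}_i^{(t)}\bx_{i,k}^{(t)}$ via that operator norm, arriving at $3\lambda_{i,1}^2(3k-2)(3k+1)$ directly. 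You instead split off only the Oja block $\bC_i\bv - (\bv^{\tT}\bC_i\bv)\bv$, square it exactly (your identity and the cross-term computation are right), bound each of the $k-1$ Gram--Schmidt terms by Cauchy--Schwarz, and finish with the triangle inequality; both arguments rest on the same prerequisite $\|\bx_{i,p}^{(t)}\|^2 \le 3$ from Lemma~\ref{lemma:bounded_rayleighk} and on $\bw^{\tT}\bC_i^2\bw \le \lambda_{i,1}\bw^{\tT}\bC_i\bw$. What your route buys is a more elementary argument that avoids manipulating the non-symmetric deflated matrix, and it yields the slightly sharper intermediate bound $3(3k-1)^2\lambda_{i,1}^2$, which you correctly relax to the stated form via $(3k-1)^2 \le (3k-2)(3k+1)$ for $k \ge 1$ (with equality at $k=1$); the paper's route keeps the whole direction in one quadratic form, which is the shape it reuses elsewhere, but gives the marginally looser constant as its natural output.
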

\begin{proof}
We know
\begin{align*}
\bcH_i(\bx_{i,k}^{(t)}) &= \bC_i\bx_{i,k}^{(t)} - (\bx_{i,k}^{(t)})^T\bC_i\bx_{i,k}^{(t)}\bx_{i,k}^{(t)} - \sum_{p=1}^{k-1}\bx_{i,p}^{(t)}(\bx_{i,p}^{(t)})^T\bC_i\bx_{i,k}^{(t)}\\
&= (\bI - \sum_{p=1}^{k-1}\bx_{i,p}^{(t)}(\bx_{i,p}^{(t)})^T) \bC_i\bx_{i,k}^{(t)} - (\bx_{i,k}^{(t)})^T\bC_i\bx_{i,k}^{(t)}\bx_{i,k}^{(t)} = \tilde{\bC}_i^{(t)}\bx_{i,k}^{(t)} - (\bx_{i,k}^{(t)})^T\bC_i\bx_{i,k}^{(t)}\bx_{i,k}^{(t)}\\
\|\bcH_i(\bx_{i,k}^{(t)}) \|^2 &= \| \tilde{\bC}_i^{(t)}\bx_{i,k}^{(t)} - (\bx_{i,k}^{(t)})^T\bC_i\bx_{i,k}^{(t)}\bx_{i,k}^{(t)}\|^2\\
&= (\bx_{i,k}^{(t)})^T(\tilde{\bC}_i^{(t)})^T\tilde{\bC}_i^{(t)}\bx_{i,k}^{(t)}+ ((\bx_{i,k}^{(t)})^T\bC_i\bx_{i,k}^{(t)})^2(\|\bx_{i,k}^{(t)}\|^2 - 2) + (\bx_{i,k}^{(t)})^T\bC_i\bx_{i,k}^{(t)} \sum_{p=1}^{k-1}(\bx_{i,k}^{(t)})^T\bx_{i,p}^{(t)}(\bx_{i,p}^{(t)})^T \bC_i\bx_{i,k}^{(t)}.
\end{align*}
Next, notice that $\|\tilde{\bC}_i^{(t)}\| = \|\bC_i - \sum_{p=1}^{k-1}\bx_{i,p}^{(t)}(\bx_{i,p}^{(t)})^T \bC_i\|$. Thus,
\begin{eqnarray*}
\|\tilde{\bC}_i^{(t)}\| &\leq&  \|\bC_i\| + \sum_{p=1}^{k-1}\|\bx_{i,p}^{(t)}(\bx_{i,p}^{(t)})^T\|\| \bC_i\| \leq \lambda_{i,1} + \sum_{p=1}^{k-1}3\lambda_{i,1} = \lambda_{i,1}  + 3(k-1)\lambda_{i,1} = \lambda_{i,1}(3k-2).
\end{eqnarray*}
We, therefore, get
\begin{eqnarray*}
(\bx_{i,k}^{(t)})^T(\tilde{\bC}_i^{(t)})^T\tilde{\bC}_i^{(t)}\bx_{i,k}^{(t)} \leq  \lambda_{\max}((\tilde{\bC}_i^{(t)})^T)(\bx_{i,k}^{(t)})^T\tilde{\bC}_i^{(t)}\bx_{i,k}^{(t)} = \|(\tilde{\bC}_i^{(t)})\|(\bx_{i,k}^{(t)})^T\tilde{\bC}_i^{(t)}\bx_{i,k}^{(t)} \leq \lambda_{i,1}(3k-2)(\bx_{i,k}^{(t)})^T\tilde{\bC}_i^{(t)}\bx_{i,k}^{(t)}.
\end{eqnarray*}
Thus,
\begin{align*}
\|\bcH_i(\bx_{i,k}^{(t)}) \|^2 &\leq \lambda_{i,1}(3k-2)(\bx_{i,k}^{(t)})^T\tilde{\bC}_i^{(t)}\bx_{i,k}^{(t)} + ((\bx_{i,k}^{(t)})^T\bC_i\bx_{i,k}^{(t)})^2(\|\bx_{i,k}^{(t)}\|^2 - 2) + (\bx_{i,k}^{(t)})^T\bC_i\bx_{i,k}^{(t)} \sum_{p=1}^{k-1}(\bx_{i,k}^{(t)})^T\bx_{i,p}^{(t)}(\bx_{i,p}^{(t)})^T \bC_i\bx_{i,k}^{(t)}\\
&\leq \lambda_{i,1}(3k-2)(\bx_{i,k}^{(t)})^T\tilde{\bC}_i^{(t)}\bx_{i,k}^{(t)} + ((\bx_{i,k}^{(t)})^T\bC_i\bx_{i,k}^{(t)})^2(\|\bx_{i,k}^{(t)}\|^2 - 2) + (\bx_{i,k}^{(t)})^T\bC_i\bx_{i,k}^{(t)} \sum_{p=1}^{k-1}\|\bx_{i,k}^{(t)}\|^2\|\bx_{i,p}^{(t)}\|^2 \lambda_{i,1}
\end{align*}
\begin{align*}
\|\bcH_i(\bx_{i,k}^{(t)}) \|^2 &\leq \lambda_{i,1}(3k-2)(\bx_{i,k}^{(t)})^T(\bI - \sum_{p=1}^{k-1}\bx_{i,p}^{(t)}(\bx_{i,p}^{(t)})^T) \bC_i\bx_{i,k}^{(t)} + ((\bx_{i,k}^{(t)})^T\bC_i\bx_{i,k}^{(t)})^2(\|\bx_{i,k}^{(t)}\|^2 - 2) + \\
& (\bx_{i,k}^{(t)})^T\bC_i\bx_{i,k}^{(t)} \sum_{p=1}^{k-1}\|\bx_{i,k}^{(t)}\|^2\|\bx_{i,p}^{(t)}\|^2 \lambda_{i,1}\\
&\leq \lambda_{i,1}(3k-2)(\bx_{i,k}^{(t)})^T \bC_i\bx_{i,k}^{(t)} +\lambda_{i,1}(3k-2)\sum_{p=1}^{k-1}\|\bx_{i,k}^{(t)}\|^2\|\bx_{i,p}^{(t)}\|^2\lambda_{i,1} + \\ &((\bx_{i,k}^{(t)})^T\bC_i\bx_{i,k}^{(t)})^2(\|\bx_{i,k}^{(t)}\|^2 - 2) + (\bx_{i,k}^{(t)})^T\bC_i\bx_{i,k}^{(t)} \sum_{p=1}^{k-1}\|\bx_{i,k}^{(t)}\|^2\|\bx_{i,p}^{(t)}\|^2 \lambda_{i,1}\\
&\leq  \lambda_{i,1}(3k-2)3\lambda_{i,1} + \lambda_{i,1}(3k-2)\sum_{p=1}^{k-1}9\lambda_{i,1} + 9\lambda_{i,1}^2 + \lambda_{i,1}3 \sum_{p=1}^{k-1}9\lambda_{i,1} = 3\lambda_{i,1}^2(3k-2)(3k+1).
\end{align*}
\end{proof}

\section{Statement and Proof of Lemma~\ref{lemma:coeff_decay_lower}}\label{app:lemma3}
\begin{lemma}\label{lemma:coeff_decay_lower}
	Let $\eta = \min \{(1 - 3\alpha\lambda_1)^2\lambda_m, (\tilde{\bx}_k^{(0)})^T\bC\tilde{\bx}_k^{(0)}\}$. Now, suppose  $\eta < (\bx_k^{(t)})^T\bC\bx_k^{(t)}  < \frac{1}{\alpha}$, then the following is true for $\gamma = 1 - \alpha\eta$,  and some constant $a_1 > 0$:
	\begin{equation}
	 \sum_{l=1}^{k-1}(z_{k,l}^{(t+1)})^2 < a_1\gamma^{t+1}.
	\end{equation}
\end{lemma}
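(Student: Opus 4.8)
The plan is to observe that, for the lower-order coefficients, the modified GHA update in \eqref{eq:eqn_z_lower} is a pure scalar contraction:
$z_{k,l}^{(t+1)} = \big(1 - \alpha R^{(t)}\big) z_{k,l}^{(t)}$ for $l = 1,\dots,k-1$, where $R^{(t)} := (\bx_k^{(t)})^T\bC\bx_k^{(t)}$ is the Rayleigh quotient. The statement is vacuous for $k=1$ (empty sum), so I take $k \ge 2$, hence $K \ge 2$, throughout. The hypothesis $\eta < R^{(t)} < \frac{1}{\alpha}$ is exactly the conclusion of Lemma~\ref{lemma:lowerbound_rayleigh} combined with Lemma~\ref{lemma:bounded_iterate_rayleigh}, so I may assume it.

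First I would pin down the magnitude of the contraction factor from the two-sided bound on $R^{(t)}$. The upper bound $R^{(t)} < \frac{1}{\alpha}$ gives $1 - \alpha R^{(t)} > 0$, and the lower bound $R^{(t)} > \eta$ gives $1 - \alpha R^{(t)} < 1 - \alpha\eta = \gamma$. I also need $\gamma \in (0,1)$: that $\alpha\eta < 1$ is immediate from $\eta < R^{(t)} < \frac{1}{\alpha}$, and that $\eta > 0$ follows since $\lambda_m > 0$ (smallest nonzero eigenvalue), $(1-3\alpha\lambda_1)^2 > 0$ when $\alpha \le \frac{1}{3\lambda_1(2K-1)}$ with $K \ge 2$, and $(\tilde{\bx}_k^{(0)})^T\bC\tilde{\bx}_k^{(0)} \ge \lambda_k (z_{k,k}^{(0)})^2 > 0$ by the standing assumption $z_{k,k}^{(0)} \neq 0$. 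Hence $0 < 1 - \alpha R^{(t)} < \gamma < 1$, and in particular $(1 - \alpha R^{(t)})^2 < \gamma^2 < \gamma$.

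Then I would square \eqref{eq:eqn_z_lower} to get $(z_{k,l}^{(t+1)})^2 = (1 - \alpha R^{(t)})^2 (z_{k,l}^{(t)})^2 \le \gamma\,(z_{k,l}^{(t)})^2$ (strict when $z_{k,l}^{(t)} \neq 0$), iterate down to $t=0$ to obtain $(z_{k,l}^{(t+1)})^2 \le \gamma^{t+1}(z_{k,l}^{(0)})^2$ for each $l = 1,\dots,k-1$, and sum over $l$. Using $\sum_{l=1}^{k-1}(z_{k,l}^{(0)})^2 \le \|\bx_k^{(0)}\|^2 = 1$ yields $\sum_{l=1}^{k-1}(z_{k,l}^{(t+1)})^2 < a_1\gamma^{t+1}$ with $a_1 = 1 > 0$; the strict inequality survives because either some $z_{k,l}^{(0)} \neq 0$, making the iterated bound strict, or all of them vanish, in which case the left side is $0 < \gamma^{t+1}$.

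There is no genuine obstacle here — this lemma is essentially bookkeeping once the Rayleigh-quotient bounds are in hand. The only points needing a moment of care are verifying that $\gamma$ lies strictly in $(0,1)$ (i.e.\ that $\eta$ is strictly positive, which is where the nonzero-initialization and step-size hypotheses enter) and arranging the final estimate to be strict rather than merely $\le$. Note one could extract the sharper rate $\gamma^{2(t+1)}$, but the bound $(1-\alpha R^{(t)})^2 < \gamma$ suffices for the $\gamma^{t+1}$ rate claimed, which is all that is used downstream.
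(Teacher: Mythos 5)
Your proposal is correct and follows essentially the same route as the paper: treat \eqref{eq:eqn_z_lower} as a scalar contraction with factor $1-\alpha R^{(t)} \in (0, 1-\alpha\eta)$ guaranteed by the Rayleigh-quotient bounds, square, iterate down to $t=0$, and sum over $l=1,\dots,k-1$. Your extra care (verifying $\eta>0$, taking $a_1=1$ via $\|\bx_k^{(0)}\|=1$ so the strict inequality holds even if all $z_{k,l}^{(0)}=0$, and reconciling the statement's $\gamma=1-\alpha\eta$ with the squared factor) only tightens minor points the paper glosses over; the argument is the same.
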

\begin{proof}
For $l = 1,\ldots,k-1$, we know from \eqref{eq:eqn_z_lower}
\begin{eqnarray*}
	{z}_{k,l}^{(t+1)} &=& (1 - \alpha({\bx}_{k}^{(t)})^T\bC{\bx}_{k}^{(t)}) {z}_{k,l}^{(t)} \\
	\text{or,}\quad ({z}_{k,l}^{(t+1)})^2 &=& (1 - \alpha({\bx}_{k}^{(t)})^T\bC{\bx}_{k}^{(t)})^2 ({z}_{k,l}^{(t)} )^2.
\end{eqnarray*}
Let $\min \{(1 - 3\alpha\lambda_1)^2\lambda_m, (\tilde{\bx}_k^{(0)})^T\bC\tilde{\bx}_k^{(0)} \} = \eta$. Since $0 <\eta < ({\bx}_{k}^{(t)})^T\bC{\bx}_{k}^{(t)} < \frac{1}{\alpha}$ (from \eqref{eq:rayleigh_upperboundk} and \eqref{eq:rayleigh_lowerboundk}), we have $0 < 1 - \alpha({\bx}_{k}^{(t)})^T\bC{\bx}_{k}^{(t)} < 1 - \alpha\eta  < 1$. 
Therefore,
\begin{eqnarray}
\sum_{l=1}^{k-1}({z}_{k,l}^{(t+1)})^2 < \sum_{l=1}^{k-1}\gamma({z}_{k,l}^{(t)})^2 < \gamma^{t+1}\sum_{l=1}^{k-1}({z}_{k,l}^{(0)})^2 = a_1\gamma^{t+1}, \quad \text{where} \quad \gamma = (1-\alpha\eta)^2.
\end{eqnarray} 
\end{proof}

\section{Statement and Proof of Lemma~\ref{lemma:coeff_decay_upper}}\label{app:lemma4}
\begin{lemma}\label{lemma:coeff_decay_upper}
	Suppose $z_{k,k}^{(0)} \neq 0$ and $(\bx_k^{(t)})^T\bC\bx_k^{(t)}  < \frac{1}{\alpha}$. Then the following is true for $\rho_k = \Big(\frac{1 + \alpha \lambda_{k+1}}{1 + \alpha\lambda_k }\Big)^2 < 1 $ and some constant $a_2 > 0$:
	\begin{equation}
	\sum_{l=k+1}^{d}({z}_{k,l}^{(t+1)})^2 \leq a_2\rho_k^{t+1}.
	\end{equation}
\end{lemma}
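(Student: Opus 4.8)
The plan is to track, for each higher-order index $l \in \{k+1,\dots,d\}$, the ratio of $(z_{k,l}^{(t)})^2$ to the "target'' coefficient $(z_{k,k}^{(t)})^2$ and show that this ratio contracts by the factor $\rho_k$ at every iteration. First I would record from \eqref{eq:eqn_z_upper} that for $l = k,\dots,d$ the update is purely multiplicative,
\[
z_{k,l}^{(t+1)} = \big(1 + \alpha(\lambda_l - R^{(t)})\big)\, z_{k,l}^{(t)}, \qquad R^{(t)} := (\bx_k^{(t)})^T\bC\bx_k^{(t)}.
\]
Since $\bC$ is a covariance matrix, $R^{(t)} \ge 0$, and the hypothesis $R^{(t)} < \frac1\alpha$ gives $c_t := 1 - \alpha R^{(t)} \in (0,1]$; hence each factor $1 + \alpha(\lambda_l - R^{(t)}) = c_t + \alpha\lambda_l$ is strictly positive whenever $\lambda_l > 0$, and in particular the factor for $l=k$ is positive. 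Combined with $z_{k,k}^{(0)}\neq 0$, a one-line induction shows $z_{k,k}^{(t)}\neq 0$ (and keeps a fixed sign) for all $t$, so the ratios $(z_{k,l}^{(t)})^2/(z_{k,k}^{(t)})^2$ are well defined.

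The key step is the elementary monotonicity estimate
\[
\left(\frac{1 + \alpha(\lambda_l - R^{(t)})}{1 + \alpha(\lambda_k - R^{(t)})}\right)^{2} \;\le\; \rho_k = \left(\frac{1+\alpha\lambda_{k+1}}{1+\alpha\lambda_k}\right)^{2}, \qquad l = k+1,\dots,d.
\]
To prove it I would rewrite the fraction as $(c_t + \alpha\lambda_l)/(c_t + \alpha\lambda_k)$, use $\lambda_l \le \lambda_{k+1} < \lambda_k$ to replace $\lambda_l$ by $\lambda_{k+1}$ in the numerator, and then observe that $c \mapsto (c + \alpha\lambda_{k+1})/(c + \alpha\lambda_k)$ is increasing on $(0,\infty)$ (its derivative equals $\alpha(\lambda_k - \lambda_{k+1})/(c + \alpha\lambda_k)^2 > 0$), so it is maximized over $c \le 1$ at $c = 1$, yielding $(1+\alpha\lambda_{k+1})/(1+\alpha\lambda_k)$. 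This is precisely the place where the distinct-eigenvalue assumption $\lambda_k > \lambda_{k+1}$ is used, and it also delivers $\rho_k < 1$.

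Dividing the two multiplicative updates and applying this bound gives $(z_{k,l}^{(t+1)})^2/(z_{k,k}^{(t+1)})^2 \le \rho_k \,(z_{k,l}^{(t)})^2/(z_{k,k}^{(t)})^2$, which iterates to $(z_{k,l}^{(t)})^2 \le \rho_k^{t}\,(z_{k,k}^{(t)})^2\,(z_{k,l}^{(0)})^2/(z_{k,k}^{(0)})^2$. Summing over $l = k+1,\dots,d$ and using $(z_{k,k}^{(t)})^2 \le \|\bx_k^{(t)}\|^2 < 3$ from Lemma~\ref{lemma:bounded_iterate_rayleigh}, I obtain $\sum_{l=k+1}^{d}(z_{k,l}^{(t)})^2 \le a_2\rho_k^{t}$ with $a_2 := 3\sum_{l=k+1}^{d}(z_{k,l}^{(0)})^2/(z_{k,k}^{(0)})^2$ (taken to be positive), and replacing $t$ by $t+1$ is the asserted inequality. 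I do not anticipate a genuine obstacle: the only care needed is in checking that no denominator vanishes — handled by the $R^{(t)} < \frac1\alpha$ hypothesis together with the sign-preserving induction on $z_{k,k}^{(t)}$ — and in invoking Lemma~\ref{lemma:bounded_iterate_rayleigh} for the uniform bound on $\|\bx_k^{(t)}\|$; the monotonicity estimate is the conceptual heart but is a short calculus computation.
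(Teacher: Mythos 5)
Your proposal is correct and follows essentially the same route as the paper: the multiplicative recursion from \eqref{eq:eqn_z_upper}, a per-step contraction of the ratio $(z_{k,l}^{(t)}/z_{k,k}^{(t)})^2$ by $\rho_k$ (your monotonicity-in-$c_t$ argument is just a repackaging of the paper's bound $1+\alpha(\lambda_k - R^{(t)})\leq 1+\alpha\lambda_k$ using $R^{(t)}\geq 0$), and then summing with $(z_{k,k}^{(t)})^2\leq\|\bx_k^{(t)}\|^2<3$ and $(z_{k,k}^{(0)})^2$ bounded away from zero to form $a_2$. The only cosmetic difference is that you make explicit the sign-preserving induction keeping $z_{k,k}^{(t)}\neq 0$, which the paper leaves implicit.
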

\begin{proof}
For $l = k,\ldots, d$ we know from \eqref{eq:eqn_z_upper} that ${z}_{k,l}^{(t+1)} = (1 + \alpha(\lambda_l-({\bx}_{k}^{(t)})^T\bC{\bx}_{k}^{(t)})){z}_{k,l}^{(t)}$. If $ ({\bx}_{k}^{(t)})^T\bC{\bx}_{k}^{(t)} < \frac{1}{\alpha}$, we have $1 + \alpha(\lambda_l - ({\bx}_{k}^{(t)})^T\bC{\bx}_{k}^{(t)}) >  \alpha\lambda_l \geq 0, \forall l = k,\ldots, d$.

Thus, we have for $l = k+1, \cdots d$,
\begin{eqnarray*}
	\Bigg(\frac{{z}_{k,l}^{(t+1)}}{{z}_{k,k}^{(t+1)}}\Bigg)^2 &=& \Bigg(\frac{1 + \alpha (\lambda_l - ({\bx}_{k}^{(t)})^T\bC{\bx}_{k}^{(t)})}{1 +\alpha(\lambda_k - ({\bx}_{k}^{(t)})^T\bC{\bx}_{k}^{(t)})}\Bigg)^2\Bigg(\frac{{z}_{k,l}^{(t)}}{{z}_{k,k}^{(t)}}\Bigg)^2 \\
	&=&  \Bigg(1 - \frac{\alpha (\lambda_k-\lambda_l)}{1 + \alpha (\lambda_k  - ({\bx}_{k}^{(t)})^T\bC{\bx}_{k}^{(t)})}\Bigg)^2\Bigg(\frac{{z}_{k,l}^{(t)}}{{z}_{k,k}^{(t)}}\Bigg)^2 \\
	&\leq&  \Big(1 - \frac{\alpha (\lambda_k-\lambda_l)}{1 +\alpha\lambda_k }\Big)^2\Big(\frac{{z}_{k,l}^{(t)}}{{z}_{k,k}^{(t)}}\Big)^2 \\
	&=&  \Big(\frac{1 + \alpha \lambda_l }{1 + \alpha\lambda_k }\Big)^2\Big(\frac{{z}_{k,l}^{(t)}}{{z}_{k,k}^{(t)}} \Big)^2 \leq  \Big(\frac{1 + \alpha \lambda_{k+1}}{1 + \alpha\lambda_k }\Big)^2\Big(\frac{{z}_{k,l}^{(t)}}{{z}_{k,k}^{(t)}} \Big)^2 \\
	&=& \rho_k\Big(\frac{{z}_{k,l}^{(t)}}{{z}_{k,k}^{(t)}} \Big)^2 , \quad \rho_k  =  \Big(\frac{1 + \alpha \lambda_{k+1}}{1 + \alpha\lambda_k }\Big)^2 < 1.
\end{eqnarray*}
Therefore, for $l=k+1,\ldots,d$, $({z}_{k,l}^{(t+1)})^2 \leq \rho_k^{t+1}\Big(\frac{{z}_{k,l}^{(0)}}{{z}_{k,k}^{(0)}} \Big)^2({z}_{k,k}^{(t+1)})^2$. 
Since $\|{\bx}_{k}^{(t+1)}\|^2 \leq 3$ and $\|{\bx}_k^{(0)}\| = 1$, hence $({z}_{k,k}^{t+1})^2 \leq 3$ and ${z}_{k,l}^{(0)} \leq 1$. Also, because of the assumption ${z}_{k,k}^{(0)} \neq 0$, let us assume $({z}_{k,k}^{(0)})^2 > \tilde{\eta}$. Thus, we can write
\begin{eqnarray}
	\sum_{l=k+1}^{d}({z}_{k,l}^{t+1})^2 &\leq& \rho_k^{t+1}\sum_{l=k+1}^{d}\frac{3}{\tilde{\eta}} = a_2\rho_k^{t+1}.
\end{eqnarray} 
\end{proof}

\section{Statement and Proof of Lemma~\ref{lemma:monotonic_rayleigh_quotientk}}\label{app:lemma5}
\begin{lemma} \label{lemma:monotonic_rayleigh_quotientk}
	Suppose $ (\bx_k^{(t)})^T\bC\bx_k^{(t)}  < \frac{1}{\alpha}$ and $(\bx_k^{(t)})^T\bC\bx_k^{(t)} > \min \{(1 - 3\alpha\lambda_1)^2\lambda_m, (\tilde{\bx}_k^{(0)})^T\bC\tilde{\bx}_k^{(0)}\}$. Then there exists constants $0<\delta, \gamma_1 <1, a_4 > 0$ such that
	\begin{equation}
		|\lambda_k - ({\bx}_{k}^{(t+1)})^T\bC{\bx}_{k}^{(t+1)} | \leq ta_4(\delta^{t+1} + \max\{\delta^t, \gamma_1^t\}).
	\end{equation}
\end{lemma}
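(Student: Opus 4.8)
The plan is to reduce the claim to a scalar contraction estimate for the principal coefficient $(z_{k,k}^{(t)})^{2}$ and then iterate it in the same telescoping fashion used in the proof of Theorem~\ref{theorem:gha1}. Expanding ${\bx}_k^{(t)}$ in the eigenbasis via \eqref{eq:expansion_centralized} gives $({\bx}_{k}^{(t)})^{T}\bC{\bx}_{k}^{(t)} = \sum_{l=1}^{d}\lambda_l (z_{k,l}^{(t)})^{2} = \lambda_k (z_{k,k}^{(t)})^{2} + \epsilon_k^{(t)}$, where $\epsilon_k^{(t)} := \sum_{l\neq k}\lambda_l (z_{k,l}^{(t)})^{2}$. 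By Lemma~\ref{lemma:coeff_decay_lower} and Lemma~\ref{lemma:coeff_decay_upper}, $|\epsilon_k^{(t)}| \leq \lambda_1\big(a_1\gamma^{t} + a_2\rho_k^{t}\big) \leq c\,\gamma_1^{t}$ with $\gamma_1 := \max\{\gamma,\rho_k\} < 1$ and a constant $c>0$. Hence $|\lambda_k - ({\bx}_{k}^{(t)})^{T}\bC{\bx}_{k}^{(t)}| \leq \lambda_k\,|1 - (z_{k,k}^{(t)})^{2}| + c\,\gamma_1^{t}$, so it suffices to prove a linear-rate decay (up to a factor $t$) for $b^{(t)} := 1 - (z_{k,k}^{(t)})^{2}$.

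From \eqref{eq:eqn_z_upper} with $l = k$ we have $z_{k,k}^{(t+1)} = \big(1 + \alpha(\lambda_k - ({\bx}_{k}^{(t)})^{T}\bC{\bx}_{k}^{(t)})\big) z_{k,k}^{(t)}$. Writing $a^{(t)} := (z_{k,k}^{(t)})^{2}$ and substituting $\lambda_k - ({\bx}_{k}^{(t)})^{T}\bC{\bx}_{k}^{(t)} = \lambda_k b^{(t)} - \epsilon_k^{(t)}$,
\begin{equation*}
b^{(t+1)} = 1 - \big(1 + \alpha(\lambda_k b^{(t)} - \epsilon_k^{(t)})\big)^{2}\big(1 - b^{(t)}\big) = \phi^{(t)} b^{(t)} + \xi^{(t)},
\end{equation*}
where, after expanding and collecting terms, $\phi^{(t)} = 1 - \alpha\lambda_k a^{(t)}\big(2 + \alpha\lambda_k b^{(t)}\big)$ and $\xi^{(t)}$ is a finite sum of terms each carrying a factor $\epsilon_k^{(t)}$. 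Since $\|{\bx}_{k}^{(t)}\|^{2} < 3$ by Lemma~\ref{lemma:bounded_iterate_rayleigh} (so $a^{(t)} < 3$ and $b^{(t)} \in (-2,1]$) and $|\epsilon_k^{(t)}|$ is uniformly bounded, we get $|\xi^{(t)}| \leq c'\gamma_1^{t}$ for a constant $c'>0$.

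The crux is to show $|\phi^{(t)}| \leq \delta$ for a fixed $\delta < 1$. The step-size condition gives $\alpha\lambda_k \leq \alpha\lambda_1 \leq \tfrac{1}{3(2K-1)}\leq\tfrac13$, which together with $a^{(t)} < 3$ and $b^{(t)}\in(-2,1]$ forces $\alpha\lambda_k a^{(t)}(2+\alpha\lambda_k b^{(t)}) \in (0,\tfrac43)\subset(0,2)$, so $\phi^{(t)} > -\tfrac13$. For the upper bound $\phi^{(t)} < 1$ one needs a uniform lower bound $a^{(t)} = (z_{k,k}^{(t)})^{2} \geq \eta' > 0$. This is obtained by combining Lemma~\ref{lemma:lowerbound_rayleigh}, which gives $({\bx}_{k}^{(t)})^{T}\bC{\bx}_{k}^{(t)} > \eta$ (hence $\|{\bx}_{k}^{(t)}\|^{2} > \eta/\lambda_1$), with the decay of $\sum_{l\neq k}(z_{k,l}^{(t)})^{2}$ from Lemmas~\ref{lemma:coeff_decay_lower}--\ref{lemma:coeff_decay_upper} and the hypothesis $z_{k,k}^{(0)}\neq 0$: the latter two force $(z_{k,k}^{(t)})^{2}$ to stay positive on the finite initial segment and to stay close to $\|{\bx}_{k}^{(t)}\|^{2}$ thereafter. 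With $\eta'$ in hand, $2+\alpha\lambda_k b^{(t)}\geq\tfrac43$ yields $\phi^{(t)} \leq 1 - \tfrac{4}{3}\alpha\lambda_k\eta' < 1$, and we take $\delta := \max\{1 - \tfrac{4}{3}\alpha\lambda_k\eta',\ \tfrac13\} < 1$.

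Finally, iterating $|b^{(t+1)}| \leq \delta\,|b^{(t)}| + c'\gamma_1^{t}$ exactly as in the proof of Theorem~\ref{theorem:gha1} (splitting into the cases $\delta \leq \gamma_1$ and $\delta > \gamma_1$) yields $|b^{(t)}| \leq \delta^{t}|b^{(0)}| + c' t\max\{\delta,\gamma_1\}^{t-1}$. Substituting this into the bound of the first paragraph evaluated at time $t+1$ and absorbing all constants into a single $a_4 > 0$ and all decay rates into $\gamma_1 := \max\{\gamma_1,\gamma,\rho_k\} < 1$ gives $|\lambda_k - ({\bx}_{k}^{(t+1)})^{T}\bC{\bx}_{k}^{(t+1)}| \leq t\,a_4\big(\delta^{t+1} + \max\{\delta^{t},\gamma_1^{t}\}\big)$, as claimed. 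I expect the main obstacle to be establishing the uniform lower bound $a^{(t)}\geq\eta'$ and, with it, the contraction factor $\delta<1$ for $\phi^{(t)}$: this is the one place where Lemmas~\ref{lemma:lowerbound_rayleigh}, \ref{lemma:coeff_decay_lower} and~\ref{lemma:coeff_decay_upper} must all be combined, together with careful bookkeeping of the step-size bound.
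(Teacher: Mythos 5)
Your proposal is correct in substance but follows a genuinely different route from the paper's proof. Writing $r^{(t)}:=(\bx_k^{(t)})^T\bC\bx_k^{(t)}$, the paper tracks $V^{(t)}=|\lambda_k-r^{(t)}|$ directly: expanding in the eigenbasis gives $V^{(t+1)}\le |1-2\alpha r^{(t)}-\alpha^2 r^{(t)}(\lambda_k-r^{(t)})|\,V^{(t)}+|P^{(t)}|$, and the identity $1-2\alpha r-\alpha^2 r(\lambda_k-r)=(1-\alpha r)^2-\alpha^2\lambda_k r$ yields the contraction factor $\delta=\max\{(1-\alpha\eta)^2,\alpha\lambda_k\}<1$ using only the two Rayleigh-quotient bounds $\eta<r^{(t)}<1/\alpha$ (Lemmas~\ref{lemma:bounded_iterate_rayleigh}--\ref{lemma:lowerbound_rayleigh}); the coefficient-decay Lemmas~\ref{lemma:coeff_decay_lower}--\ref{lemma:coeff_decay_upper} enter only through the forcing term $P^{(t)}$. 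You instead track $b^{(t)}=1-(z_{k,k}^{(t)})^2$, and your algebra for $\phi^{(t)}=1-\alpha\lambda_k a^{(t)}(2+\alpha\lambda_k b^{(t)})$ and the bound $\phi^{(t)}>-1/3$ (exploiting $a^{(t)}=1-b^{(t)}$) check out; the conversion $|\lambda_k-r^{(t)}|\le\lambda_k|b^{(t)}|+|\epsilon_k^{(t)}|$, the bound $|\xi^{(t)}|\le c'\gamma_1^{t}$, and the final telescoping are all fine (your $t$-versus-$t{+}1$ bookkeeping is no worse than the paper's own). The price of your route is the one ingredient the paper never needs: a uniform lower bound $a^{(t)}\ge\eta'>0$, without which $\delta$ is not a constant independent of $t$. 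Your sketch of it does work and should be written out: for $t\ge T_0$, with $T_0$ the (constant-dependent) time at which $a_1\gamma^{t}+a_2\rho_k^{t}\le\eta/(2\lambda_1)$, use $a^{(t)}\ge\|\bx_k^{(t)}\|^2-(a_1\gamma^{t}+a_2\rho_k^{t})\ge\eta/(2\lambda_1)$ since $r^{(t)}>\eta$ forces $\|\bx_k^{(t)}\|^2>\eta/\lambda_1$; for $t<T_0$ use $1+\alpha(\lambda_k-r^{(t)})>\alpha\lambda_k>0$ (valid because $r^{(t)}<1/\alpha$) to get $a^{(t)}\ge(\alpha\lambda_k)^{2T_0}(z_{k,k}^{(0)})^2>0$, which silently invokes $z_{k,k}^{(0)}\neq 0$ — a hypothesis of Lemma~\ref{lemma:coeff_decay_upper} and of the theorem rather than of this lemma, so it should be stated. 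In short, your argument gives a more transparent picture of where the rate comes from (alignment of $\bx_k^{(t)}$ with $\bq_k$), while the paper's algebraic identity sidesteps the principal-coefficient lower bound entirely and is correspondingly shorter.
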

\begin{proof}
\begin{align*}
	({\bx}_{k}^{(t+1)})^T\bC{\bx}_{k}^{(t+1)} &= \sum_{l=1}^{k-1}\lambda_l(1 - \alpha(\bx_{k}^{(t)})^T\bC\bx_{k}^{(t)})^2({z}_{k,l}^{(t)})^2 + \sum_{l=k}^{d}\lambda_l(1 + \alpha(\lambda_l - (\bx_{k}^{(t)})^T\bC\bx_{k}^{(t)}))^2({z}_{k,l}^{(t)})^2\\
	 &= \sum_{l=1}^{k-1}\lambda_l(1 + \alpha(\lambda_k - (\bx_{k}^{(t)})^T\bC\bx_{k}^{(t)}))^2({z}_{k,l}^{(t)})^2 + \sum_{l=k}^{d}\lambda_l(1 + \alpha(\lambda_k - (\bx_{k}^{(t)})^T\bC\bx_{k}^{(t)}))^2({z}_{k,l}^{(t)})^2 \\
	& + \sum_{l=1}^{k-1}\lambda_l(1 - \alpha(\bx_{k}^{(t)})^T\bC\bx_{k}^{(t)})^2({z}_{k,l}^{(t)})^2 -  \sum_{l=1}^{k-1}\lambda_l(1 + \alpha(\lambda_k - (\bx_{k}^{(t)})^T\bC\bx_{k}^{(t)}))^2({z}_{k,l}^{(t)})^2 \\
	& + \sum_{l=k+1}^{d}\lambda_l(1 + \alpha(\lambda_l - (\bx_{k}^{(t)})^T\bC\bx_{k}^{(t)}))^2({z}_{k,l}^{(t)})^2 - \sum_{l=k+1}^{d}\lambda_l(1 + \alpha(\lambda_k - (\bx_{k}^{(t)})^T\bC\bx_{k}^{(t)}))^2({z}_{k,l}^{(t)})^2 \\
	&= \sum_{l=1}^{d}\lambda_l(1 + \alpha(\lambda_k - (\bx_{k}^{(t)})^T\bC\bx_{k}^{(t)}))^2({z}_{k,l}^{(t)})^2  + P^{(t)}\\
	&= (1 + \alpha(\lambda_k - (\bx_{k}^{(t)})^T\bC\bx_{k}^{(t)}))^2({\bx}_{k}^{(t)})^T\bC{\bx}_{k}^{(t)} + P^{(t)},
\end{align*}
where
\begin{align*}
	P^{(t)} &= \sum_{l=1}^{k-1}\lambda_l(1 - \alpha(\bx_{k}^{(t)})^T\bC\bx_{k}^{(t)})^2({z}_{k,l}^{(t)})^2 -  \sum_{l=1}^{k-1}\lambda_l(1 + \alpha(\lambda_k - (\bx_{k}^{(t)})^T\bC\bx_{k}^{(t)}))^2({z}_{k,l}^{(t)})^2 \\
	& + \sum_{l=k+1}^{d}\lambda_l(1 + \alpha(\lambda_l - (\bx_{k}^{(t)})^T\bC\bx_{k}^{(t)}))^2({z}_{k,l}^{(t)})^2 - \sum_{l=k+1}^{d}\lambda_l(1 + \alpha(\lambda_k - (\bx_{k}^{(t)})^T\bC\bx_{k}^{(t)}))^2({z}_{k,l}^{(t)})^2.
\end{align*}
Now,
\begin{align*}
	\lambda_k - ({\bx}_{k}^{(t+1)})^T\bC{\bx}_{k}^{(t+1)} &= \lambda_k - (1 + \alpha(\lambda_k - (\bx_{k}^{(t)})^T\bC\bx_{k}^{(t)}))^2({\bx}_{k}^{(t)})^T\bC{\bx}_{k}^{(t)} - P^{(t)}\\
	&= \lambda_k - (1  +\alpha^2(\lambda_k - (\bx_{k}^{(t)})^T\bC\bx_{k}^{(t)})^2 + 2\alpha(\lambda_k - (\bx_{k}^{(t)})^T\bC\bx_{k}^{(t)}))({\bx}_{k}^{(t)})^T\bC{\bx}_{k}^{(t)} - P^{(t)} \\
	&= \lambda_k - ({\bx}_{k}^{(t)})^T\bC{\bx}_{k}^{(t)} - (\alpha^2(\lambda_k - (\bx_{k}^{(t)})^T\bC\bx_{k}^{(t)})^2 +  2\alpha(\lambda_k - (\bx_{k}^{(t)})^T\bC\bx_{k}^{(t)}))({\bx}_{k}^{(t)})^T\bC{\bx}_{k}^{(t)} - P^{(t)} \\
	&= \lambda_k - ({\bx}_{k}^{(t)})^T\bC{\bx}_{k}^{(t)} - (\lambda_k - (\bx_{k}^{(t)})^T\bC\bx_{k}^{(t)})(\alpha^2(\lambda_k - (\bx_{k}^{(t)})^T\bC\bx_{k}^{(t)}) + 2\alpha)({\bx}_{k}^{(t)})^T\bC{\bx}_{k}^{(t)} - P^{(t)} \\
	&= (\lambda_k - ({\bx}_{k}^{(t)})^T\bC{\bx}_{k}^{(t)} )(1-\alpha^2(\lambda_k - (\bx_{k}^{(t)})^T\bC\bx_{k}^{(t)})({\bx}_{k}^{(t)})^T\bC{\bx}_{k}^{(t)} - 2\alpha({\bx}_{k}^{(t)})^T\bC{\bx}_{k}^{(t)})	- P^{(t)}.
\end{align*}
Let us denote $V^{(t)} = |\lambda_k - ({\bx}_{k}^{(t)})^T\bC{\bx}_{k}^{(t)}|$. Then,
\begin{eqnarray*}
	V^{(t+1)} &\leq& V^{(t)}|1-\alpha^2(\lambda_k - (\bx_{k}^{(t)})^T\bC\bx_{k}^{(t)})({\bx}_{k}^{(t)})^T\bC{\bx}_{k}^{(t)} - 2\alpha({\bx}_{k}^{(t)})^T\bC{\bx}_{k}^{(t)}| +|P^{(t)}| \\
	&\leq& V^{(t)}\max \{(1 - \alpha({\bx}_{k}^{(t)})^T\bC{\bx}_{k}^{(t)})^2, \alpha^2\lambda_k({\bx}_{k}^{(t)})^T\bC{\bx}_{k}^{(t)}\} + |P^{(t)}|.
\end{eqnarray*}
Also from~\eqref{eq:rayleigh_upperboundk} and \eqref{eq:rayleigh_lowerboundk}, $	0< \alpha\eta < \alpha({\bx}_{k}^{(t)})^T\bC{\bx}_{k}^{(t)} < 1 \text{ and } \alpha^2\lambda_k({\bx}_{k}^{(t)})^T\bC{\bx}_{k}^{(t)} < \alpha\lambda_k$.
Denote $\delta = \max\{(1-\alpha\eta)^2, \alpha\lambda_k \}$. Since $  \alpha\lambda_k < \alpha\lambda_1 < 1$, hence $0 < \delta < 1$. Thus,
\begin{equation}\label{bound_V}
	V^{(t+1)} \leq \delta V^{(t)}  + |P^{(t)}|.
\end{equation}
Next, we bound $ |P^{(t)}|$ as follows:
\begin{align*}
	|P^{(t)}| &= |\sum_{l=1}^{k-1}\lambda_l((1 - \alpha(\bx_{k}^{(t)})^T\bC\bx_{k}^{(t)})^2 - (1 + \alpha(\lambda_k - (\bx_{k}^{(t)})^T\bC\bx_{k}^{(t)}))^2)({z}_{k,l}^{(t)})^2 \\
	& + \sum_{l=k+1}^{d}\lambda_l((1 + \alpha(\lambda_l - (\bx_{k}^{(t)})^T\bC\bx_{k}^{(t)}))^2 - (1 + \alpha(\lambda_k - (\bx_{k}^{(t)})^T\bC\bx_{k}^{(t)}))^2)({z}_{k,l}^{(t)})^2|\\
	&= |\sum_{l=1}^{k-1}\lambda_l(-\alpha\lambda_k)(2 +\alpha\lambda_k - 2\alpha(\bx_{k}^{(t)})^T\bC\bx_{k}^{(t)})({z}_{k,l}^{(t)})^2 \\
	& + \sum_{l=k+1}^{d}\lambda_l\alpha(\lambda_l-\lambda_k)(2 + \alpha(\lambda_k + \lambda_l) - 2\alpha(\bx_{k}^{(t)})^T\bC\bx_{k}^{(t)})({z}_{k,l}^{(t)})^2|\\
	&\leq \sum_{l=1}^{k-1}\lambda_l|(-\alpha\lambda_k)(2 +\alpha\lambda_k - 2\alpha(\bx_{k}^{(t)})^T\bC\bx_{k}^{(t)})({z}_{k,l}^{(t)})^2| \\
	& + \sum_{l=k+1}^{d}\lambda_l|\alpha(\lambda_l-\lambda_k)(2 + \alpha(\lambda_k + \lambda_l) - 2\alpha(\bx_{k}^{(t)})^T\bC\bx_{k}^{(t)})({z}_{k,l}^{(t)})^2|\\
	&\leq \sum_{l=1}^{k-1}\lambda_l\alpha\lambda_k(2 +\alpha\lambda_k)({z}_{k,l}^{(t)})^2  + \sum_{l=k+1}^{d}\lambda_l\alpha(\lambda_k-\lambda_l)(2 + \alpha(\lambda_k + \lambda_l))({z}_{k,l}^{(t)})^2\\
	&< \sum_{l=1}^{k-1}\lambda_l\alpha\lambda_k(2 +\alpha\lambda_k)({z}_{k,l}^{(t)})^2  + \sum_{l=k+1}^{d}\lambda_l(2\alpha\lambda_k + \alpha^2\lambda_k^2)({z}_{k,l}^{(t)})^2\\
	&< \sum_{l=1}^{k-1}3\lambda_1({z}_{k,l}^{(t)})^2  + \sum_{l=k+1}^{d}3\lambda_1({z}_{k,l}^{(t)})^2, \quad \text{since $\alpha\lambda_k < 1$ and $\lambda_l < \lambda_1$}\\
	&= 3\lambda_1(\sum_{l=1}^{k-1}({z}_{k,l}^{(t)})^2 + \sum_{l=k+1}^{d}({z}_{k,l}^{(t)})^2) < 3\lambda_1(a_1\gamma^{t} + a_2\rho_k^{t}) \quad \text{using Lemma~\ref{lemma:coeff_decay_lower} and ~\ref{lemma:coeff_decay_upper}} \\
	&\leq a_3\gamma_1^t, \quad \text{where} \quad a_3 = \max\{3\lambda_1a_1, 3\lambda_1a_2\} \quad \text{and} \quad \gamma_1 = \max\{\gamma, \rho_k\}.
\end{align*}
So from \eqref{bound_V}, we have $V^{(t+1)} \leq \delta V^{(t)} + a_3\gamma_1^t \leq \delta^{t+1}V^{(0)} + a_3\sum_{r=0}^{t}(\delta\gamma_1^{-1})^r\gamma_1^{t}$.
Since $\gamma_1, \delta < 1$, we have the following two cases:
\begin{enumerate}
	\item $\delta \leq \gamma_1\implies \delta\gamma_1^{-1} \leq 1$. Then, $\sum_{r=0}^{t}(\delta\gamma_1^{-1})^{r}\gamma_1^t \leq \sum_{r=0}^{t}\gamma_1^t = t\gamma_1^t$.
	\item $\delta > \gamma_1 $. Then $\sum_{r=0}^{t}(\delta\gamma_1^{-1})^{r}\gamma_1^t = \gamma_1^t + \delta\gamma_1^{t-1} + \cdots + \delta^t < \delta^t + \cdots + \delta^t = t\delta^t$.
\end{enumerate}
Thus, 
\begin{eqnarray*}
	V^{(t+1)} &\leq& \delta^{t+1} V^{(0)} + ta_3\max\{\delta^t, \gamma_1^t\} \leq ta_4(\delta^{t+1} + \max\{\delta^t, \gamma_1^t\}),
\end{eqnarray*}
where $a_4 = \max\{V^{(0)}, a_3\}$. 
\end{proof}


\section{Statement and Proof of Lemma~\ref{lemma:bounded_mean_deviationk}}\label{app:lemma8}
\begin{lemma}	\label{lemma:bounded_mean_deviationk}
	The deviation of an iterate at a node from the average is bounded from above as
	\begin{equation}
		\|\bx_{i,k}^{(t)} - \bar{\bx}_k^{(t)}\| \leq b_k(\beta^t + \frac{\alpha}{1-\beta}\big), \forall k =1,\ldots,K,
	\end{equation}
	where $\beta$  is the second largest magnitude of the eigenvalues of $\bW$ given as $\beta = \max\{|\lambda_2(\bW)|, |\lambda_M(\bW)|\} < 1$ and $b_k > 0$ is some constant.
\end{lemma}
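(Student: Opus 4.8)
The plan is to lift the per-node recursion~\eqref{eq:dsak} to a single network-wide recursion and then exploit the contractivity of the mixing matrix $\bW$ on the mean-free subspace together with the uniform bound on Sanger's direction from Lemma~\ref{lemma:bounded_sanger}. Collect the node iterates for the $k^{th}$ eigenvector into a matrix $\bcX_k^{(t)}\in\R^{M\times d}$ whose $i^{th}$ row is $(\bx_{i,k}^{(t)})^{\tT}$, and likewise let $\bcD_k^{(t)}\in\R^{M\times d}$ have $i^{th}$ row $(\bcH_i(\bx_{i,k}^{(t)}))^{\tT}$; then~\eqref{eq:dsak} is exactly $\bcX_k^{(t+1)} = \bW\bcX_k^{(t)} + \alpha\,\bcD_k^{(t)}$. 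With $\bJ:=\tfrac{1}{M}\bone\bone^{\tT}$ we have $\bJ\bcX_k^{(t)} = \bone(\bar{\bx}_k^{(t)})^{\tT}$, so the disagreement matrix $\bcE_k^{(t)}:=(\bI-\bJ)\bcX_k^{(t)}$ has rows $(\bx_{i,k}^{(t)}-\bar{\bx}_k^{(t)})^{\tT}$, and in particular $\|\bx_{i,k}^{(t)}-\bar{\bx}_k^{(t)}\|\le\|\bcE_k^{(t)}\|_F$. Since $\bW$ is doubly stochastic, $\bW\bJ=\bJ\bW=\bJ$, hence $(\bW-\bJ)\bJ=\bzero$ and
\begin{equation*}
\bcE_k^{(t+1)} = (\bW-\bJ)\bcX_k^{(t)} + \alpha(\bI-\bJ)\bcD_k^{(t)} = (\bW-\bJ)\bcE_k^{(t)} + \alpha(\bI-\bJ)\bcD_k^{(t)}.
\end{equation*}

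Next I would pass to Frobenius norms. For the symmetric doubly-stochastic $\bW$ used here, $\bW-\bJ$ has the eigenvalues of $\bW$ with the top one ($\lambda_1(\bW)=1$) replaced by $0$, so $\|\bW-\bJ\|_2=\beta<1$ by connectivity, while $\|\bI-\bJ\|_2=1$; submultiplicativity $\|AB\|_F\le\|A\|_2\|B\|_F$ then gives the scalar recursion $\|\bcE_k^{(t+1)}\|_F\le\beta\|\bcE_k^{(t)}\|_F+\alpha\|\bcD_k^{(t)}\|_F$. To bound $\|\bcD_k^{(t)}\|_F$ uniformly in $t$, note that the step-size bound $\alpha\le\tfrac{\min_i w_{ii}}{3\lambda_1(2K-1)}$ assumed throughout is precisely the hypothesis under which Lemma~\ref{lemma:bounded_rayleighk} yields $\|\bx_{i,p}^{(t)}\|<\sqrt{3}$ for all $i,p,t$; this is exactly the precondition of Lemma~\ref{lemma:bounded_sanger}, which then gives $\|\bcH_i(\bx_{i,k}^{(t)})\|^2\le 3\lambda_{i,1}^2(3k-2)(3k+1)\le G_k^2$ for a finite constant $G_k$ depending only on $k$ and the top eigenvalue, so $\|\bcD_k^{(t)}\|_F\le\sqrt{M}\,G_k$.

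Finally I would unroll: with $e^{(t)}:=\|\bcE_k^{(t)}\|_F$ we obtain $e^{(t)}\le\beta^t e^{(0)}+\alpha\sqrt{M}\,G_k\sum_{s=0}^{t-1}\beta^s\le\beta^t e^{(0)}+\tfrac{\alpha\sqrt{M}\,G_k}{1-\beta}$. Since the algorithm initializes $\bx_{i,k}^{(0)}=\bX_{\text{init}}$ at every node, in fact $e^{(0)}=0$; retaining the term regardless, the choice $b_k:=\max\{e^{(0)},\sqrt{M}\,G_k\}$ and the bound $\|\bx_{i,k}^{(t)}-\bar{\bx}_k^{(t)}\|\le e^{(t)}$ give the claim.

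The genuinely delicate point is the contraction step: one must verify that $\bW$ shrinks exactly the component of each iterate orthogonal to $\bone$ by the factor $\beta$ — which is where double stochasticity and $\beta<1$ (connectivity) enter — and, in tandem, that Lemma~\ref{lemma:bounded_rayleighk} (hence Lemma~\ref{lemma:bounded_sanger}) is applicable at \emph{every} iteration so that the forcing term in the recursion is truly $\cO(\alpha)$. Everything else is the standard unrolling of a linear recursion with a constant forcing term; note also that no nested induction on $k$ is needed here, since Lemma~\ref{lemma:bounded_rayleighk} supplies the required norm bounds for all $k$ at once.
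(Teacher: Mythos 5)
Your proposal is correct and follows essentially the same route as the paper: lift the per-node updates to a network-wide linear recursion, use double stochasticity of $\bW$ so that the disagreement component contracts by $\beta=\max\{|\lambda_2(\bW)|,|\lambda_M(\bW)|\}$, bound the forcing term uniformly via Lemma~\ref{lemma:bounded_sanger} (enabled by Lemma~\ref{lemma:bounded_rayleighk}), and sum the geometric series. The only difference is cosmetic — you iterate a one-step disagreement recursion with $\bW-\tfrac{1}{M}\bone\bone^{\tT}$ in matrix form, whereas the paper unrolls the stacked Kronecker-product iterate and uses $\|\bW^s-\tfrac{1}{M}\bone\bone^{\tT}\|=\beta^s$ directly — which yields the same bound with the same constants up to the choice of $b_k$.
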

\begin{proof}
We stack the iterates $\bx_{i,k}^{(t)}$ and $\bcH_i(\bx_{i,k}^{(t)})$ as
\[
\bx_k^{(t) }= \begin{bmatrix}
	\bx_{1,k}^{(t)}\\
	\bx_{2,k}^{(t)}\\
	\vdots\\
	\bx_{M,k}^{(t)}\\
\end{bmatrix} \in \R^{Md}\quad
\bcH(\bx_k^{(t)})  = \begin{bmatrix}
	\bcH_1(\bx_{1,k}^{(t)})\\
	\bcH_2(\bx_{2,k}^{(t)})\\
	\vdots\\
	\bcH_M(\bx_{M,k}^{(t)})\\
\end{bmatrix} \in \R^{Md} \quad
{\bx}_{avg,k}^{(t)} = \begin{bmatrix}
	\bar{\bx}_k^{(t)} \\
	\bar{\bx}_k^{(t)} \\
	\vdots\\
	\bar{\bx}_k^{(t)} \\
\end{bmatrix} \in \R^{Md}.
\]
The next network-wide iterate (as a stacked vector) can then be written as $\bx_k^{(t)} = (\bW\otimes\bI)\bx_k^{(t-1)} + \alpha \bcH(\bx_k^{(t-1)})$,
where $\otimes$ denotes the Kronecker product. The $t^{th}$ iterate can thus be written as
\begin{equation*}
	\bx_k^{(t)} = (\bW^t\otimes\bI)\bx_k^{(0)} + \alpha\sum_{s=0}^{t-1}(\bW^{t-1-s}\otimes\bI)\bcH(\bx_k^{(s)}).
\end{equation*}
Since $\bW = [w_{ij}]$ is a symmetric and doubly stochastic mixing matrix, its largest eigenvalue is $1$ corresponding to the eigenvector $\bone_M$, a column vector of all 1's. It is also the left eigenvector of $\bW$. That is, $\bW\bone_M = \bone_M \text{ and } \bone_M^T \bW = \bone_M^T$.
Also, since the squared norm of Sanger's direction at every node is bounded, it is easy to see $\|\bcH(\bx_k^{(t)})\|^2 = 3M\lambda_{1}^2(3k-2)(3k+1)$.
Now,
\begin{align*}
	&\|\bx_{i,k}^{(t)} - \bar{\bx}_k^{(t)}\| \leq \|\bx_k^{(t)} - {\bx}_{avg,k}^{(t)}  \|  =  \|\bx_k^{(t)}  - \frac{1}{M}((\bone_M\bone_M^{T})\otimes\bI)\bx_{k}^{(t)} \| \\
	&= \| (\bW^t\otimes\bI)\bx_k^{(0)} + \alpha \sum_{s=0}^{t-1}(\bW^{t-1-s}\otimes\bI)\bcH(\bx_k^{(s)}) - \frac{1}{M}((\bone_M\bone_M^{T})\otimes\bI)((\bW^t\otimes\bI) \bx_k^{(0)} + \alpha\sum_{s=0}^{t-1}(\bW^{t-1-s}\otimes\bI)\bcH(\bx_k^{(s)}))\| \\
	&= \| (\bW^t\otimes\bI)\bx_k^{(0)} + \alpha\sum_{s=0}^{t-1}(\bW^{t-1-s}\otimes\bI)\bcH(\bx_k^{(s)}) - \frac{1}{M}((\bone_M\bone_M^{T})\otimes\bI)\bx_k^{(0)} - \alpha\sum_{s=0}^{t-1}(\frac{1}{M}((\bone_M\bone_M^{T})\otimes\bI)\bcH(\bx_k^{(s)}))\| \\
	&= \| ((\bW^t - \frac{1}{M}(\bone_M\bone_M^{T}))\otimes\bI) \bx_k^{(0)} + \alpha\sum_{s=0}^{t-1}((\bW^{t-1-s} - \frac{1}{M}(\bone_M\bone_M^{T}))\otimes\bI)\bcH(\bx_k^{(s)})\|\\
	&\leq \|((\bW^t - \frac{1}{M}(\bone_M\bone_M^{T}))\otimes\bI) \bx_k^{(0)}\| + \alpha\|\sum_{s=0}^{t-1}((\bW^{t-1-s} - \frac{1}{M}(\bone_M\bone_M^{T}))\otimes\bI)\bcH(\bx_k^{(s)})\| \\
	&\leq \|((\bW^t - \frac{1}{M}(\bone_M\bone_M^{T}))\otimes\bI)\|\| \bx_k^{(0)}\| + \alpha\sum_{s=0}^{t-1}\|((\bW^{t-1-s} - \frac{1}{M}(\bone_M\bone_M^{T}))\otimes\bI)\|\|\bcH(\bx_k^{(s)})\| \\
	&= \beta^t\| \bx_k^{(0)}\|  + \alpha\sum_{s=0}^{t-1}\beta^{t-1-s}\|\bcH(\bx_k^{(s)})\| \leq \beta^t\sqrt{3M} + \alpha\sqrt{3M\lambda_{1}^2(3k-2)(3k+1)}\sum_{s=0}^{t-1}\beta^{t-1-s}\\
	&\leq \beta^t\sqrt{3M} + \frac{\alpha\sqrt{3M\lambda_{1}^2(3k-2)(3k+1)}}{1-\beta}\\
	&\leq \sqrt{3M}\lambda_1\sqrt{(3k-2)(3k+1)}\big(\beta^t + \frac{\alpha}{1-\beta}\big) = b_k\big(\beta^t + \frac{\alpha}{1-\beta}\big), \quad \text{where} \quad b_k =  \lambda_1\sqrt{3M}\sqrt{(3k-2)(3k+1)}.
\end{align*}
\end{proof}

\section{Statement and Proof of Lemma~\ref{lemma:bound_hk}}\label{app:lemma10}
\begin{lemma}\label{lemma:bound_hk}
Suppose $\|\bx_{i,k}^{(t)}\|^2 \leq 3$ and $\|\bx_{i,k}^{(t)} - \bar{\bx}_k^{(t)}\| \leq b_k(\beta^t + \frac{\alpha}{1-\beta}\big)$, then the following is true $\forall k = 1,\ldots, K$:
\begin{align}
    \|\bh_k^{(t)}\| \leq 3(k+2)\lambda_1b_k(\beta^t + \frac{\alpha}{1-\beta}).
\end{align}
\end{lemma}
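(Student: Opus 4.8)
The plan is to bound each summand of $\bh_k^{(t)} = \frac{1}{M}\sum_{i=1}^{M}\bigl(\bcH_i(\bx_{i,k}^{(t)}) - \bcH_i(\bar{\bx}_k^{(t)})\bigr)$ \emph{uniformly in} $i$ and then invoke the triangle inequality. Fix $i$ and abbreviate $\bx := \bx_{i,k}^{(t)}$, $\by := \bar{\bx}_k^{(t)}$, $\be := \bx - \by$, $a := \bx^T\bC_i\bx$, $b := \by^T\bC_i\by$. Recalling that $\bcH_i(\bar{\bx}_k^{(t)})$ denotes the local Sanger direction at node $i$ with the $k$th argument set to $\bar{\bx}_k^{(t)}$ but the covariance $\bC_i$ and the lower-order iterates $\bx_{i,p}^{(t)}$, $p<k$, left unchanged, I would expand the difference of the two local Sanger directions and regroup the cubic term through the identity $a\bx - b\by = a\be + (a-b)\by$ to obtain
\begin{equation*}
\bcH_i(\bx_{i,k}^{(t)}) - \bcH_i(\bar{\bx}_k^{(t)}) = (\bC_i - a\bI)\be - (a-b)\by - \sum_{p=1}^{k-1}\bx_{i,p}^{(t)}(\bx_{i,p}^{(t)})^T\bC_i\be.
\end{equation*}

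The second step is to bound the three pieces on the right, using only facts already available: $\|\bC_i\| \le \lambda_1$ (as in Lemma~\ref{lemma:bounded_rayleighk}), the hypothesis $\|\bx_{i,k}^{(t)}\|^2 \le 3$, the bound $\|\bx_{i,p}^{(t)}\|^2 \le 3$ for $p < k$ (Lemma~\ref{lemma:bounded_rayleighk}), and the fact $\|\bar{\bx}_k^{(t)}\| = \|\tfrac{1}{M}\sum_{i=1}^M\bx_{i,k}^{(t)}\| \le \tfrac{1}{M}\sum_{i=1}^M\|\bx_{i,k}^{(t)}\| \le \sqrt{3}$. For the first piece, since $\bC_i$ is symmetric positive semidefinite with spectrum in $[0,\lambda_1]$ and $0 \le a \le \lambda_1\|\bx\|^2 \le 3\lambda_1$, the symmetric matrix $\bC_i - a\bI$ has spectrum in $[-3\lambda_1,\lambda_1]$, so $\|(\bC_i - a\bI)\be\| \le 3\lambda_1\|\be\|$; combining $\bC_i\be$ with the $a\be$ term here, rather than bounding them separately, is precisely what yields the clean constant. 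For the second piece, $a - b = \bx^T\bC_i\be + \be^T\bC_i\by$ gives $|a-b| \le \lambda_1(\|\bx\| + \|\by\|)\|\be\| \le 2\sqrt{3}\lambda_1\|\be\|$, hence $\|(a-b)\by\| \le 6\lambda_1\|\be\|$. For the third, $\|\sum_{p=1}^{k-1}\bx_{i,p}^{(t)}(\bx_{i,p}^{(t)})^T\bC_i\be\| \le \sum_{p=1}^{k-1}\|\bx_{i,p}^{(t)}\|^2\lambda_1\|\be\| \le 3(k-1)\lambda_1\|\be\|$. Adding, $\|\bcH_i(\bx_{i,k}^{(t)}) - \bcH_i(\bar{\bx}_k^{(t)})\| \le \bigl(3 + 6 + 3(k-1)\bigr)\lambda_1\|\be\| = 3(k+2)\lambda_1\|\bx_{i,k}^{(t)} - \bar{\bx}_k^{(t)}\|$.

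The final step is to insert the consensus estimate $\|\bx_{i,k}^{(t)} - \bar{\bx}_k^{(t)}\| \le b_k(\beta^t + \tfrac{\alpha}{1-\beta})$ from Lemma~\ref{lemma:bounded_mean_deviationk}; since the resulting bound $3(k+2)\lambda_1 b_k(\beta^t + \tfrac{\alpha}{1-\beta})$ is independent of $i$, averaging over $i = 1,\ldots,M$ and applying the triangle inequality yields the claim. I expect the only genuinely delicate point to be the algebraic regrouping together with the spectral observation $\|\bC_i - a\bI\| \le 3\lambda_1$, which is what pins down the constant $3(k+2)$ (and in particular recovers the $k=1$ coefficient $9$ used in the proof of Theorem~\ref{theorem:gha1}); the remainder is routine norm bookkeeping recycling the uniform $\sqrt{3}$-boundedness of the iterates and of their average and the bound $\|\bC_i\|\le\lambda_1$ established earlier.
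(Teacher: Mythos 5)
Your proposal is correct and follows essentially the same route as the paper: the identical regrouping $(\bC_i - a\bI)\be - (a-b)\by - \sum_{p<k}\bx_{i,p}^{(t)}(\bx_{i,p}^{(t)})^T\bC_i\be$ (the paper writes $(a-b)$ as $(\bx_{i,k}^{(t)}+\bar{\bx}_k^{(t)})^T\bC_i(\bx_{i,k}^{(t)}-\bar{\bx}_k^{(t)})$), the same spectral bound $\|\bC_i - a\bI\|\le 3\lambda_1$, the same $\sqrt{3}$-bounds giving $3+6+3(k-1)=3(k+2)$, and the same averaging/triangle-inequality step. No gaps.
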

\begin{proof}
We have
\begin{align*}
	&\bcH_i(\bx_{i,k}^{(t)}) - \bcH_i(\bar{\bx}_k^{(t)}) \\
	&= \bC_i(\bx_{i,k}^{(t)}-\bar{\bx}_{k}^{(t)}) - (\bx_{i,k}^{(t)})^T\bC_i\bx_{i,k}^{(t)}\bx_{i,k}^{(t)}+(\bar{\bx}_{k}^{(t)})^T\bC_i\bar{\bx}_{k}^{(t)}\bar{\bx}_{k}^{(t)} -  \sum_{p=1}^{k-1}({\bx}_{i,p}^{(t)}({\bx}_{i,p}^{(t)})^T\bC_i{\bx}_{i,k}^{(t)} - {\bx}_{i,p}^{(t)}({\bx}_{i,p}^{(t)})^T\bC_i\bar{\bx}_{k}^{(t)})\\
	&= (\bC_i -  (\bx_{i,k}^{(t)})^T\bC_i\bx_{i,k}^{(t)}\bI)(\bx_{i,k}^{(t)}-\bar{\bx}_{k}^{(t)})  - ((\bx_{i,k}^{(t)} + \bar{\bx}_{k}^{(t)})^T\bC_i(\bx_{i,k}^{(t)} - \bar{\bx}_{k}^{(t)}))\bar{\bx}_{k}^{(t)}
 - \sum_{p=1}^{k-1}{\bx}_{i,p}^{(t)}({\bx}_{i,p}^{(t)})^T\bC_i(\bx_{i,k}^{(t)} - \bar{\bx}_k^{(t)})
\end{align*}
\begin{align*}
	&\|\bcH_i(\bx_{i,k}^{(t)}) - \bcH_i(\bar{\bx}_k^{(t)})\| \\
	&\leq \|\bC_i -  (\bx_{i,k}^{(t)})^T\bC_i\bx_{i,k}^{(t)}\bI\|\|\bx_{i,k}^{(t)}-\bar{\bx}_{k}^{(t)}\| +|(\bx_{i,k}^{(t)} + \bar{\bx}_{k}^{(t)})^T\bC_i(\bx_{i,k}^{(t)} - \bar{\bx}_{k}^{(t)})|\|\bar{\bx}_{k}^{(t)}\|  + \sum_{p=1}^{k-1}\|{\bx}_{i,p}^{(t)}({\bx}_{i,p}^{(t)})^T\bC_i(\bx_{i,k}^{(t)} - \bar{\bx}_k^{(t)})\|\\
	&\leq \|\bC_i -  (\bx_{i,k}^{(t)})^T\bC_i\bx_{i,k}^{(t)}\bI\|\|\bx_{i,k}^{(t)}-\bar{\bx}_{k}^{(t)}\| +\|\bx_{i,k}^{(t)} + \bar{\bx}_{k}^{(t)}\|\|\bC_i\|\|\bx_{i,k}^{(t)} - \bar{\bx}_{k}^{(t)}\|\|\bar{\bx}_{k}^{(t)}\| + 
\sum_{p=1}^{k-1}\|{\bx}_{i,p}^{(t)}({\bx}_{i,p}^{(t)})^T\bC_i\|\|\bx_{i,k}^{(t)} - \bar{\bx}_k^{(t)}\|\\
	&\leq  \|\bC_i -  (\bx_{i,k}^{(t)})^T\bC_i\bx_{i,k}^{(t)}\bI\|\|\bx_{i,k}^{(t)}-\bar{\bx}_{k}^{(t)}\| +\|\bar{\bx}_{k}^{(t)}\|(\|\bx_{i,k}^{(t)}\| + \|\bar{\bx}_{k}^{(t)}\|)\|\bC_i\|\|\bx_{i,k}^{(t)} - \bar{\bx}_{k}^{(t)}\| +
 \sum_{p=1}^{k-1}\|{\bx}_{i,p}^{(t)}({\bx}_{i,p}^{(t)})^T\|\|\bC_i\|\|\bx_{i,k}^{(t)} - \bar{\bx}_k^{(t)}\|\\
	&\leq 3\lambda_1\|\bx_{i,k}^{(t)}-\bar{\bx}_{k}^{(t)}\| +  \sqrt{3}(2\sqrt{3})\lambda_{1}\|\bx_{i,k}^{(t)}-\bar{\bx}_{k}^{(t)}\| + \sum_{p=1}^{k-1}3\lambda_{1}\|\bx_{i,k}^{(t)} - \bar{\bx}_k^{(t)}\|\\
	&= 3(k+2)\lambda_1\|\bx_{i,k}^{(t)}-\bar{\bx}_{k}^{(t)}\|  \leq 3(k+2)\lambda_1b_k(\beta^t + \frac{\alpha}{1-\beta}).
\end{align*}
Thus,
\begin{align}\label{eq:norm_hk}
	\|\bh_k^{(t)}\| &\leq \frac{1}{M}\sum_{i=1}^{M}\|\bcH_i(\bx_{i,k}^{(t)}) - \bcH_i(\bar{\bx}_k^{(t)})\| \leq 3(k+2)\lambda_1b_k(\beta^t + \frac{\alpha}{1-\beta}).
\end{align} 
\end{proof}


\end{appendices}


\end{document}